\theoremstyle{plain}
\newtheorem{thm}{\protect\theoremname}
\theoremstyle{plain}
\newtheorem{assumption}{\protect\assumptionname}
\theoremstyle{plain}
\newtheorem{prop}{\protect\propositionname}
\theoremstyle{plain}
\newtheorem{lem}{\protect\lemmaname}
\providecommand{\assumptionname}{Assumption}
\providecommand{\lemmaname}{Lemma}
\providecommand{\propositionname}{Proposition}
\providecommand{\theoremname}{Theorem}
\newcommand{\+}[1]{\mathbf{#1}}
\def\R{\mathbb{R}}
\title{Matrix factorisation and the interpretation of geodesic distance}
\author{%
  Nick Whiteley \\
  University of Bristol\\
  \texttt{nick.whiteley@bristol.ac.uk}
  \And
  Annie Gray \\
  University of Bristol\\
  \texttt{annie.gray@bristol.ac.uk}
  \And
  Patrick Rubin-Delanchy\\
  University of Bristol\\
  \texttt{patrick.rubin-delanchy@bristol.ac.uk}
}
\begin{document}

\maketitle

\begin{abstract} 
    Given a graph or similarity matrix, we consider the problem of recovering a notion of true distance between the nodes, and so their true positions. We show that this can be accomplished in two steps: matrix factorisation, followed by nonlinear dimension reduction. This combination is effective because the point cloud obtained in the first step lives close to a manifold in which latent distance is encoded as geodesic distance. Hence, a nonlinear dimension reduction tool, approximating geodesic distance, can recover the latent positions, up to a simple transformation. We give a detailed account of the case where spectral embedding is used, followed by Isomap, and provide encouraging experimental evidence for other combinations of techniques.
\end{abstract}

\section{Introduction} \label{sec:introduction}
Assume we observe, or are given as the result of a computational procedure, data in the form of a symmetric matrix $\+A\in \R^{n \times n}$ which we relate to unobserved vectors $Z_1, \ldots, Z_n \in \mathcal{Z} \subset \R^d$, where $d\ll n$, by
\begin{equation}
\+A^{ij} = f(Z_i, Z_j) + \+E^{ij},\quad 1\leq i \leq j \leq n,\label{eq:model}
\end{equation}
for some real-valued symmetric function $f$ which will be called a kernel, and a matrix of unobserved perturbations, $\+E \in \R^{n \times n}$. As illustrative examples, $\+A$ could be the observed adjacency matrix of a graph with $n$ vertices, a similarity matrix associated with $n$ items, or some matrix-valued estimator of covariance or correlation between $n$ variates. 

Broadly stated, our goal is to recover $Z_1, \ldots, Z_n$ given $\+A$, up to identifiability constraints. We seek a method to perform this recovery which can be implemented \emph{in practice} without any knowledge of $f$, nor of any explicit statistical model for $\+A$, and to which we can plug in various matrix-types, be it adjacency, similarity, covariance, or other.

At face value, this may seem rather a lot to ask. Without assumptions, neither  $f$, the $Z_i$'s, nor even $d$ are identifiable \cite{janson2021can}. However, under quite general assumptions, we will show that a practical solution is provided by the key combination of two procedures: i) matrix factorisation of $\+A$, such as spectral embedding, followed by ii) nonlinear dimension reduction, such as Isomap \cite{tenenbaum2000global}. 

This combination is effective because of the following facts, only the first of which is already known. The matrix factorisation step approximates high-dimensional images of the $Z_i$'s, living near a $d$-dimensional manifold \cite{rubin2020manifold}. Under regularity and non-degeneracy assumptions on the kernel, which for our analysis is taken to be positive definite, geodesic distance in this manifold equals Euclidean geodesic distance on $\mathcal{Z}$, up to simple transformations of the coordinates, for example scaling. Thus a dimension reduction technique which approximates in-manifold geodesic distances can be expected to recover the $Z_i$'s, up to a simple transformation. If those assumptions fail, as they must with real data, we may still find that those approximate geodesic distances are useful because they reflect a geometry implicit in the kernel. 

There are many works which address recovery of the $Z_i$'s when one \emph{does} consider a particular matrix-type, an explicit statistical model, and/or specific form for $f$. When $\+A$ is an adjacency matrix and $f(Z_i,Z_j)$ is the probability of an edge between nodes $i$ and $j$, the construct \eqref{eq:model} is a latent position model \citep{hoff2002latent} and various scenarios have been studied: $\mathcal{Z}$ is a sphere and $f(x,y)$ is a function of $\langle x,y\rangle$ \citep{NEURIPS2019_c4414e53};  estimation by graph distance when $f(x,y)$ is a function of $\|x-y\|$ \citep{10.1214/21-EJS1801}; spectral estimation when $f(x,y)$ is a possibly indefinite inner-product \citep{sussman2012consistent,tang2016limit,athreya2017statistical,rubin2017statistical}; inference under a logistic model via MCMC \citep{hoff2002latent} or variational approximation \citep{salter2013variational}. The case where $\mathcal{Z}$ is a discrete set of points corresponds to the very widely studied stochastic block model \citep{holland1983stochastic}, see \citep{abbe2017community,lei2015consistency,rohe2011spectral,sussman2012consistent,amini2013pseudo,bickel2013asymptotic} and references therein; by contrast the methods we propose are directed at the case where  $\mathcal{Z}$ is continuous. The opposite problem of estimating $f$, with $Z_i$ unknown but assumed uniform on $[0,1]$, is known as graphon estimation \cite{gao2015rate}.

Concerning the case when $\+A$ is a covariance matrix, Latent Variable Gaussian Process models \citep{lawrence2003gaussian,lawrence2005probabilistic,titsias2010bayesian} use $f(Z_i, Z_j)$ to define the population covariance between the $i$th and $j$th variates under a hierarchical model. When $f(x,y)=\langle x,y\rangle$, this reduces to a latent variable optimisation counterpart of Probabilistic PCA \citep{tipping1999probabilistic} and the maximum likelihood estimate of the $Z_i$'s is obtained from the eigendecomposition of the empirical covariance matrix, which in our setting could be $\+A$. When $f$ is nonlinear, but fixed e.g. to a Radial Basis Function kernel, gradient and variational techniques are available \citep{lawrence2003gaussian,lawrence2005probabilistic,titsias2010bayesian}. See the same references for onward connections to kernel PCA \citep{scholkopf1998nonlinear}.

{\bf Our contribution and precursors.} The general practice of using dimensionality reduction and geodesic distance to extract latent features from data is common in data science and machine-learning. Examples arise in genomics \citep{margaryan2020population}, neuroscience \citep{yamin2019comparison}, speech analysis \citep{karam10_interspeech,hasan2017word} and cyber-security \citep{anglade:hal-02316303}. What sets our work apart from these contributions is that we establish a new rigorous basis for this practice.

It has been understood for several years that the high-dimensional embedding obtained by matrix factorisation can be related via a feature map to the $Z_i$ of model \eqref{eq:model} \citep{kallenberg1989representation,bollobas2007phase,lovasz2012large,tang2013universally,xu2018rates,lei2018network}. However, it was only recently observed that the embedding must therefore concentrate about a low dimensional set, in the Hausdorff sense \cite{rubin2020manifold}. 
Our key mathematical contribution is to describe the topology and geometry of this set, proving it is a topological manifold and establishing how in-manifold geodesic distance is related to geodesic distance in $\mathcal{Z}$. Riemannian geometry underlying kernels was sketched in \citep{amari1999improving} but without consideration of geodesic distances or rigorous proofs, and not in the context of latent position estimation.   The work \citep{athreya2018estimation,trosset2020learning} was an inspiration to us, suggesting Isomap as a tool for analysing spectral embeddings, under a latent structure model in which $\mathcal{Z}$ is a one-dimensional curve in $\R^d$ and $f$ is the inner product. A key feature of our problem setup is that $f$ is unknown, in which case the manifold is not available in closed form and typically lives in an infinite-dimensional space. To our knowledge, we are the first to show why spectral embedding followed by Isomap might recover the true $Z_i$'s, or a useful transformation thereof, in general. 

We complement our mathematical results with experimental evidence, obtained from both simulated and real data, suggesting that alternative combinations of matrix-factorisation and dimension-reduction techniques work too. For the former, we consider the popular node2vec \citep{grover2016node2vec} algorithm, a likelihood-based approach for graphs said to perform matrix factorisation implicitly \cite{qiu2018network,zhang2021consistency}. For the latter, we consider the popular t-SNE \cite{maaten2008visualizing} and UMAP \cite{mcinnes2018umap} algorithms. As predicted by the theory, a direct low-dimensional matrix factorisation, whether using spectral embedding or node2vec, is less successful.

\section{Proposed methods and their rationale}
\begin{figure}
\includegraphics[width=\textwidth]{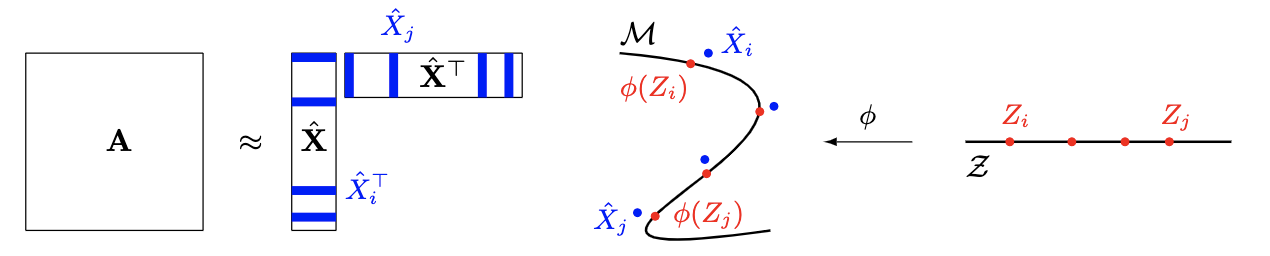}
  \caption{Illustration of theory in the case $d=1$. Our analysis reveals how geodesic distance, along $\mathcal{M}$, between $\phi(Z_i)$ and $\phi(Z_j)$, is related to geodesic distance, along $\mathcal{Z}$, between $Z_i$ and $Z_j$.} \label{fig:illustration}
\end{figure}

\subsection{Spectral embedding, as estimating $\phi(Z_i)$} \label{sec:spec_embed}
In our mathematical setup (precise details come later) the kernel $f$ will be nonnegative definite and $\phi$ will be the associated Mercer feature map. It is well known that each point $\phi(Z_i)$ then lives in an infinite-dimensional Hilbert space, which will be denoted $\ell_2$, and the inner-product $\langle \phi(Z_i),\phi(Z_j)\rangle_2$ in this space equals  $f(Z_i,Z_j)$.

{\bf The spectral embedding procedure.} For $p \leq n$, we define the $p$-dimensional spectral embedding of $\+A$ to be  $\hat{ \+X}=[\hat X_1, \ldots, \hat X_n]^\top= \hat{\+U} |\hat{\+S}|^{1/2} \in \R^{n \times p}$, where $|\hat{\+S}| \in \R^{p \times p}$ is a diagonal matrix containing the absolute values of the $p$ largest eigenvalues of $\+A$, by magnitude, and $\hat{\+U }\in \R^{n \times p}$ is a matrix containing corresponding orthonormal eigenvectors, in the same order. The R packages irlba and RSpectra provide fast solutions which can exploit sparse inputs.

One should think of $\hat X_i$ as approximating the vector of first $p$ components of $\phi(Z_i)$, denoted $\phi_p(Z_i)$, up to orthogonal transformation, and this can be formalised to a greater or lesser extent depending on what assumptions are made. There are several situations, e.g. $f$ any polynomial \cite{rubin2020manifold}, the cosine kernel used in Section~\ref{sec:simulation}, the degree-corrected \cite{karrer2011stochastic} or mixed-membership \cite{airoldi2008mixed} stochastic block model, in which only the first $p_0$ (say) components of $\phi(\cdot)$ are nonzero, where typically $p_0 \geq d$. If, after $n$ reaches $p_0$, we embed into $p=p_0$ dimensions, then with $\|\cdot\|$ denoting the Euclidean norm, we have \cite{gallagher2019spectral}:
\begin{equation}
\max_{i \in \{1,\ldots,n\}} \lVert \+Q \hat X_i - \phi_p(Z_i) \rVert = O_{\mathbb{P}}\left(\tfrac{(\log n)^{c}}{n^{1/2}}\right),\label{eq:consistency_simple}
\end{equation}
for a universal constant  $c\geq 1$, orthogonal matrix  $\+Q$, under regularity assumptions on the $Z_i$'s, $f$ and $\+E$ (that $Z_i$ are i.i.d., $f(Z_i, Z_j)$ has finite expectation, and the perturbations $\+E^{ij}$ are independent and centered with exponential tails). This encompasses the case where $\+A$ is binary, for example a graph adjacency matrix \cite{lyzinski2017community,rubin2017statistical}. Similar results are available in the cases where $\+A$ is a Laplacian \cite{rubin2017statistical,modell2021spectral} or covariance matrix \cite{two_to_infinity}. The methods of this paper are based, in practice, on the distances $\lVert \hat X_i - \hat X_j \rVert$, which are invariant to orthogonal transformation and so for the purposes of validating $\lVert \hat X_i - \hat X_j \rVert\approx\lVert\phi_p(Z_i)-\phi_p(Z_j)\rVert$  the presence of $\+Q$ in \eqref{eq:consistency_simple} is immaterial. 

For $\hat X_i$ to converge to $\phi(Z_i)$ more generally, we must let its dimension $p$ grow with $n$ and, at least given the present state of literature, accept weaker consistency results, for example, convergence in Wasserstein distance between $\+Q \hat X_1, \ldots \+Q\hat X_n$ and $\phi_p(Z_1), \ldots, \phi_p(Z_n)$ \citep{lei2018network}. Uniform consistency results, in the style of \eqref{eq:consistency_simple}, are also available for indefinite \cite{rubin2017statistical}, bipartite, and directed graphs \cite{jones2020multilayer}. These are left for future work because of the complications of $\+Q$ no longer being orthogonal.

{\bf Rank selection.} In real data, where $n$ is typically fixed, there is no `best' way of selecting $p$, as discussed for example in \cite{priebe2019two}. The method of \citep{zhu2006automatic}, based on profile-likelihood, provides a popular, practical choice, taking as input the spectrum of $\+A$, and is implemented in the R package igraph.

\subsection{Isomap, as estimating $Z_i$} \label{sec:isomap}
We propose to recover $Z_1, \ldots, Z_n$ through the following procedure.
 \begin{algorithm}[H]
 \caption{Isomap procedure}\label{alg:spec}
\begin{algorithmic}[1]
  \Statex \textbf{input} $p$-dimensional points $\hat X_1, \ldots, \hat X_n$
  \State Compute the neighbourhood graph of radius $\epsilon$: a weighted graph connecting $i$ and $j$, with weight $\|\hat X_i - \hat X_j\|$, if $\|\hat X_i - \hat X_j\| \leq \epsilon$
  \State Compute the matrix of shortest paths on the neighbourhood graph, $\hat{\+D}_{\mathcal{M}} \in \R^{n \times n}$
  \State Apply classical multidimensional scaling (CMDS) to $\hat{\+D}_{\mathcal{M}}$ into $\R^d$
  \Statex \Return $d$-dimensional points $\hat Z_1, \ldots, \hat Z_n$
\end{algorithmic}
 \end{algorithm}
 
Theorem~\ref{thm:generic} below establishes, under general assumptions, that $\mathcal{M} \coloneqq \phi(\mathcal{Z})$ is a topological manifold of Hausdorff dimension exactly that of $\mathcal{Z}$ (as opposed to an upper bound \cite{rubin2020manifold}). It also proposes a `change of metric', often trivial, under which a path on $\mathcal{Z}$ and its image on $\mathcal{M}$ have the same length. 

This result explains how $\hat Z_1, \ldots, \hat Z_n$ can estimate $Z_1, \dots, Z_n$. First, think of a path from $\hat X_i$ to $\hat X_j$ on the neighbourhood graph as a noisy, discrete version of some corresponding continuous path from $\phi(Z_i)$ to $\phi(Z_j)$ on $\mathcal{M}$. The length of the first (the sum of the weights of the edges) is approximately equal to the length of the second if measured in the standard metric: an infinitesmal step from $x$ to $x + \mathrm{d}x$ has length $\smash{\langle \mathrm{d}x,\mathrm{d}x\rangle_2^{1/2}}$. 
 By inversion of $\phi$ we can trace a third path, taking us from $Z_i$ to $Z_j$ on $\mathcal{Z}$. But to make its length agree with the first two, we must pick a non-standard metric: an infinitesmal step from $z$ to $z + \mathrm{d}z$ must be regarded to have length $\langle \mathrm{d}z,\+H_{z} \mathrm{d}z\rangle^{1/2} = \sqrt{\mathrm{d}z^\top \+H_{z} \mathrm{d}z}$, where
 \[\+H_{z}^{ij}\coloneqq\left.\dfrac{\partial^{2}f}{\partial x^{(i)}\partial y^{(j)}}\right|_{(z,z)}, \quad i,j \in \{1, \ldots, d\}.\]
 The exciting news for practical purposes is that, through elementary calculus, we might establish that $\+H_{z}$ is constant in $z$ (e.g. if $f$ is translation-invariant) or even proportional to the identity (e.g. if $f$ is just a function of Euclidean distance). In the latter case, if $\mathcal{Z}$ is convex, it is not too difficult to see that the length of the \emph{shortest} path, from $\phi(Z_i)$ to $\phi(Z_j)$ on $\mathcal{M}$, must be proportional to the Euclidean distance between $Z_i$ and $Z_j$. Thus, the matrix of shortest paths obtained by Isomap (Step 2) approximates the matrix of Euclidean distances between the $Z_i$ (up to scaling), from which the $Z_i$ themselves can be recovered (Step 3), up to scaling, rotation, and translation. Of course, we have taken a few liberties in this argument, such as assuming $\phi$ to be invertible and $\mathrm{d}z^\top \+H_{z} \mathrm{d}z$ to be positive. We will address these rigorously in the next section.
 
{\bf Dimension and radius selection.} For visualisation in our examples we will pick $d=1$ or $d=2$. For other applications, $d$ can be estimated as the approximate rank of $\hat{\+D}_{\mathcal{M}}$ after double-centering \cite{torgerson1952multidimensional}, e.g. via \citep{zhu2006automatic}, again. In practice, we suggest picking $\epsilon$ just large enough for the neighbourhood graph to be connected or, if the data have outliers, a fixed quantile (such as 5\%) of $\hat{\+D}_{\mathcal{M}}$, removing nodes outside the largest connected component. The same recommendations apply if the $k$-nearest neighbour graph is used instead of the $\epsilon$-neighbourhood graph.


\section{Theory}\label{sec:theory}

\subsection{Setup and assumptions}\label{sec:setup}
The usual inner-product and Euclidean norm on $\mathbb{R}^{d}$ are
denoted $\left\langle \cdot,\cdot\right\rangle $ and $\|\cdot\|$.
$\ell_{2}$ is the set of $x=[x^{(1)}\,x^{(2)}\,\cdots]^{\top}\in\mathbb{R}^{\mathbb{N}}$
such that $\|x\|_{2}\coloneqq(\sum_{k=1}^{\infty}|x^{(k)}|^{2})^{1/2}<\infty$.
For $x,y\in\ell_{2}$ , $\left\langle x,y\right\rangle _{2}\coloneqq\sum_{k=1}^{\infty}x^{(k)}y^{(k)}$. With $d\geq1$, let $\mathcal{Z}$ be a compact subset of $\mathbb{R}^{d}$,
and let $\widetilde{\mathcal{Z}}$ be a closed ball in $\mathbb{R}^{d}$,
centered at the origin, such that $\mathcal{Z}\subset\widetilde{\mathcal{Z}}$.
Let $f:\widetilde{\mathcal{Z}}\times\widetilde{\mathcal{Z}}\to\mathbb{R}$
be a symmetric, continuous, nonnegative-definite function.

By Mercer's Theorem, e.g., \cite[Thm 4.49]{steinwart2008support},
there exist nonnegative real numbers $(\lambda_{k})_{k\geq1}$ and
functions $(u_{k})_{k\geq1}$, with each $u_{k}:\widetilde{\mathcal{Z}}\to\mathbb{R}$, which are orthonormal with respect to the inner-product $(u_j,u_k)\mapsto\int_{\tilde{\mathcal{Z}}}u_j(x)u_k(x)\mathrm{d}x$

and such that 
\begin{equation}
f(x,y)=\sum_{k=1}^{\infty}\lambda_{k}u_{k}(x)u_{k}(y),\quad x,y\in\widetilde{\mathcal{Z}},\label{eq:mercer}
\end{equation}
where the convergence is absolute and uniform. For $x\in\mathcal{Z}$ let $\phi(x)\coloneqq[\lambda_{1}^{1/2}u_{1}(x)\;\lambda_{2}^{1/2}u_{2}(x)\;\cdots]^{\top}\in\mathbb{R}^{\mathbb{N}}$.
The image of $\mathcal{Z}$ by $\phi$ is denoted $\mathcal{M}$.
Observe from (\ref{eq:mercer}) that for any $x,y\in\mathcal{Z}$,
$f(x,y)=\left\langle \phi(x),\phi(y)\right\rangle _{2}$ , $\|\phi(x)\|_{2}^{2}=f(x,x)$,
and the latter is finite for any $x\in\mathcal{Z}$ since $f$ is continuous and $\mathcal{Z}$ is compact, hence $\mathcal{M}\subset\ell_{2}$.

The following definitions are standard in metric geometry \cite{burago2001course}. For any $a,b\in\ell_{2},$ a \emph{path} in $\mathcal{M}$ with end-points
$a,b$ is a continuous function $\gamma:[0,1]\to\mathcal{M}$ such
that $\gamma_{0}=a$ and $\gamma_{1}=b$. With $n\geq1$, a non-decreasing
sequence $t_{0},\ldots,t_{n}$ such that $t_{0}=0$ and $t_{n}=1$,
is called a \emph{partition.} Given a path $\gamma$ and a partition
$\mathcal{P}=(t_{0},\ldots,t_{n})$, define $\chi(\gamma,\mathcal{P})\coloneqq\sum_{k=1}^{n}\|\gamma_{t_{k}}-\gamma_{t_{k-1}}\|_{2}$.
The \emph{length} of $\gamma$ (w.r.t. $\|\cdot\|_{2}$) is $l(\gamma)\coloneqq\sup_{\mathcal{P}}\chi(\gamma,\mathcal{P})$,
where the supremum is over all possible partitions. The geodesic distance in $\mathcal{M}$ between $a$ and $b$ is defined to be the infimum of $l(\gamma)$ over all paths $\gamma$ in $\mathcal{M}$ with end-points $a,b$.  $\+D_{\mathcal{M}}^{ij}$ denotes geodesic distance in $\mathcal{M}$ between $\phi(Z_i)$ and $\phi(Z_j)$.
Similarly a path $\eta$ in $\mathcal{Z}$ with end-points $x,y$
is a continuous function $\eta:[0,1]\to\mathcal{Z}$ such that $\eta_{0}=x,\eta_{1}=y$,
and with $\chi(\eta,\mathcal{P})\coloneqq\sum_{k=1}^{n}\|\eta_{t_{k}}-\eta_{t_{k-1}}\|$
the length of $\eta$ (w.r.t. $\|\cdot\|$) is $l(\eta)\coloneqq\sup_{\mathcal{P}}\chi(\eta,\mathcal{P})$. $\+D_{\mathcal{Z}}^{ij}$ denotes geodesic distance in $\mathcal{Z}$ between $Z_i$ and $Z_j$. If $\mathcal{Z}$ is convex, $\+D_{\mathcal{Z}}^{ij}=\|Z_i-Z_j\|$.

\begin{assumption}
\label{assu:injective}For all $x,y\in\mathcal{Z}$ with $x\neq y$,
there exists $a\in\mathcal{Z}$ such that $f(x,a)\neq f(y,a)$.
\end{assumption}

\begin{assumption}
\label{assu:c2_etc}f is $C^{2}$ on $\widetilde{\mathcal{Z}}$ and
for every $z\in\mathcal{Z}$, the matrix $\+H_{z}$ is positive definite. 
\end{assumption}
Assumption~\ref{assu:injective} is used to show $\phi$ is injective. Assumption~\ref{assu:c2_etc} has various implications, loosely speaking it ensures a non-degenerate relationship between path-length in $\mathcal{M}$ and path-length in $\mathcal{Z}$. Concerning our general setup, if $f$ is not required to be nonnegative definite, but is still symmetric, then a representation formula like \eqref{eq:mercer} is available under e.g. trace-class assumptions \citep{rubin2020manifold}.   It is of interest to generalise our results to that scenario but technical complications are involved, so we leave it for future research.

\subsection{Path-lengths in $\mathcal{M}$ and in $\mathcal{Z}$}
\begin{thm}
\label{thm:generic}Let assumptions \ref{assu:injective} and \ref{assu:c2_etc}
hold. Then $\phi$ is a bi-Lipschitz homeomorphism between $\mathcal{Z}$
and $\mathcal{M}$. Let $a,b$ be any two points in $\mathcal{M}$ such
that there exists a path in $\mathcal{M}$ with end-points $a,b$
of finite length. Let $\gamma$ be any such path and define $\eta:[0,1]\to\mathcal{Z}$
by $\eta_{t}\coloneqq\phi^{-1}(\gamma_{t})$. Then $\eta$ is a path
in $\mathcal{Z}$ with $l(\eta)<\infty$. For any $\epsilon>0$ there
exists a partition $\mathcal{P}_{\epsilon}$ such that for any partition
$\mathcal{P}=(t_{0},\ldots,t_{n})$ satisfying $\mathcal{P}_{\epsilon}\subseteq\mathcal{P}$,
\begin{equation}
\left|l(\gamma)-\sum_{k=1}^{n}\left\langle \eta_{t_{k}}-\eta_{t_{k-1}},\+H_{\eta_{t_{k-1}}}(\eta_{t_{k}}-\eta_{t_{k-1}})\right\rangle ^{1/2}\right|\leq\epsilon.\label{eq:path_length}
\end{equation}
If $\gamma$ and $\eta$ are continuously differentiable, the following
two equalities hold:
\begin{equation}
l(\gamma)=\int_{0}^{1}\|\dot{\gamma}_{t}\|_{2}\mathrm{d}t=\int_{0}^{1}\left\langle \dot{\eta}_{t},\+H_{\eta_{t}}\dot{\eta}_{t}\right\rangle ^{1/2}\mathrm{d}t.\label{eq:riemannian_path_length}
\end{equation}
\end{thm}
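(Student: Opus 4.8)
The plan is to work throughout with the scalar function $g(x,y)\coloneqq f(x,x)-2f(x,y)+f(y,y)$, which by \eqref{eq:mercer} equals $\|\phi(x)-\phi(y)\|_2^2$, so that everything in the statement can be expressed through $f$ and its derivatives. Injectivity of $\phi$ is immediate from Assumption~\ref{assu:injective}: if $\phi(x)=\phi(y)$ then $f(x,a)=\langle\phi(x),\phi(a)\rangle_2=\langle\phi(y),\phi(a)\rangle_2=f(y,a)$ for all $a\in\mathcal{Z}$, forcing $x=y$. Next I would record the second-order behaviour of $g$ near the diagonal. Writing $F(h)\coloneqq g(x,x+h)$ and using that $f$ is $C^2$ and symmetric, one checks $F(0)=0$, $\nabla F(0)=0$ (the first-order terms cancel by symmetry of $f$), and — the one genuinely computational step — $\nabla^2 F(0)=2\+H_x$, the five second-derivative terms collapsing by symmetry of $f$ (which in particular gives $\+H_x=\+H_x^\top$). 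Taylor's theorem with integral remainder then gives, for $x,y\in\mathcal{Z}$ (the segment between them lies in $\widetilde{\mathcal{Z}}$ by convexity, where $f$ is $C^2$),
\[
g(x,y)=(y-x)^\top\left[\int_0^1 (1-t)\,\nabla_y^2 g\big(x,\,x+t(y-x)\big)\,\mathrm{d}t\right](y-x),
\]
and since $\nabla_y^2 g$ is uniformly continuous on the compact set $\widetilde{\mathcal{Z}}\times\widetilde{\mathcal{Z}}$ with $\nabla_y^2 g(x,x)=2\+H_x$, for every $\delta>0$ there is $\rho>0$ with $\bigl|g(x,y)-(y-x)^\top\+H_x(y-x)\bigr|\le\delta\|y-x\|^2$ whenever $\|x-y\|\le\rho$.

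For the bi-Lipschitz claim, the upper bound $\|\phi(x)-\phi(y)\|_2\le C\|x-y\|$ follows at once from the displayed formula because $\nabla_y^2 g$ is bounded on $\widetilde{\mathcal{Z}}\times\widetilde{\mathcal{Z}}$; this also makes $\phi$ continuous. The lower bound $\|\phi(x)-\phi(y)\|_2\ge c\|x-y\|$ is the crux and the step I expect to be the main obstacle, because it is global rather than local. I would argue by contradiction: if no such $c$ exists, take $x_n\ne y_n$ in $\mathcal{Z}$ with $g(x_n,y_n)/\|x_n-y_n\|^2\to0$ and pass to convergent subsequences $x_n\to x^\ast$, $y_n\to y^\ast$. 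If $x^\ast\ne y^\ast$, continuity of $g$ forces $g(x^\ast,y^\ast)=0$, i.e. $\phi(x^\ast)=\phi(y^\ast)$, contradicting injectivity. If $x^\ast=y^\ast$, then $\|x_n-y_n\|\to0$ and the local expansion above gives $g(x_n,y_n)/\|x_n-y_n\|^2\ge\hat h_n^\top\+H_{x_n}\hat h_n-\delta_n$ with $\hat h_n$ a unit vector and $\delta_n\to0$; since $z\mapsto\+H_z$ is continuous and positive definite on the compact $\mathcal{Z}$ (Assumption~\ref{assu:c2_etc}), its smallest eigenvalue is bounded below by some $c_0>0$, so the ratio stays $\ge c_0-\delta_n$, again a contradiction. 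A two-sided Lipschitz bound makes $\phi$ a homeomorphism onto $\mathcal{M}$.

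Given the lower bound, the remaining claims are routine. For any partition $\mathcal{P}$, $\chi(\eta,\mathcal{P})\le c^{-1}\chi(\gamma,\mathcal{P})\le c^{-1}l(\gamma)$, so $\eta\coloneqq\phi^{-1}\circ\gamma$ is a path in $\mathcal{Z}$ with $l(\eta)\le c^{-1}l(\gamma)<\infty$. For \eqref{eq:path_length} I would combine two facts. (i) Since $l(\gamma)<\infty$ and $\chi(\gamma,\cdot)$ is non-decreasing under refinement (triangle inequality), for any $\epsilon>0$ there is a partition $\mathcal{P}_\epsilon^{(1)}$ with $\chi(\gamma,\mathcal{P})\in(l(\gamma)-\epsilon/2,\,l(\gamma)]$ for all $\mathcal{P}\supseteq\mathcal{P}_\epsilon^{(1)}$. (ii) Using $\gamma_t=\phi(\eta_t)$, so $\|\gamma_{t_k}-\gamma_{t_{k-1}}\|_2=\sqrt{g(\eta_{t_{k-1}},\eta_{t_k})}$, the elementary inequality $|\sqrt a-\sqrt b|\le\sqrt{|a-b|}$, and the uniform local bound with $\delta$ chosen so that $\sqrt\delta\,(l(\eta)+1)\le\epsilon/2$, one gets $\bigl|\chi(\gamma,\mathcal{P})-\sum_k\langle\eta_{t_k}-\eta_{t_{k-1}},\+H_{\eta_{t_{k-1}}}(\eta_{t_k}-\eta_{t_{k-1}})\rangle^{1/2}\bigr|\le\sqrt\delta\,\chi(\eta,\mathcal{P})\le\epsilon/2$, provided every increment $\eta_{t_k}-\eta_{t_{k-1}}$ has norm $\le\rho(\delta)$; by uniform continuity of $\eta$ this holds for all refinements of a sufficiently fine partition $\mathcal{P}_\epsilon^{(2)}$. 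Taking $\mathcal{P}_\epsilon=\mathcal{P}_\epsilon^{(1)}\cup\mathcal{P}_\epsilon^{(2)}$ and adding the two estimates yields \eqref{eq:path_length}.

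Finally, for \eqref{eq:riemannian_path_length}, the identity $l(\gamma)=\int_0^1\|\dot\gamma_t\|_2\,\mathrm{d}t$ for a $C^1$ curve in the Hilbert space $\ell_2$ is standard (bound increments by $\int_{t_{k-1}}^{t_k}\dot\gamma_s\,\mathrm{d}s$ and use uniform continuity of $\dot\gamma$; cf.\ \cite{burago2001course}). For the second equality it suffices to show $\|\dot\gamma_t\|_2^2=\langle\dot\eta_t,\+H_{\eta_t}\dot\eta_t\rangle$ pointwise and integrate: since $\|\gamma_{t+s}-\gamma_t\|_2^2=g(\eta_t,\eta_{t+s})$ and $\eta$ is $C^1$, the local expansion of $g$ gives $g(\eta_t,\eta_{t+s})=(\eta_{t+s}-\eta_t)^\top\+H_{\eta_t}(\eta_{t+s}-\eta_t)+o(\|\eta_{t+s}-\eta_t\|^2)=s^2\langle\dot\eta_t,\+H_{\eta_t}\dot\eta_t\rangle+o(s^2)$, while differentiability of $\gamma$ gives $\|\gamma_{t+s}-\gamma_t\|_2^2=s^2\|\dot\gamma_t\|_2^2+o(s^2)$; comparing and letting $s\to0$ finishes the proof.
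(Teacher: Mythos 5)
Your proof is correct and mirrors the paper's overall architecture (expand $\|\phi(x)-\phi(y)\|_2^2$ in terms of second derivatives of $f$ near the diagonal, derive bi-Lipschitz, then use the expansion term-by-term in the partition sums), but it departs from the paper's route at three notable points. First, you obtain the second-order expansion by Taylor's theorem with integral remainder applied to $F(h)=g(x,x+h)$, whereas the paper's Lemma~\ref{lem:MVT} uses the scalar mean value theorem applied to $u\mapsto f(u,x)-f(u,y)$, yielding a hybrid Hessian $\widetilde{\+H}_{z,z'}$ evaluated at an intermediate $z$; your version is somewhat cleaner since it only involves the Hessian in the second argument, but both give the same uniform bound via Heine--Cantor on $\widetilde{\mathcal{Z}}\times\widetilde{\mathcal{Z}}$. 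Second, for the lower Lipschitz bound you argue by compactness and contradiction — extracting convergent subsequences and splitting into the $x^\ast\ne y^\ast$ case (contradicting injectivity) and the $x^\ast=y^\ast$ case (contradicting the uniform spectral lower bound on $\+H_z$); the paper's Proposition~\ref{prop:lipschitz} is instead constructive, using the local expansion for $\|x-y\|\le\delta$ and the crude bound $\sup\|\phi^{-1}(a)-\phi^{-1}(b)\|/\delta$ otherwise. Your argument is shorter, the paper's gives explicit constants, and both require the same ingredients. Third, to control the per-edge error in the partition sums you use the elementary inequality $|\sqrt{a}-\sqrt{b}|\le\sqrt{|a-b|}$ for $a,b\ge0$, which is strictly simpler than the paper's Lemma~\ref{lem:ab_lower_bound} and dispenses with the side condition $|b|\le a$ that the paper needs to extract from $a_k\ge0$ and $a_k+b_k\ge0$. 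Finally, for the second equality in~\eqref{eq:riemannian_path_length} you prove the pointwise identity $\|\dot\gamma_t\|_2^2=\langle\dot\eta_t,\+H_{\eta_t}\dot\eta_t\rangle$ and integrate, while the paper passes to the limit in~\eqref{eq:path_length} along a refining sequence of partitions; both are standard. No gaps: the convexity of $\widetilde{\mathcal{Z}}$ is used correctly to justify the Taylor remainder, uniform continuity of $\eta$ on $[0,1]$ justifies the fineness of $\mathcal{P}_\epsilon^{(2)}$, and the factor bookkeeping in $\nabla^2F(0)=2\+H_x$ together with $\int_0^1(1-t)\,\mathrm{d}t=\tfrac12$ is consistent.
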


The proof is in the appendix. In the terminology of differential geometry, the collection
of inner-products $(x,y)\mapsto\left\langle x,\+H_{z}y\right\rangle $,
indexed by $z\in\mathcal{Z}$, constitute a \emph{Riemannian metric
}on $\mathcal{Z}$, see e.g. \cite[Ch.1 and p.121]{petersen2006riemannian} for background,
and the right hand side of (\ref{eq:riemannian_path_length}) is
the length of the path $\eta$ in $\mathcal{Z}$ with respect to this
Riemannian metric. The theorem tells us that finding geodesic distance in $\mathcal{M}$,
i.e. minimising $l(\gamma)$ with respect to $\gamma$ for fixed end-points
say $a=\phi(Z_i),b=\phi(Z_j)$, is equivalent to minimizing path-length
in $\mathcal{Z}$ between end-points $\phi^{-1}(a)=Z_i,\phi^{-1}(b)=Z_j$ under the Riemannian metric.
In section \ref{sec:shortest_paths} we will show how, for various classes of kernels,
this minimisation can be performed in closed form leading to explicit relationships between $\+D_\mathcal{M}^{ij}$ and $\+D_{\mathcal{Z}}^{ij}$. In Section~\ref{sec:shortest_paths} we focus on \eqref{eq:riemannian_path_length} rather than \eqref{eq:path_length} only for ease of exposition, it is shown in the appendix that exactly the same conclusions can be derived directly from \eqref{eq:path_length}.   To avoid repetition Assumption~\ref{assu:injective} will be taken to hold throughout section \ref{sec:shortest_paths} without further mention.

\subsection{Geodesic distance in $\mathcal{M}$ and in $\mathcal{Z}$}\label{sec:shortest_paths}
{\bf Translation invariant kernels.} Suppose $\mathcal{Z}\subset{\mathbb{R}^d}$ is compact and convex, and $f(x,y)=g(x-y)$ where $g:\mathbb{R}^{d}\to\mathbb{R}$
is $C^{2}$. In this situation $\+H_{z}$ is constant in $z$, and
equal to the Hessian of $-g$ evaluated at the origin. The positive-definite
part of Assumption~\ref{assu:c2_etc} is therefore satisfied if and
only if $g$ has a local maximum at the origin. We now use Theorem \ref{thm:generic} to find geodesic distance in $\mathcal{M}$ between generic  points $a,b\in\mathcal{M}$
for this class of translation-invariant kernels. To do this we obtain
a lower bound on $l(\gamma)$ over all paths $\gamma$ in $\mathcal{M}$ with end-points
$a,b$, then show there exists such a path whose length achieves this lower
bound.

Let $\+G=\+V\+\Lambda^{1/2}$ where $\+V\+\Lambda \+V^{\top}$ is the eigendecomposition
of the Hessian of $-g$ evaluated at the origin, so $\+H_{z}=\+G\+G^{\top}$
for all $z$. Then from (\ref{eq:riemannian_path_length}), for a generic path $\gamma$ in $\mathcal{M}$ with end-points $a,b$,
\begin{align}
l(\gamma) & =\int_{0}^{1}\left\langle \dot{\eta}_{t},\+H_{\eta_{t}}\dot{\eta}_{t}\right\rangle ^{1/2}\mathrm{d}t=\int_{0}^{1}\left\langle \dot{\eta}_{t},\+G\+G^{\top}\dot{\eta}_{t}\right\rangle ^{1/2}\mathrm{d}t=\int_{0}^{1}\|\+G^{\top}\dot{\eta}_{t}\|\mathrm{d}t\label{eq:translation-invariant}\\
 & \geq\left\Vert \int_{0}^{1}\+G^{\top}\dot{\eta}_{t}\mathrm{d}t\right\Vert =\left\Vert \+G^{\top}\int_{0}^{1}\dot{\eta}_{t}\mathrm{d}t\right\Vert =\|\+G^{\top}(\eta_{1}-\eta_{0})\|=\|\+G^{\top}[\phi^{-1}(b)-\phi^{-1}(a)]\|,\label{eq:translation_invariant_lower_bound}
\end{align}
where the inequality is due to the triangle inequality for the $\|\cdot\|$
norm. The right-most term in (\ref{eq:translation_invariant_lower_bound})
is independent of $\gamma$,
other than through the end-points $a,b.$ To see there exists a path whose length equals this lower bound, take
\begin{equation}
\tilde{\eta}_{t}\coloneqq\phi^{-1}(a)+t[\phi^{-1}(b)-\phi^{-1}(a)],\quad t\in[0,1],\label{eq:straight-line}
\end{equation}
which is well-defined as a path in $\mathcal{Z}$, since for this class of kernels $\mathcal{Z}$ is assumed convex. With $\tilde{\gamma}_{t}\coloneqq\phi(\tilde{\eta}_{t})$,
$\tilde{\gamma}$ is clearly a path in $\mathcal{M}$. Differentiating $\tilde{\eta}_{t}$ w.r.t.
$t$ and substituting into (\ref{eq:translation-invariant}) shows $l(\tilde{\gamma})=\|\+G^{\top}[\phi^{-1}(b)-\phi^{-1}(a)]\|$, as required. The following proposition summarises our conclusions in the case that $a=\phi(Z_i)$ and $b=\phi(Z_j)$, for any $Z_i,Z_j$.
\begin{prop}\label{prop:translation_invariant}
If $\mathcal{Z}$ is compact and convex, and $f(x,y)=g(x-y)$ where $g:\mathbb{R}^{d}\to\mathbb{R}$
is $C^{2}$ and has a local maximum at the origin, then $\+D_\mathcal{M}^{ij}$ is equal to Euclidean
distance between $Z_i$ and $Z_j$,
up to their linear transformation by $\+G^{\top}$. In the particular case where $g(x-y)=h(\|x-y\|^2)$ with $h^\prime(0)<0$, we have  $\+D_\mathcal{M}^{ij}\propto \+D_{\mathcal{Z}}^{ij} = \|Z_i-Z_j\|$.
\end{prop}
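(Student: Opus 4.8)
The plan is to obtain the first assertion almost directly from the chain of relations \eqref{eq:translation-invariant}--\eqref{eq:straight-line} displayed just before the statement, after attending to one point of rigour, and then to derive the second assertion from an elementary Hessian computation that simultaneously checks the hypotheses of the first. Those displays already record that, for a translation-invariant $C^{2}$ kernel with $g$ having a local maximum at the origin, $\+H_{z}\equiv\+G\+G^{\top}$ is constant in $z$, that every continuously differentiable path $\gamma$ in $\mathcal{M}$ with end-points $\phi(Z_{i}),\phi(Z_{j})$ obeys $l(\gamma)\geq\|\+G^{\top}(Z_{i}-Z_{j})\|$, and that the image $\tilde\gamma=\phi\circ\tilde\eta$ of the straight segment \eqref{eq:straight-line} attains this value. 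The one genuine gap is that $\+D_{\mathcal{M}}^{ij}$ is an infimum of $l(\gamma)$ over \emph{all} continuous paths, not just continuously differentiable (or even rectifiable) ones, so \eqref{eq:riemannian_path_length} is not directly available for an arbitrary competitor $\gamma$.

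First I would close that gap using the partition estimate \eqref{eq:path_length} of Theorem~\ref{thm:generic}, which holds for every finite-length path. Since $\phi$ is Lipschitz, $\|\tilde\gamma_{t_{k}}-\tilde\gamma_{t_{k-1}}\|_{2}\leq L(t_{k}-t_{k-1})\|Z_{i}-Z_{j}\|$ for any partition, so $l(\tilde\gamma)\leq L\|Z_{i}-Z_{j}\|<\infty$; in particular the infimum defining $\+D_{\mathcal{M}}^{ij}$ is over a non-empty set and is finite. For the lower bound, let $\gamma$ be any finite-length path with these end-points and set $\eta=\phi^{-1}\circ\gamma$; because $\+H_{z}\equiv\+G\+G^{\top}$, each summand in \eqref{eq:path_length} equals $\|\+G^{\top}(\eta_{t_{k}}-\eta_{t_{k-1}})\|$, and the triangle inequality followed by telescoping gives $\sum_{k}\|\+G^{\top}(\eta_{t_{k}}-\eta_{t_{k-1}})\|\geq\|\+G^{\top}(\eta_{1}-\eta_{0})\|=\|\+G^{\top}(Z_{i}-Z_{j})\|$, so \eqref{eq:path_length} yields $l(\gamma)\geq\|\+G^{\top}(Z_{i}-Z_{j})\|-\epsilon$ for every $\epsilon>0$, hence $l(\gamma)\geq\|\+G^{\top}(Z_{i}-Z_{j})\|$. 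Applying \eqref{eq:path_length} to $\tilde\gamma$ itself gives the matching upper bound, since $\tilde\eta_{t_{k}}-\tilde\eta_{t_{k-1}}=(t_{k}-t_{k-1})(Z_{j}-Z_{i})$ has non-negative coefficients and so every partition sum equals $\|\+G^{\top}(Z_{i}-Z_{j})\|$ exactly, forcing $l(\tilde\gamma)=\|\+G^{\top}(Z_{i}-Z_{j})\|$. Combining, $\+D_{\mathcal{M}}^{ij}=\|\+G^{\top}Z_{i}-\+G^{\top}Z_{j}\|$. (As remarked just before the statement, the local-maximum hypothesis on $g$ together with $f\in C^{2}$ is what renders $\+H_{z}$, the Hessian of $-g$ at the origin, positive definite, so Assumption~\ref{assu:c2_etc} holds and Theorem~\ref{thm:generic} is applicable.)

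For the second assertion I would specialise to $g(w)=h(\|w\|^{2})$ and differentiate twice: $\partial_{i}g(w)=2w^{(i)}h'(\|w\|^{2})$ and $\partial_{j}\partial_{i}g(w)=2\delta_{ij}h'(\|w\|^{2})+4w^{(i)}w^{(j)}h''(\|w\|^{2})$, so the Hessian of $g$ at the origin is $2h'(0)\+I$ and hence $\+H_{z}=-2h'(0)\+I$ for every $z$. Since $h'(0)<0$ this is positive definite, so all hypotheses of the first assertion are in force (in particular $g$ has a strict local maximum at the origin), and $\langle v,\+H_{z}v\rangle^{1/2}=(-2h'(0))^{1/2}\|v\|$; equivalently one may take $\+G=\+G^{\top}=(-2h'(0))^{1/2}\+I$. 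The first assertion then gives $\+D_{\mathcal{M}}^{ij}=(-2h'(0))^{1/2}\|Z_{i}-Z_{j}\|\propto\|Z_{i}-Z_{j}\|$, and since $\mathcal{Z}$ is convex, $\+D_{\mathcal{Z}}^{ij}=\|Z_{i}-Z_{j}\|$, so $\+D_{\mathcal{M}}^{ij}\propto\+D_{\mathcal{Z}}^{ij}$, as required.

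There is no deep obstacle once Theorem~\ref{thm:generic} is granted; the substance lives in that theorem. The only step requiring care is the passage from the smooth-path computation \eqref{eq:translation-invariant}--\eqref{eq:translation_invariant_lower_bound} to a bound valid over the full infimum defining geodesic distance: this must be routed through the partition estimate \eqref{eq:path_length} rather than through \eqref{eq:riemannian_path_length}, exploiting crucially that $\+H_{z}$ is constant, so that each partition term is an honest Euclidean norm and the triangle inequality telescopes cleanly.
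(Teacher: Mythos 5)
Your proposal is correct and follows essentially the same route as the paper: the inline lower-bound/achievability argument in Section~\ref{sec:shortest_paths} for the linear-transformation claim, rigorously routed through the partition estimate~\eqref{eq:path_length} exactly as the paper does in Lemma~\ref{lem:trans_inv} of the appendix (constant $\+H_z=\+G\+G^\top$ turns each summand into a Euclidean norm, triangle inequality telescopes, the straight segment attains the bound). Your explicit Hessian computation for $g(w)=h(\|w\|^2)$ giving $\+G=(-2h'(0))^{1/2}\+I$ is a small detail the paper leaves implicit, but it is the intended calculation.
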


{\bf Inner-product kernels.} Suppose $\mathcal{Z}=\{x\in\mathbb{R}^{d}:\|x\|=1\}$ and 
$f(x,y)=g(\left\langle x,y\right\rangle )$, where $g:\mathbb{R}\to\mathbb{R}$
is $C^{2}$ and such that $g^{\prime}(1)>0$. In this case $\+H_{z}=g^{\prime}(1)\+I + g^{\prime\prime}(1)zz^{\top}$,
and by a result of \cite{kar2012random}, any kernel of the form $f(x,y)=g(\left\langle x,y\right\rangle )$
on the sphere is positive-definite iif $g(x)=\sum_{n=0}^{\infty}a_{n}x^{n}$
for some nonnegative $(a_{n})_{n\geq0}$, implying $g^{\prime\prime}(1)\geq0$.
Assumption~\ref{assu:c2_etc} then holds. To derive the geodesic distance in $\mathcal{M}$,
first write out (\ref{eq:riemannian_path_length}) for a generic path $\gamma$ in $\mathcal{M}$ with end-points $a,b$:
\begin{equation}
l(\gamma)=\int_{0}^{1}\left\langle \dot{\eta}_{t},\+H_{\eta_{t}}\dot{\eta}_{t}\right\rangle ^{1/2}\mathrm{d}t=\int_{0}^{1}\left(g^{\prime}(1)\|\dot{\eta}_{t}\|^{2}+g^{\prime\prime}(1)\left|\left\langle \eta_{t},\dot{\eta}_{t}\right\rangle \right|^{2}\right)^{1/2}\mathrm{d}t.\label{eq:inner_prod_path_length}
\end{equation}
Since $\mathcal{Z}$ is a radius-$1$ sphere centered at the origin we must have $\|\eta_{t}\|=1$ for all $t$,
so $0=\frac{1}{2}\frac{\mathrm{d}}{\mathrm{d}t}\|\eta_{t}\|^{2}=\frac{1}{2}\frac{\mathrm{d}}{\mathrm{d}t}\left\langle \eta_{t},\eta_{t}\right\rangle =\left\langle \eta_{t},\dot{\eta}_{t}\right\rangle $.
Therefore the r.h.s. of (\ref{eq:inner_prod_path_length}) is in fact
equal to $g^{\prime}(1)^{1/2}\int_{0}^{1}\|\dot{\eta}_{t}\|\mathrm{d}t$.
This is minimised over all possible paths $\eta$ in $\mathcal{Z}$ with end-points
$\phi^{-1}(a),\phi^{-1}(b)$
when $\eta$ is a shortest (with respect to Euclidean distance) circular arc in $\mathcal{Z}$, in
which case $\int_{0}^{1}\|\dot{\eta}_{t}\|\mathrm{d}t=\arccos\left\langle \phi^{-1}(a),\phi^{-1}(b)\right\rangle $.
Thus we have:
\begin{prop}\label{prop:inner_prod_kernels}
If $\mathcal{Z}=\{x\in\mathbb{R}^{d}:\|x\|=1\}$ and $f(x,y)=g(\left\langle x,y\right\rangle )$ where $g:\mathbb{R}\to\mathbb{R}$ is
$C^{2}$ and such that $g^{\prime}(1)>0$, then $\+D_\mathcal{M}^{ij}=g^{\prime}(1)^{1/2}\+D_{\mathcal{Z}}^{ij}=g^{\prime}(1)^{1/2}\arccos\left\langle Z_i,Z_j\right\rangle$.
\end{prop}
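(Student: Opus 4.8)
The plan is to deduce the result from Theorem~\ref{thm:generic}, so the first step is to verify Assumption~\ref{assu:c2_etc} for $f(x,y)=g(\langle x,y\rangle)$ on the unit sphere. Using the definition of $\+H_z$ together with the mixed second partials of $\langle x,y\rangle$, one computes $\+H_z = g'(1)\+I + g''(1)zz^\top$ for $\|z\|=1$, whose eigenvalues are $g'(1)$ (multiplicity $d-1$, on $z^\perp$) and $g'(1)+g''(1)$ (in the direction $z$). By hypothesis $g'(1)>0$, and since $f$ is nonnegative-definite the characterisation of \cite{kar2012random} forces $g(t)=\sum_{n\ge 0}a_n t^n$ with $a_n\ge 0$ on $[-1,1]$, whence $g''(1)=\sum_{n\ge 2}n(n-1)a_n\ge 0$; so $\+H_z$ is positive definite for every $z\in\mathcal{Z}$. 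Together with Assumption~\ref{assu:injective} (in force throughout Section~\ref{sec:shortest_paths}), Theorem~\ref{thm:generic} now applies: $\phi$ is a bi-Lipschitz homeomorphism $\mathcal{Z}\to\mathcal{M}$, so $\phi^{-1}(\phi(Z_i))=Z_i$, and $\gamma\mapsto\eta\coloneqq\phi^{-1}\circ\gamma$ is a bijection, preserving finiteness of length, between paths in $\mathcal{M}$ and paths in $\mathcal{Z}$ with corresponding endpoints.

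Next I would specialise the length identity. For a $C^1$ path, substituting $\+H_{\eta_t}=g'(1)\+I+g''(1)\eta_t\eta_t^\top$ into \eqref{eq:riemannian_path_length} gives \eqref{eq:inner_prod_path_length}. The crucial simplification is the spherical constraint $\eta_t\in\mathcal{Z}$: since $\|\eta_t\|^2\equiv 1$, differentiation yields $\langle\eta_t,\dot\eta_t\rangle\equiv 0$, which kills the $g''(1)$ term, so $l(\gamma)=g'(1)^{1/2}\int_0^1\|\dot\eta_t\|\,\mathrm{d}t=g'(1)^{1/2}\,l(\eta)$. For a general finite-length path, where $\eta$ need not be differentiable, I would run the argument instead through \eqref{eq:path_length}, using that on the unit sphere $2\langle\eta_{t_{k-1}},\eta_{t_k}-\eta_{t_{k-1}}\rangle=\|\eta_{t_k}\|^2-\|\eta_{t_{k-1}}\|^2-\|\eta_{t_k}-\eta_{t_{k-1}}\|^2=-\|\eta_{t_k}-\eta_{t_{k-1}}\|^2$, so that with $\Delta_k=\eta_{t_k}-\eta_{t_{k-1}}$ the partition sum in \eqref{eq:path_length} equals $\sum_k(g'(1)\|\Delta_k\|^2+\tfrac14 g''(1)\|\Delta_k\|^4)^{1/2}$. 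Sandwiching via $\sqrt a\le\sqrt{a+b}\le\sqrt a+\sqrt b$ and noting $\sum_k\|\Delta_k\|^2\le(\max_k\|\Delta_k\|)\,l(\eta)\to 0$ while $\sum_k\|\Delta_k\|\to l(\eta)$ under refinement, one again obtains $l(\gamma)=g'(1)^{1/2}l(\eta)$ with no smoothness assumed; this is the computation referred to as being in the appendix.

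Finally I would take infima. Because $\gamma\mapsto\eta$ is a bijection between paths in $\mathcal{M}$ from $\phi(Z_i)$ to $\phi(Z_j)$ and paths in $\mathcal{Z}$ from $Z_i$ to $Z_j$, and $l(\gamma)=g'(1)^{1/2}l(\eta)$ along it, minimising over $\gamma$ gives $\+D_\mathcal{M}^{ij}=g'(1)^{1/2}\inf_\eta l(\eta)=g'(1)^{1/2}\+D_\mathcal{Z}^{ij}$. It then remains to identify $\+D_\mathcal{Z}^{ij}$, the intrinsic (Euclidean-length) metric of the unit sphere in $\mathbb{R}^d$: this is the classical fact that, for $d\ge 2$, the shortest curve between $Z_i$ and $Z_j$ is the minor great-circle arc, of length $\arccos\langle Z_i,Z_j\rangle$ — one exhibits that arc and bounds any competitor from below, e.g.\ by projecting onto the great circle through the two points (for $d=1$ the sphere is a pair of antipodal points and the statement is vacuous unless $Z_i=Z_j$). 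The main obstacle is the rigorous passage through the non-differentiable regime — showing that the identity $l(\gamma)=g'(1)^{1/2}l(\eta)$ survives without smoothness, and that the resulting infimum really is the great-circle distance rather than merely bounded by it — which is precisely where the partition estimate \eqref{eq:path_length} of Theorem~\ref{thm:generic}, combined with the spherical identity above, does the work.
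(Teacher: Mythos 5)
Your proposal is correct, and its skeleton matches the paper's: verify Assumption~\ref{assu:c2_etc} by computing $\+H_z = g'(1)\+I + g''(1)zz^\top$ and invoking the Kar--Karnick characterisation to get $g''(1)\ge0$; invoke Theorem~\ref{thm:generic}; exploit the spherical constraint to eliminate the $g''(1)$ term; identify the infimum with the great-circle distance $\arccos\langle Z_i,Z_j\rangle$. One detail in your passage through \eqref{eq:path_length}, however, is a genuinely different (and cleaner) move than the paper's. In Lemma~\ref{lem:inner_prod}, the paper only establishes the one-sided bound $l(\gamma)\ge g'(1)^{1/2}l(\eta)$ for a general finite-length path --- dropping the $g''(1)$ term by nonnegativity --- and then separately constructs the minimiser $\tilde\eta$ as a circular arc and checks equality there using $\langle\tilde\eta_t,\dot{\tilde\eta}_t\rangle=0$. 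You instead use the exact spherical identity $\langle\eta_{t_{k-1}},\Delta_k\rangle=-\tfrac12\|\Delta_k\|^2$, which turns each term $a_k$ of the partition sum into $g'(1)\|\Delta_k\|^2 + \tfrac14 g''(1)\|\Delta_k\|^4$; sandwiching with Lemma~\ref{lem:ab_lower_bound} (your $\sqrt a\le\sqrt{a+b}\le\sqrt a+\sqrt b$) and noting $\sum_k\|\Delta_k\|^2\le\bigl(\max_k\|\Delta_k\|\bigr)l(\eta)\to0$ under refinement gives the stronger statement $l(\gamma)=g'(1)^{1/2}l(\eta)$ for \emph{every} finite-length path, not just the minimiser. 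That renders the separate ``achievability'' step essentially automatic: the bijection $\gamma\leftrightarrow\eta$ between finite-length paths immediately yields $\+D_{\mathcal{M}}^{ij}=g'(1)^{1/2}\+D_{\mathcal{Z}}^{ij}$ by taking infima on both sides, whereas the paper's route requires exhibiting the arc and verifying its length separately. Both arguments are sound; yours buys a stronger intermediate identity and unifies the smooth and non-smooth cases.
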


{\bf Additive kernels.}
Suppose $\mathcal{Z}$ is the Cartesian product of intervals 
$\mathcal{Z}_{i}=[c_i^-,c_i^+]$, $i=1,\ldots,d$ and $f(x,y)=\sum_{i=1}^{d}\alpha_{i}f_{i}(x^{(i)},y^{(i)})$,
$x=[x^{(1)}\,\cdots\,x^{(d)}]^{\top}$, where each $\alpha_{i}>0$ and each $f_{i}$
is positive-definite and $C^{2}$. For this class of kernels $\+H_{z}$
is diagonal with $\+H_{z}^{ii}=\alpha_{i}\left.\frac{\partial^{2}f_{i}}{\partial x^{(i)}\partial y^{(i)}}\right|_{(z^{(i)},z^{(i)})}$ where $z^{(i)}$
is the $i$th element of the vector $z$. The positive-definite part of Assumption~\ref{assu:c2_etc} thus holds if these diagonal elements are strictly positive for all $z\in\mathcal{Z}$. 

Let us introduce the vector of monotonically increasing transformations: \begin{equation}\quad\psi(z)\coloneqq[\psi_1(z^{(1)})\,\cdots\,\psi_d(z^{(d)})]^{\top},\qquad\; \psi_i(z^{(i)})\coloneqq\alpha_i^{1/2} \int_{c_i^-}^{z^{(i)}}\left.\frac{\partial^{2}f_i}{\partial x^{(i)} \partial y^{(i)}} \right|_{(\xi,\xi)}^{1/2}\mathrm{d}\xi,\label{eq:psi}
\end{equation}
and write the vector of corresponding inverse transformations $\psi^{-1}$. For a generic path $\gamma$ in $\mathcal{M}$ with end-points $a,b$ and as usual $\eta_t\coloneqq \phi^{-1}(\gamma_t)$, define $\zeta_t\coloneqq \psi(\eta_t)$. Then:
\begin{align}
l(\gamma)=\int_{0}^{1}\left\langle \dot{\eta}_{t},\+H_{\eta_{t}}\dot{\eta}_{t}\right\rangle ^{1/2}\mathrm{d}t& = \int_0^1\|\dot{\zeta}_t\|\mathrm{d}t\label{eq:additive_l_gam}\\
&\geq  \left\|\int_0^1\dot{\zeta}_t\,\mathrm{d}t\right\|
=\|\zeta_1-\zeta_0\|=\|\psi\circ\phi^{-1}(b)-\psi\circ\phi^{-1}(a)\|,\label{eq:additive_path_length_lower_bound}
\end{align}
where the second and last equalities hold due to the definition of $\zeta_t$, and $\circ$ denotes elementwise composition. The quantity $\|\psi\circ\phi^{-1}(b)-\psi\circ\phi^{-1}(a)\|$ is thus a lower bound on path-length $l(\gamma)$ for any path $\gamma$ in $\mathcal{M}$ with end-points $a,b$. To see that there exists a path whose length achieves this lower bound, and hence that it is the geodesic distance in $\mathcal{M}$ between $a,b$, define $\tilde{\gamma}_t\coloneqq \phi\circ\psi^{-1}(\tilde{\zeta}_t)$ where $\tilde{\zeta}_{t}\coloneqq\psi\circ \phi^{-1}(a) + t[\psi\circ\phi^{-1}(b)-\psi\circ\phi^{-1}(a)]$. Differentiating $\tilde{\zeta}_{t}$ w.r.t. $t$ and substituting into  \eqref{eq:additive_l_gam} yields $l(\tilde{\gamma})=\|\psi\circ\phi^{-1}(b)-\psi\circ\phi^{-1}(a)\|$, as required.  Our conclusions in the case that $a=\phi(Z_i)$ and $b=\phi(Z_j)$ are summarised by:
\begin{prop}\label{prop:additive_kernels}
If $\mathcal{Z}$ is the Cartesian product of $d$ intervals $\mathcal{Z}_i\subset{\mathbb{R}}$, $i=1,\ldots,d$, and $f(x,y)=\sum_{i=1}^{d}\alpha_{i}f_{i}(x^{(i)},y^{(i)})$ where for each $i$, $\alpha_i>0$, $f_i$ is positive definite and $C^2$, and
$ \left.\frac{\partial^{2}f_{i}}{\partial x^{(i)}\partial y^{(i)}}\right|_{(z,z)}>0$ for all $z\in\mathcal{Z}_i$, then $\+D_\mathcal{M}^{ij}$ is equal to the Euclidean distance between $Z_i$ and $Z_j$ up to their coordinatewise-monotone transformation by $\psi$. If $ \left.\alpha_i\frac{\partial^{2}f_{i}}{\partial x^{(i)}\partial y^{(i)}}\right|_{(z,z)}$ is a positive constant over all $z\in\mathcal{Z}_i$, and over all $i=1,\ldots,d$, then $\+D_\mathcal{M}^{ij}\propto \+D_\mathcal{Z}^{ij}=\|Z_i-Z_j\|$.
\end{prop}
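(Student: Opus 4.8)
The plan is to read the result off Theorem~\ref{thm:generic} via the change of variables $\psi$; indeed equations~\eqref{eq:additive_l_gam}--\eqref{eq:additive_path_length_lower_bound} already contain the core computation, so the proof proper is mainly a matter of checking hypotheses and patching two small gaps. First I would verify Assumption~\ref{assu:c2_etc}: additivity of $f$ makes $\+H_z$ diagonal with $\+H_z^{ii}=\alpha_i\left.\partial^2 f_i/\partial x^{(i)}\partial y^{(i)}\right|_{(z^{(i)},z^{(i)})}$, which is continuous in $z$ and, by hypothesis, strictly positive, so $\+H_z$ is positive definite, and $f$ is $C^2$ since each $f_i$ is (Assumption~\ref{assu:injective} being in force throughout the section). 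Theorem~\ref{thm:generic} then gives that $\phi$ is a bi-Lipschitz homeomorphism $\mathcal{Z}\to\mathcal{M}$ and that \eqref{eq:riemannian_path_length} holds for $C^1$ paths.

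Next I would record the structure of $\psi$ from \eqref{eq:psi}. Since $\psi_i'(\xi)^2=\alpha_i\left.\partial^2 f_i/\partial x^{(i)}\partial y^{(i)}\right|_{(\xi,\xi)}>0$, each $\psi_i$ is a strictly increasing $C^1$ bijection of the interval $\mathcal{Z}_i$ onto an interval, so $\psi$ is a coordinatewise-monotone homeomorphism of the box $\mathcal{Z}$ onto the box $\psi(\mathcal{Z})$, which is in particular \emph{convex} — this is the only place the product-of-intervals hypothesis is used. For a $C^1$ path $\gamma$ in $\mathcal{M}$ with endpoints $a,b$, writing $\eta_t=\phi^{-1}(\gamma_t)$ and $\zeta_t=\psi(\eta_t)$, the chain rule gives $\dot\zeta_t^{(i)}=\psi_i'(\eta_t^{(i)})\dot\eta_t^{(i)}$, whence $\|\dot\zeta_t\|^2=\sum_{i}\+H_{\eta_t}^{ii}(\dot\eta_t^{(i)})^2=\langle\dot\eta_t,\+H_{\eta_t}\dot\eta_t\rangle$ by diagonality; this is exactly \eqref{eq:additive_l_gam}, and the triangle inequality for $\R^d$-valued integrals yields the lower bound $l(\gamma)\geq\|\zeta_1-\zeta_0\|=\|\psi\circ\phi^{-1}(b)-\psi\circ\phi^{-1}(a)\|$ of \eqref{eq:additive_path_length_lower_bound}. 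For attainment, the straight segment $\tilde\zeta_t$ between $\psi\circ\phi^{-1}(a)$ and $\psi\circ\phi^{-1}(b)$ stays in the convex set $\psi(\mathcal{Z})$, so $\tilde\eta_t=\psi^{-1}(\tilde\zeta_t)$ is an admissible path in $\mathcal{Z}$ and $\tilde\gamma_t=\phi(\tilde\eta_t)$ one in $\mathcal{M}$ with the correct endpoints; as $\dot{\tilde\zeta}_t$ is constant, \eqref{eq:additive_l_gam} gives $l(\tilde\gamma)=\|\zeta_1-\zeta_0\|$, so this common value is the geodesic distance between $a$ and $b$. Taking $a=\phi(Z_i)$, $b=\phi(Z_j)$ and using $\phi^{-1}\circ\phi=\mathrm{id}$ gives $\+D_\mathcal{M}^{ij}=\|\psi(Z_i)-\psi(Z_j)\|$. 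Finally, if $\alpha_i\left.\partial^2 f_i/\partial x^{(i)}\partial y^{(i)}\right|_{(z,z)}\equiv c>0$ uniformly in $z$ and $i$, then \eqref{eq:psi} integrates to $\psi_i(z^{(i)})=c^{1/2}(z^{(i)}-c_i^-)$, so $\psi$ is $c^{1/2}$ times a translation; since the box $\mathcal{Z}$ is convex, $\+D_\mathcal{Z}^{ij}=\|Z_i-Z_j\|$, and therefore $\+D_\mathcal{M}^{ij}=c^{1/2}\+D_\mathcal{Z}^{ij}\propto\+D_\mathcal{Z}^{ij}$.

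The one point needing genuine care — and the step I expect to be the main obstacle — is that geodesic distance is the infimum of $l(\gamma)$ over \emph{all} finite-length paths, whereas the argument above used the $C^1$ identity \eqref{eq:riemannian_path_length}. As flagged after Theorem~\ref{thm:generic}, I would instead derive the lower bound from the partition estimate \eqref{eq:path_length}: for any $\epsilon>0$ and any partition $\mathcal{P}=(t_0,\ldots,t_n)$ refining the associated $\mathcal{P}_\epsilon$, $l(\gamma)\geq\sum_{k}\langle\eta_{t_k}-\eta_{t_{k-1}},\+H_{\eta_{t_{k-1}}}(\eta_{t_k}-\eta_{t_{k-1}})\rangle^{1/2}-\epsilon$; a first-order Taylor expansion of $\psi$ at $\eta_{t_{k-1}}$ — with the per-cell error controlled uniformly using uniform continuity of $\psi'$ on the compact box $\mathcal{Z}$, and the aggregate error controlled by $\sum_k\|\eta_{t_k}-\eta_{t_{k-1}}\|\leq l(\eta)<\infty$ — replaces each summand by $\|\psi(\eta_{t_k})-\psi(\eta_{t_{k-1}})\|$ up to an error vanishing with the mesh; telescoping via the triangle inequality then bounds the sum below by $\|\psi(\eta_1)-\psi(\eta_0)\|$, and letting the partition refine and $\epsilon\downarrow0$ gives the lower bound for arbitrary finite-length $\gamma$. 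Everything else is the direct substitution already displayed in the text.
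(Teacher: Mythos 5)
Your proof is correct and follows essentially the same route as the paper: the $C^1$ computation via \eqref{eq:riemannian_path_length} in Section~\ref{sec:shortest_paths} for the lower bound and its attainment, together with the partition-based extension via \eqref{eq:path_length} that the paper carries out in Lemma~\ref{lem:additive} (your Taylor-expansion-plus-uniform-continuity-plus-telescoping sketch is precisely what that lemma does, using the integral representation of $\psi_i$ in place of an explicit Taylor remainder). A welcome bonus in your write-up is that you make explicit the role of the product-of-intervals hypothesis — convexity of $\psi(\mathcal{Z})$ is what keeps the straight segment $\tilde\zeta_t$ inside $\psi(\mathcal{Z})$ so that $\tilde\gamma_t=\phi\circ\psi^{-1}(\tilde\zeta_t)$ is well-defined — a point the paper uses but leaves implicit.
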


\section{Experiments} \label{sec:experiments}

\begin{figure}
\centering
  \includegraphics[width=\linewidth]{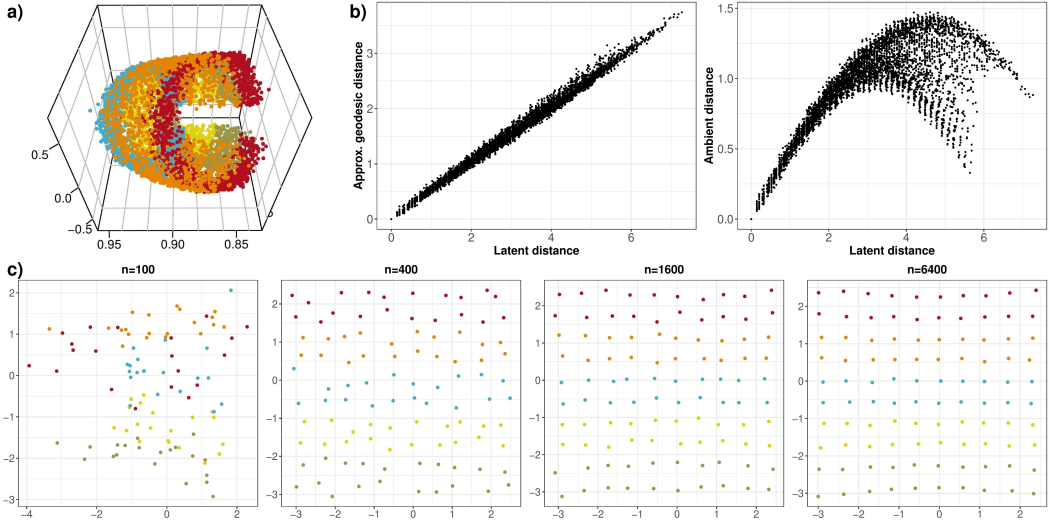}
  \caption{Simulated data example. a) Spectral embedding (first 3 dimensions), b) comparison of latent, approximate geodesic, and ambient distance and c) latent position recovery in the dense regime by spectral embedding followed by Isomap for increasing $n$. To aid visualisation, all plots in c) display a subset of $100$ estimated positions corresponding to true positions on a sub-grid which is common across $n$. Estimated positions are coloured according to their true $y$-coordinate.}
  \label{fig:dense_plots}
  \end{figure}

\subsection{Simulated data: latent position network model}\label{sec:simulation}
This section shows the theory at work in a pedagogical example, where $\+A$ is an adjacency matrix. Consider an undirected random graph following a latent position model with kernel $f(x,y) = \rho_n \{\cos(x^{(1)} - y^{(1)}) + \cos(x^{(2)} - y^{(2)}) +2\}/4$, operating on $\R^2 \times \R^2$. Here, the sequence $\rho_n$ is a sparsity factor which will either be constant, $\rho_n = 1$, or shrinking to zero sufficiently slowly, reflecting dense (degree grows linearly) and sparse (degree grows sublinearly) regimes respectively. The kernel is clearly translation-invariant, satisfies Assumptions \ref{assu:injective} and \ref{assu:c2_etc} and has finite rank, $p = 5$. The true latent positions $Z_i \in \mathbb{R}^2$ are deterministic and equally spaced points on a grid over the region $\mathcal{Z} = [-\pi + 0.25, \pi - 0.25] \times [-\pi + 0.25, \pi - 0.25]$, this range chosen to give valid probabilities and an interesting bottleneck in the 2-dimensional manifold $\mathcal{M}$. From Proposition~\ref{prop:translation_invariant}, the geodesic distance between $\phi(Z_i)$ and $\phi(Z_j)$ on $\mathcal{M}$ is equal to the Euclidean distance between $Z_i$ and $Z_j$, up to scaling, specifically $\+D_\mathcal{M}^{ij} = \rho_n\+D^{ij}_\mathcal{Z}/2$. 

Focusing first on the dense regime, for each $n = 100, 400, 1600, 6400$, we simulate a graph, and its spectral embedding $\hat X_1, \ldots, \hat X_n$ into $p=5$ dimensions. The first three dimensions of this embedding are shown in Figure~\ref{fig:dense_plots}a), and we see that the points gather about a two-dimensional, curved, manifold. To approximate geodesic distance, we compute a neighbourhood graph of radius $\epsilon$ from $\hat X_1, \ldots, \hat X_n$, choosing $\epsilon$ as small as possible subject to maintaining a connected graph. Figure~\ref{fig:dense_plots}b) shows that approximated geodesic distances roughly agree with the true Euclidean distances in latent space $\mathcal{Z}$ (up to the predicted scaling of 1/2), whereas there is significant distortion between ambient and latent distance ($\lVert \hat X_i - \hat X_j \rVert$ versus $\lVert Z_i - Z_j \rVert$). Finally, we recover the estimated latent positions in $\R^2$ using Isomap, which we align with the original by Procrustes (orthogonal transformation with scaling), in Figure~\ref{fig:dense_plots}c). As the theory predicts, the recovery error vanishes as $n$ increases.

In the appendix, we perform the same experiment in a sparse regime, $n\rho_{n} =\omega\{\log^{4} n\}$, chosen to ensure the spectral embedding is still consistent, and the recovery error still shrinks, but more slowly. We also implement other related approaches: UMAP, t-SNE applied to spectral embedding, node2vec directly into two dimensions and node2vec in five dimensions followed by Isomap (Figure~\ref{fig:method_comparison}). We use default configurations for all other hyperparameters. Together, the results support the central recommendation of this paper: to use matrix factorisation (e.g. spectral embedding, node2vec), followed by nonlinear dimension reduction (e.g. Isomap, UMAP, t-SNE).

\subsection{Real data: the global flight network} \label{sec:graph_example}
For this example, we use a publically available, clean version \citep{strohmeier2021crowdsourced} (with license details therein) of the crowdsourced OpenSky dataset. Six undirected graphs are extracted, for the months November 2019 to April 2020, connecting any two airports in the world for which at least a single flight was recorded ($n \approx 10,000$ airports in each). For each graph the adjacency matrix $\+A$ is spectrally embedded into $p = 10$ dimensions, degree-corrected by spherical projection \citep{lyzinski2014perfect,lei2015consistency,passino2020spectral}, after which we apply Isomap with $\epsilon$ chosen as the 5\% distance quantile. 

The dimension $p$ was chosen as a loose upper bound on the dimension estimate returned by the method of \citep{zhu2006automatic} on any of the individual graphs, as to facilitate comparison we prefer to use the same dimension throughout (e.g. to avoid artificial differences in variance) and so follow the recommendation of \cite{athreya2017statistical} to err on the large side (experiments with different choices of $p$ are in the appendix). In the degree-correction step we aim to remove network effects related to ``airport popularity'' (e.g. population, economic and airport size), after which geographic distance might be expected to be the principal factor deciding whether there should be a flight between two airports. Therefore, after applying Isomap, we might hope that $\hat Z_i$ could estimate the geographical location of airport $i$. The experiments take a few hours on a standard desktop (and the same is a loose upper bound on compute time for the other experiments in the paper).

The embedding for January is shown in Figures~\ref{fig:Jan_isomap})a-b), before and after Isomap. Both recover real geographic features, notably the relative positioning of London, Paris, Madrid, Lisbon, Rome. However, the embedding of the US is warped, and only after using Isomap do we see New York and Los Angeles correctly positioned at opposite extremes of the country. As further confirmation, Figures~\ref{fig:Jan_isomap})c-d) show the results restricted to North America, with the points coloured by latitude and longitude. For this continent, the accuracy of longitude recovery is particularly striking.

In line with the recommendation to overestimate $p$ \cite{athreya2017statistical}, one obtains a similar figure using $p=20$ before applying Isomap (Figure \ref{fig:n2v20}, appendix), whereas the corresponding figure with $p=2$ is too poor to show. 

In the appendix, the embeddings of all six months are shown, aligned using two-dimensional Procrustes, showing an important structural change in April. A statistical analysis of inter-continental geodesic distances suggests the change reflects severe flight restrictions in South America and Africa, at the beginning COVID-19 pandemic (the 11th March 2020 according to the WHO).

\begin{figure}[h!]
  \centering
  \includegraphics[width=.9\textwidth]{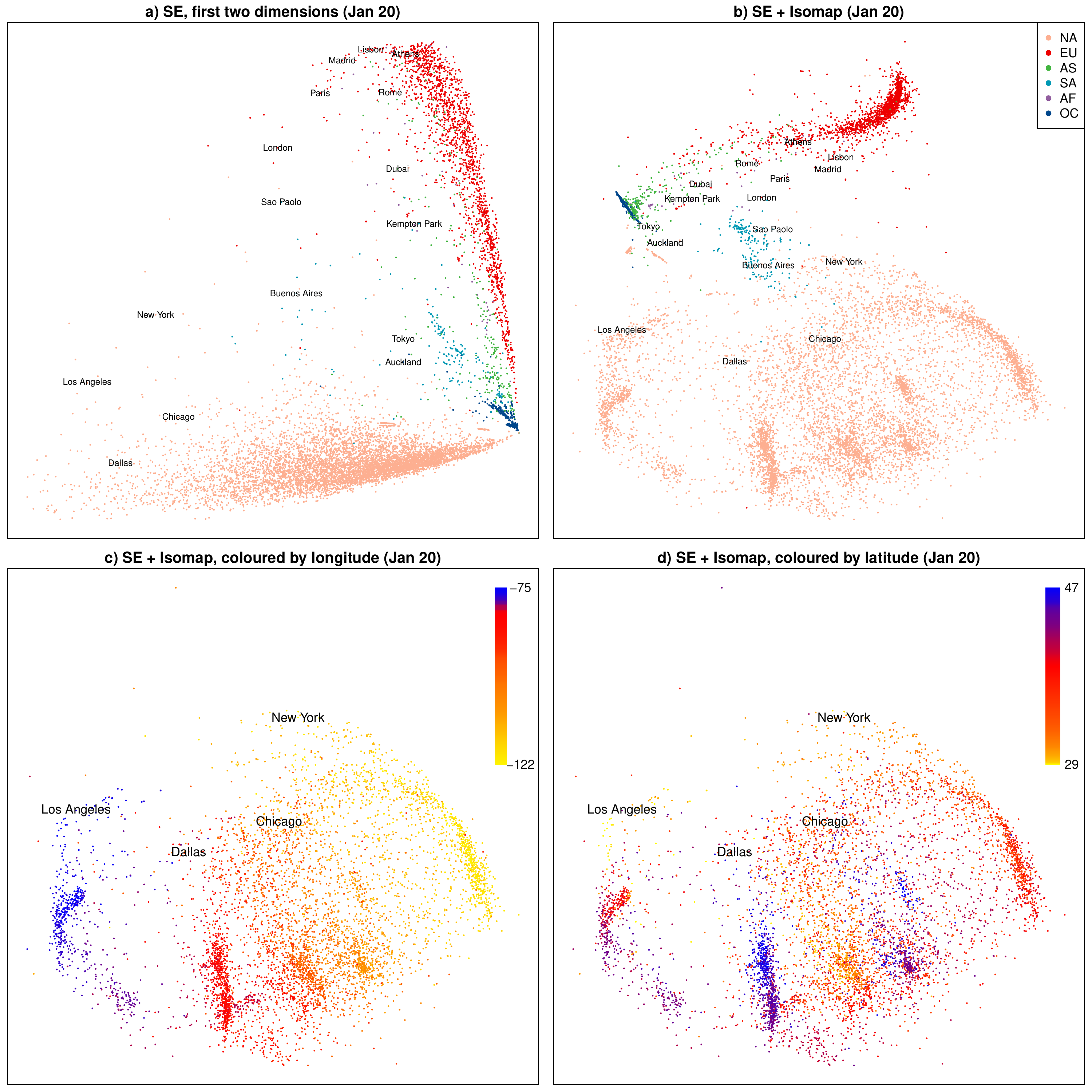}
  \caption{Visualisation of the global flight network, January 2020. a) Spectral embedding, b) spectral embedding followed by Isomap, colours indicating continents (NA = North America, EU = Europe, AS = Asia, AF = Africa, OC = Oceania), c-d) the same restricted to North America, showing only the airports within the continent's 5\%-95\% inter-quantile range of longitude and latitude, colours indicating the longitude (c) and latitude (d) of the airports.}\label{fig:Jan_isomap}
\end{figure}

\subsection{Real data: correlations between daily temperatures around the globe} \label{sec:temp_example}
In this example $\+A$ is a correlation matrix and we demonstrate a simple model-checking diagnostic informed by our theory. The raw data consist of average temperatures over each day, for several thousand days, recorded in cities around the globe. These data are open source, originate from Berkeley Earth \citep{berkeleyearth} and the particular data set analyzed is from \citep{kaggletempdata}. We used open source latitude and longitude data from \citep{worldcities}. See those references for license details.  Removing cities and dates for which data are missing yields a temperature time-series of $1450$ common days for each of the $n=2211$ cities. $\+A$ is the matrix of Pearson correlation coefficients between the $n$ time-series.

Figure~\ref{fig:temperature}b) shows the spectral embedding of $\+A$, with $p=2$ and points coloured by the latitude of the corresponding cities. Two visual features are striking: the concentration of points around a curved manifold, and a correspondence between latitude and location on the manifold. Figure~\ref{fig:temperature}c) shows  latitude against estimated latent positions from Isomap using the $k$-nearest neighbour graph with $k=200$, which is roughly $10\%$ of $n$, and with $d=1$. A clear monotone relationship appears. Our theoretical results can explain this phenomenon. Notice that when $d=1$, the additive structure of the kernel in Proposition~\ref{prop:additive_kernels} disappears. Hence Proposition~\ref{prop:additive_kernels} shows that in the case $d=1$, for \emph{any} kernel (meeting the basic requirements of section \ref{sec:setup}, including Assumptions \ref{assu:injective} and \ref{assu:c2_etc} of course) $\+D_{\mathcal{M}}^{ij}$ is equal to Euclidean distance between $Z_i$ and $Z_j$ up to their monotone transformation  by $\psi$ defined in \eqref{eq:psi}. In turn, this implies that if $\+A$ did actually follow the  model \eqref{eq:model} for some $f$ and some ``true'' latent positions $Z_1,\ldots,Z_n$ to which we had access, we should observe a monotone relationship between those true latent positions and the estimated positions $\hat{Z}_1,\ldots,\hat{Z}_n$ from Isomap.

\begin{figure}
  \centering
  \includegraphics[trim=0cm 1cm 0cm 0cm,clip,width=\textwidth]{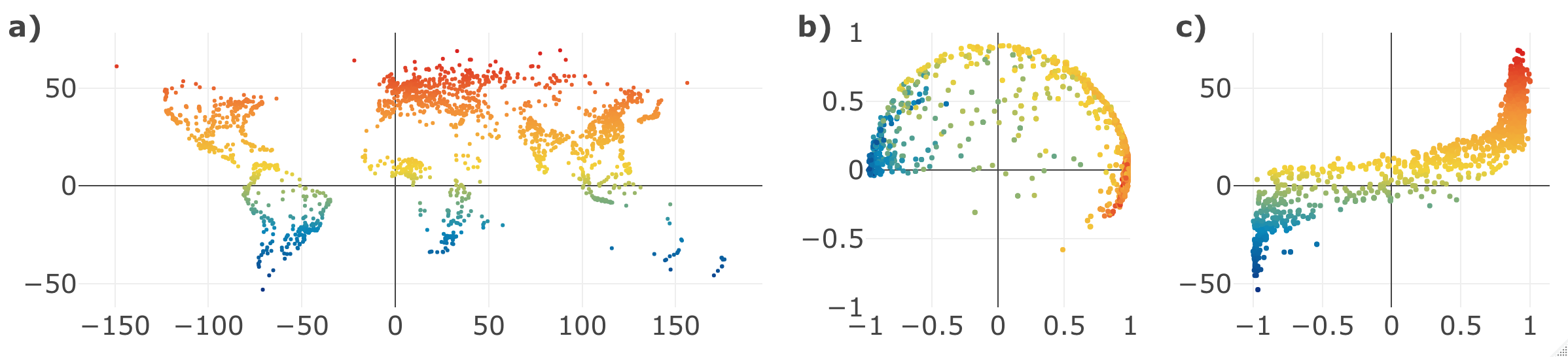}
  \caption{Temperature correlation example. a) Locations in degrees latitude and longitude of cities where temperatures were recorded, b) spectral embedding with $p=2$,  c) latitude of each city (vertical axis) against estimated latent position (horizontal axis) when $d=1$. In all plots, points are coloured by latitude of the corresponding cities.}\label{fig:temperature}
\end{figure}

The empirical monotonicity in Figure~\ref{fig:temperature}c) thus can be interpreted as indicating latitudes of the cities are plausible as ``true'' latent positions underlying the correlation matrix $\+A$, without us having to specify anything in practice about $f$. In further analysis (appendix) no such empirical monotone relationship is found between longitude and estimated latent position; this indicates longitude does $\emph{not}$ influence the correlations captured in $\+A$. A possible explanation is the daily averaging of temperatures in the underlying data: correlations between these average temperatures may be insensitive to longitude due to the rotation of the earth.

\section{Conclusion} \label{sec:conclusion}
Our research shows how matrix-factorisation and dimension-reduction can work together to reveal the true positions of objects based only on pairwise similarities such as connections or correlations. For the sake of exposition and reproducibility, we have used public datasets which can be interpreted without specialist domain knowledge, but the methods are potentially relevant to any scientific application involving large similarity matrices. Thinking about societal impact, our results highlight the depth of information in principle available about individuals, given network metadata, and we hope to raise awareness of these potential issues of privacy.

Concerning the limitations of the methods we have discussed, the bound in (\ref{eq:consistency_simple}) indicates that $n$ should be ``large'' in order for 
 $\+Q \hat{X}_i$ to approximate $\phi_p(Z_i)$ well. $n$  also has an impact on the performance of Isomap: heuristically one needs a high density and a large number of points on or near the manifold to get a good estimate of the geodesic distance. Thus the methods we propose are likely to perform poorly when $n$ is small, corresponding e.g. to a graph with a small number of vertices.
 
 On the other hand, in applications involving large networks or matrices (e.g. cyber-security, recommender systems), the data encountered are often sparse. This is good news for computational feasibility but bad news for statistical accuracy. In particular, for a graph, (\ref{eq:consistency_simple}) can only be expected to hold under logarithmically growing degree, the information-theoretic limit below which no algorithm can obtain asymptotically exact embeddings \citep{abbe2017community}. What this means in practice (as illustrated in our numerical results for the simulated data example in the appendix) is that the manifold may be very hard to distinguish, even for a large graph. Missing data may have a similarly negative impact on discerning manifold structure.  There could be substantial estimation gains in better integrating the factorisation and manifold estimation steps to overcome these difficulties.
 
 Another limitation is that our theory restricts attention to the case of positive-definite kernels. When $\+A$ is say a correlation or covariance matrix the positive-definite assumption on the kernel is of course natural, but when $\+A$ is say an adjacency matrix, it is a less natural assumption. The implications of removing the positive-definite assumption are the subject of ongoing research.

 \section*{Funding Transparency Statement} Nick Whiteley and Patrick Rubin-Delanchy's research was supported by Turing Fellowships from the Alan Turing Institute. Annie Gray's research was supported by a studentship from Compass, the EPSRC Centre for Doctoral Training in Computational Statistics and Data Science.
  
\bibliographystyle{plainnat}
\bibliography{references}

\begin{thebibliography}{65}
\providecommand{\natexlab}[1]{#1}
\providecommand{\url}[1]{\texttt{#1}}
\expandafter\ifx\csname urlstyle\endcsname\relax
  \providecommand{\doi}[1]{doi: #1}\else
  \providecommand{\doi}{doi: \begingroup \urlstyle{rm}\Url}\fi

\bibitem[ber()]{berkeleyearth}
Berkeley {Earth}.
\newblock \url{http://berkeleyearth.org}.
\newblock Accessed: 2021-05-27.

\bibitem[bor()]{borders}
Countries slammed their borders shut to stop coronavirus. but is it doing any
  good?
\newblock
  \url{https://www.npr.org/sections/goatsandsoda/2020/05/15/855669867/countries-slammed-their-borders-shut-to-stop-coronavirus-but-is-it-doing-any-goo}.
\newblock Accessed: 2021-05-20.

\bibitem[kag()]{kaggletempdata}
Climate change: Earth surface temperature data.
\newblock
  \url{https://www.kaggle.com/berkeleyearth/climate-change-earth-surface-temperature-data}.
\newblock Accessed: 2021-05-27.

\bibitem[wor()]{worldcities}
Simplemaps free entire world database.
\newblock \url{https://simplemaps.com/resources/free-country-cities}.
\newblock Accessed: 2021-05-27.

\bibitem[Abbe(2017)]{abbe2017community}
Emmanuel Abbe.
\newblock Community detection and stochastic block models: recent developments.
\newblock \emph{The Journal of Machine Learning Research}, 18\penalty0
  (1):\penalty0 6446--6531, 2017.

\bibitem[Airoldi et~al.(2008)Airoldi, Blei, Fienberg, and
  Xing]{airoldi2008mixed}
Edoardo~M Airoldi, David~M Blei, Stephen~E Fienberg, and Eric~P Xing.
\newblock Mixed membership stochastic blockmodels.
\newblock \emph{Journal of Machine Learning Research}, 9\penalty0
  (Sep):\penalty0 1981--2014, 2008.

\bibitem[Amari and Wu(1999)]{amari1999improving}
Shun-ichi Amari and Si~Wu.
\newblock Improving support vector machine classifiers by modifying kernel
  functions.
\newblock \emph{Neural Networks}, 12\penalty0 (6):\penalty0 783--789, 1999.

\bibitem[Amini et~al.(2013)Amini, Chen, Bickel, and Levina]{amini2013pseudo}
Arash~A Amini, Aiyou Chen, Peter~J Bickel, and Elizaveta Levina.
\newblock Pseudo-likelihood methods for community detection in large sparse
  networks.
\newblock \emph{Annals of Statistics}, 41\penalty0 (4):\penalty0 2097--2122,
  2013.

\bibitem[Anglade et~al.(2019)Anglade, Denis, and
  Berthier]{anglade:hal-02316303}
Thomas Anglade, Christophe Denis, and Thierry Berthier.
\newblock {A novel embedding-based framework improving the User and Entity
  Behav- ior Analysis}.
\newblock working paper or preprint, October 2019.
\newblock URL \url{https://hal.sorbonne-universite.fr/hal-02316303}.

\bibitem[Araya~Valdivia and Yohann(2019)]{NEURIPS2019_c4414e53}
Ernesto Araya~Valdivia and De~Castro Yohann.
\newblock Latent distance estimation for random geometric graphs.
\newblock In H.~Wallach, H.~Larochelle, A.~Beygelzimer, F.~d\textquotesingle
  Alch\'{e}-Buc, E.~Fox, and R.~Garnett, editors, \emph{Advances in Neural
  Information Processing Systems}, volume~32. Curran Associates, Inc., 2019.
\newblock URL
  \url{https://proceedings.neurips.cc/paper/2019/file/c4414e538a5475ec0244673b7f2f7dbb-Paper.pdf}.

\bibitem[Arias-Castro et~al.(2021)Arias-Castro, Channarond, Pelletier, and
  Verzelen]{10.1214/21-EJS1801}
Ery Arias-Castro, Antoine Channarond, Bruno Pelletier, and Nicolas Verzelen.
\newblock {On the estimation of latent distances using graph distances}.
\newblock \emph{Electronic Journal of Statistics}, 15\penalty0 (1):\penalty0
  722 -- 747, 2021.
\newblock \doi{10.1214/21-EJS1801}.
\newblock URL \url{https://doi.org/10.1214/21-EJS1801}.

\bibitem[Athreya et~al.(2017)Athreya, Fishkind, Tang, Priebe, Park, Vogelstein,
  Levin, Lyzinski, and Qin]{athreya2017statistical}
Avanti Athreya, Donniell~E Fishkind, Minh Tang, Carey~E Priebe, Youngser Park,
  Joshua~T Vogelstein, Keith Levin, Vince Lyzinski, and Yichen Qin.
\newblock Statistical inference on random dot product graphs: a survey.
\newblock \emph{The Journal of Machine Learning Research}, 18\penalty0
  (1):\penalty0 8393--8484, 2017.

\bibitem[Athreya et~al.(2021)Athreya, Tang, Park, and
  Priebe]{athreya2018estimation}
Avanti Athreya, Minh Tang, Youngser Park, and Carey~E Priebe.
\newblock On estimation and inference in latent structure random graphs.
\newblock \emph{Statistical Science}, 36\penalty0 (1):\penalty0 68--88, 2021.

\bibitem[Bickel et~al.(2013)Bickel, Choi, Chang, and
  Zhang]{bickel2013asymptotic}
Peter Bickel, David Choi, Xiangyu Chang, and Hai Zhang.
\newblock Asymptotic normality of maximum likelihood and its variational
  approximation for stochastic blockmodels.
\newblock \emph{Annals of Statistics}, 41\penalty0 (4):\penalty0 1922--1943,
  2013.

\bibitem[Bollob{\'a}s et~al.(2007)Bollob{\'a}s, Janson, and
  Riordan]{bollobas2007phase}
B{\'e}la Bollob{\'a}s, Svante Janson, and Oliver Riordan.
\newblock The phase transition in inhomogeneous random graphs.
\newblock \emph{Random Structures \& Algorithms}, 31\penalty0 (1):\penalty0
  3--122, 2007.

\bibitem[Burago(2001)]{burago2001course}
Dmitri Burago.
\newblock \emph{A course in metric geometry}, volume~33.
\newblock American Mathematical Soc., 2001.

\bibitem[Cape et~al.(2019)Cape, Tang, and Priebe]{two_to_infinity}
Joshua Cape, Minh Tang, and Carey~E Priebe.
\newblock The two-to-infinity norm and singular subspace geometry with
  applications to high-dimensional statistics.
\newblock \emph{The Annals of Statistics}, 47\penalty0 (5):\penalty0
  2405--2439, 2019.

\bibitem[Gallagher et~al.(2019)Gallagher, Jones, Bertiger, Priebe, and
  Rubin-Delanchy]{gallagher2019spectral}
Ian Gallagher, Andrew Jones, Anna Bertiger, Carey Priebe, and Patrick
  Rubin-Delanchy.
\newblock Spectral embedding of weighted graphs.
\newblock \emph{arXiv preprint arXiv:1910.05534}, 2019.

\bibitem[Gao et~al.(2015)Gao, Lu, and Zhou]{gao2015rate}
Chao Gao, Yu~Lu, and Harrison~H Zhou.
\newblock Rate-optimal graphon estimation.
\newblock \emph{Annals of Statistics}, 43\penalty0 (6):\penalty0 2624--2652,
  2015.

\bibitem[Grover and Leskovec(2016)]{grover2016node2vec}
Aditya Grover and Jure Leskovec.
\newblock node2vec: Scalable feature learning for networks.
\newblock In \emph{Proceedings of the 22nd ACM SIGKDD international conference
  on Knowledge discovery and data mining}, pages 855--864, 2016.

\bibitem[Hasan and Curry(2017)]{hasan2017word}
Souleiman Hasan and Edward Curry.
\newblock Word re-embedding via manifold dimensionality retention.
\newblock In \emph{Proceedings of the 2017 Conference on Empirical Methods in
  Natural Language Processing}, pages 321--326, 2017.

\bibitem[Hoff et~al.(2002)Hoff, Raftery, and Handcock]{hoff2002latent}
Peter~D Hoff, Adrian~E Raftery, and Mark~S Handcock.
\newblock Latent space approaches to social network analysis.
\newblock \emph{Journal of the American Statistical Association}, 97\penalty0
  (460):\penalty0 1090--1098, 2002.

\bibitem[Holland et~al.(1983)Holland, Laskey, and
  Leinhardt]{holland1983stochastic}
Paul~W Holland, Kathryn~Blackmond Laskey, and Samuel Leinhardt.
\newblock Stochastic blockmodels: First steps.
\newblock \emph{Social networks}, 5\penalty0 (2):\penalty0 109--137, 1983.

\bibitem[Janson and Olhede(2021)]{janson2021can}
Svante Janson and Sofia Olhede.
\newblock Can smooth graphons in several dimensions be represented by smooth
  graphons on $[0, 1] $?
\newblock \emph{arXiv preprint arXiv:2101.07587}, 2021.

\bibitem[Jones and Rubin-Delanchy(2020)]{jones2020multilayer}
Andrew Jones and Patrick Rubin-Delanchy.
\newblock The multilayer random dot product graph.
\newblock \emph{arXiv preprint arXiv:2007.10455}, 2020.

\bibitem[Kallenberg(1989)]{kallenberg1989representation}
Olav Kallenberg.
\newblock On the representation theorem for exchangeable arrays.
\newblock \emph{Journal of Multivariate Analysis}, 30\penalty0 (1):\penalty0
  137--154, 1989.

\bibitem[Kar and Karnick(2012)]{kar2012random}
Purushottam Kar and Harish Karnick.
\newblock Random feature maps for dot product kernels.
\newblock In \emph{Artificial intelligence and statistics}, pages 583--591.
  PMLR, 2012.

\bibitem[Karam and Campbell(2010)]{karam10_interspeech}
Zahi~N. Karam and William~M. Campbell.
\newblock {Graph-embedding for speaker recognition}.
\newblock In \emph{Proc. Interspeech 2010}, pages 2742--2745, 2010.
\newblock \doi{10.21437/Interspeech.2010-726}.

\bibitem[Karrer and Newman(2011)]{karrer2011stochastic}
Brian Karrer and Mark~EJ Newman.
\newblock Stochastic blockmodels and community structure in networks.
\newblock \emph{Physical Review E}, 83\penalty0 (1):\penalty0 016107, 2011.

\bibitem[Lawrence and Hyv{\"a}rinen(2005)]{lawrence2005probabilistic}
Neil Lawrence and Aapo Hyv{\"a}rinen.
\newblock Probabilistic non-linear principal component analysis with {Gaussian}
  process latent variable models.
\newblock \emph{Journal of machine learning research}, 6\penalty0 (11), 2005.

\bibitem[Lawrence(2003)]{lawrence2003gaussian}
Neil~D Lawrence.
\newblock Gaussian process latent variable models for visualisation of high
  dimensional data.
\newblock In \emph{Nips}, volume~2, page~5. Citeseer, 2003.

\bibitem[Lei(2021)]{lei2018network}
Jing Lei.
\newblock Network representation using graph root distributions.
\newblock \emph{The Annals of Statistics}, 49\penalty0 (2):\penalty0 745--768,
  2021.

\bibitem[Lei and Rinaldo(2015)]{lei2015consistency}
Jing Lei and Alessandro Rinaldo.
\newblock Consistency of spectral clustering in stochastic block models.
\newblock \emph{Ann. Statist.}, 43\penalty0 (1):\penalty0 215--237, 2015.
\newblock ISSN 0090-5364.
\newblock \doi{10.1214/14-AOS1274}.
\newblock URL \url{https://doi.org/10.1214/14-AOS1274}.

\bibitem[Lov{\'a}sz(2012)]{lovasz2012large}
L{\'a}szl{\'o} Lov{\'a}sz.
\newblock \emph{Large networks and graph limits. American Mathematical Society
  Colloquium Publications}, volume~60.
\newblock Amer. Math. Soc. Providence, RI, 2012.

\bibitem[Lyzinski et~al.(2014)Lyzinski, Sussman, Tang, Athreya, and
  Priebe]{lyzinski2014perfect}
Vince Lyzinski, Daniel~L. Sussman, Minh Tang, Avanti Athreya, and Carey~E.
  Priebe.
\newblock Perfect clustering for stochastic blockmodel graphs via adjacency
  spectral embedding.
\newblock \emph{Electron. J. Stat.}, 8\penalty0 (2):\penalty0 2905--2922, 2014.
\newblock \doi{10.1214/14-EJS978}.
\newblock URL \url{https://doi.org/10.1214/14-EJS978}.

\bibitem[Lyzinski et~al.(2017)Lyzinski, Tang, Athreya, Park, and
  Priebe]{lyzinski2017community}
Vince Lyzinski, Minh Tang, Avanti Athreya, Youngser Park, and Carey~E Priebe.
\newblock Community detection and classification in hierarchical stochastic
  blockmodels.
\newblock \emph{IEEE Transactions on Network Science and Engineering},
  4\penalty0 (1):\penalty0 13--26, 2017.

\bibitem[Maaten and Hinton(2008)]{maaten2008visualizing}
Laurens van~der Maaten and Geoffrey Hinton.
\newblock Visualizing data using t-{SNE}.
\newblock \emph{Journal of machine learning research}, 9\penalty0
  (Nov):\penalty0 2579--2605, 2008.

\bibitem[Margaryan et~al.(2020)Margaryan, Lawson, Sikora, Racimo, Rasmussen,
  Moltke, Cassidy, J{\o}rsboe, Ingason, Pedersen,
  et~al.]{margaryan2020population}
Ashot Margaryan, Daniel~J Lawson, Martin Sikora, Fernando Racimo, Simon
  Rasmussen, Ida Moltke, Lara~M Cassidy, Emil J{\o}rsboe, Andr{\'e}s Ingason,
  Mikkel~W Pedersen, et~al.
\newblock Population genomics of the viking world.
\newblock \emph{Nature}, 585\penalty0 (7825):\penalty0 390--396, 2020.

\bibitem[McInnes et~al.(2018)McInnes, Healy, and Melville]{mcinnes2018umap}
Leland McInnes, John Healy, and James Melville.
\newblock Umap: Uniform manifold approximation and projection for dimension
  reduction.
\newblock \emph{arXiv preprint arXiv:1802.03426}, 2018.

\bibitem[Modell and Rubin-Delanchy(2021)]{modell2021spectral}
Alexander Modell and Patrick Rubin-Delanchy.
\newblock Spectral clustering under degree heterogeneity: a case for the random
  walk {L}aplacian.
\newblock \emph{arXiv preprint arXiv:2105.00987}, 2021.

\bibitem[Petersen(2006)]{petersen2006riemannian}
Peter Petersen.
\newblock \emph{Riemannian geometry}, volume 171 of \emph{Graduate Texts in
  Mathematics}.
\newblock Springer, 2006.

\bibitem[Priebe et~al.(2019)Priebe, Park, Vogelstein, Conroy, Lyzinski, Tang,
  Athreya, Cape, and Bridgeford]{priebe2019two}
Carey~E Priebe, Youngser Park, Joshua~T Vogelstein, John~M Conroy, Vince
  Lyzinski, Minh Tang, Avanti Athreya, Joshua Cape, and Eric Bridgeford.
\newblock On a two-truths phenomenon in spectral graph clustering.
\newblock \emph{Proceedings of the National Academy of Sciences}, 116\penalty0
  (13):\penalty0 5995--6000, 2019.

\bibitem[Qiu et~al.(2018)Qiu, Dong, Ma, Li, Wang, and Tang]{qiu2018network}
Jiezhong Qiu, Yuxiao Dong, Hao Ma, Jian Li, Kuansan Wang, and Jie Tang.
\newblock Network embedding as matrix factorization: Unifying {D}eepwalk,
  {LINE}, {PTE}, and node2vec.
\newblock In \emph{Proceedings of the eleventh ACM international conference on
  web search and data mining}, pages 459--467, 2018.

\bibitem[Rohe et~al.(2011)Rohe, Chatterjee, and Yu]{rohe2011spectral}
Karl Rohe, Sourav Chatterjee, and Bin Yu.
\newblock Spectral clustering and the high-dimensional stochastic blockmodel.
\newblock \emph{Annals of Statistics}, 39\penalty0 (4):\penalty0 1878--1915,
  2011.

\bibitem[Rubin-Delanchy(2020)]{rubin2020manifold}
Patrick Rubin-Delanchy.
\newblock Manifold structure in graph embeddings.
\newblock In \emph{Proceedings of the Thirty-fourth Conference on Neural
  Information Processing Systems}, 2020.

\bibitem[Rubin-Delanchy et~al.(2020)Rubin-Delanchy, Cape, Tang, and
  Priebe]{rubin2017statistical}
Patrick Rubin-Delanchy, Joshua Cape, Minh Tang, and Carey~E Priebe.
\newblock A statistical interpretation of spectral embedding: the generalised
  random dot product graph.
\newblock \emph{arXiv preprint arXiv:1709.05506}, 2020.

\bibitem[Rudin(1976)]{rudin1976principles}
Walter Rudin.
\newblock \emph{Principles of mathematical analysis}, volume~3.
\newblock McGraw-hill New York, 1976.

\bibitem[Salter-Townshend and Murphy(2013)]{salter2013variational}
Michael Salter-Townshend and Thomas~Brendan Murphy.
\newblock Variational bayesian inference for the latent position cluster model
  for network data.
\newblock \emph{Computational Statistics \& Data Analysis}, 57\penalty0
  (1):\penalty0 661--671, 2013.

\bibitem[Sanna~Passino et~al.(2020)Sanna~Passino, Heard, and
  Rubin-Delanchy]{passino2020spectral}
Francesco Sanna~Passino, Nicholas~A Heard, and Patrick Rubin-Delanchy.
\newblock Spectral clustering on spherical coordinates under the
  degree-corrected stochastic blockmodel.
\newblock \emph{arXiv preprint arXiv:2011.04558}, 2020.

\bibitem[Sch{\"o}lkopf et~al.(1998)Sch{\"o}lkopf, Smola, and
  M{\"u}ller]{scholkopf1998nonlinear}
Bernhard Sch{\"o}lkopf, Alexander Smola, and Klaus-Robert M{\"u}ller.
\newblock Nonlinear component analysis as a kernel eigenvalue problem.
\newblock \emph{Neural computation}, 10\penalty0 (5):\penalty0 1299--1319,
  1998.

\bibitem[Steinwart and Christmann(2008)]{steinwart2008support}
Ingo Steinwart and Andreas Christmann.
\newblock \emph{Support vector machines}.
\newblock Springer Science \& Business Media, 2008.

\bibitem[Strohmeier et~al.(2021)Strohmeier, Olive, L{\"u}bbe, Sch{\"a}fer, and
  Lenders]{strohmeier2021crowdsourced}
Martin Strohmeier, Xavier Olive, Jannis L{\"u}bbe, Matthias Sch{\"a}fer, and
  Vincent Lenders.
\newblock Crowdsourced air traffic data from the opensky network 2019--2020.
\newblock \emph{Earth System Science Data}, 13\penalty0 (2):\penalty0 357--366,
  2021.

\bibitem[Sussman et~al.(2012)Sussman, Tang, Fishkind, and
  Priebe]{sussman2012consistent}
Daniel~L Sussman, Minh Tang, Donniell~E Fishkind, and Carey~E Priebe.
\newblock A consistent adjacency spectral embedding for stochastic blockmodel
  graphs.
\newblock \emph{Journal of the American Statistical Association}, 107\penalty0
  (499):\penalty0 1119--1128, 2012.

\bibitem[Sutherland(2009)]{sutherland2009introduction}
Wilson~A Sutherland.
\newblock \emph{Introduction to metric and topological spaces}.
\newblock Oxford University Press, 2009.

\bibitem[Tang and Priebe(2018)]{tang2016limit}
Minh Tang and Carey~E Priebe.
\newblock Limit theorems for eigenvectors of the normalized laplacian for
  random graphs.
\newblock \emph{The Annals of Statistics}, 46\penalty0 (5):\penalty0
  2360--2415, 2018.

\bibitem[Tang et~al.(2013)Tang, Sussman, and Priebe]{tang2013universally}
Minh Tang, Daniel~L Sussman, and Carey~E Priebe.
\newblock Universally consistent vertex classification for latent positions
  graphs.
\newblock \emph{The Annals of Statistics}, 41\penalty0 (3):\penalty0
  1406--1430, 2013.

\bibitem[Tenenbaum et~al.(2000)Tenenbaum, De~Silva, and
  Langford]{tenenbaum2000global}
Joshua~B Tenenbaum, Vin De~Silva, and John~C Langford.
\newblock A global geometric framework for nonlinear dimensionality reduction.
\newblock \emph{science}, 290\penalty0 (5500):\penalty0 2319--2323, 2000.

\bibitem[Tipping and Bishop(1999)]{tipping1999probabilistic}
Michael~E Tipping and Christopher~M Bishop.
\newblock Probabilistic principal component analysis.
\newblock \emph{Journal of the Royal Statistical Society: Series B (Statistical
  Methodology)}, 61\penalty0 (3):\penalty0 611--622, 1999.

\bibitem[Titsias and Lawrence(2010)]{titsias2010bayesian}
Michalis Titsias and Neil~D Lawrence.
\newblock Bayesian {G}aussian process latent variable model.
\newblock In \emph{Proceedings of the Thirteenth International Conference on
  Artificial Intelligence and Statistics}, pages 844--851. JMLR Workshop and
  Conference Proceedings, 2010.

\bibitem[Torgerson(1952)]{torgerson1952multidimensional}
Warren~S Torgerson.
\newblock Multidimensional scaling: I. {T}heory and method.
\newblock \emph{Psychometrika}, 17\penalty0 (4):\penalty0 401--419, 1952.

\bibitem[Trosset et~al.(2020)Trosset, Gao, Tang, and
  Priebe]{trosset2020learning}
Michael~W Trosset, Mingyue Gao, Minh Tang, and Carey~E Priebe.
\newblock Learning 1-dimensional submanifolds for subsequent inference on
  random dot product graphs.
\newblock \emph{arXiv preprint arXiv:2004.07348}, 2020.

\bibitem[Xu(2018)]{xu2018rates}
Jiaming Xu.
\newblock Rates of convergence of spectral methods for graphon estimation.
\newblock In \emph{International Conference on Machine Learning}, pages
  5433--5442. PMLR, 2018.

\bibitem[Yamin et~al.(2019)Yamin, Dayan, Squarcina, Brambilla, Murino,
  Diwadkar, and Sona]{yamin2019comparison}
Abubakar Yamin, Michael Dayan, Letizia Squarcina, Paolo Brambilla, Vittorio
  Murino, V~Diwadkar, and Diego Sona.
\newblock Comparison of brain connectomes using geodesic distance on manifold:
  A twins study.
\newblock In \emph{2019 IEEE 16th International Symposium on Biomedical Imaging
  (ISBI 2019)}, pages 1797--1800. IEEE, 2019.

\bibitem[Zhang and Tang(2021)]{zhang2021consistency}
Yichi Zhang and Minh Tang.
\newblock Consistency of random-walk based network embedding algorithms.
\newblock \emph{arXiv preprint arXiv:2101.07354}, 2021.

\bibitem[Zhu and Ghodsi(2006)]{zhu2006automatic}
Mu~Zhu and Ali Ghodsi.
\newblock Automatic dimensionality selection from the scree plot via the use of
  profile likelihood.
\newblock \emph{Computational Statistics \& Data Analysis}, 51\penalty0
  (2):\penalty0 918--930, 2006.

\end{thebibliography}

\section*{Checklist}


\begin{enumerate}

\item For all authors...
\begin{enumerate}
  \item Do the main claims made in the abstract and introduction accurately reflect the paper's contributions and scope?
    \answerYes{See Abstract and Section \ref{sec:introduction}.}
  \item Did you describe the limitations of your work?
    \answerYes{See Sections \ref{sec:spec_embed} and \ref{sec:setup}.}
  \item Did you discuss any potential negative societal impacts of your work?
    \answerYes{See Section \ref{sec:conclusion}.}
  \item Have you read the ethics review guidelines and ensured that your paper conforms to them?
    \answerYes{}
\end{enumerate}

\item If you are including theoretical results...
\begin{enumerate}
  \item Did you state the full set of assumptions of all theoretical results?
    \answerYes{See Section \ref{sec:theory} and Appendix \ref{appendix:theory}.}
	\item Did you include complete proofs of all theoretical results?
    \answerYes{See Section \ref{sec:shortest_paths} and Appendix \ref{appendix:theory}.}
\end{enumerate}

\item If you ran experiments...
\begin{enumerate}
  \item Did you include the code, data, and instructions needed to reproduce the main experimental results (either in the supplemental material or as a URL)?
    \answerYes{See Section \ref{sec:experiments} and \url{https://github.com/anniegray52/graphs}.}
  \item Did you specify all the training details (e.g., data splits, hyperparameters, how they were chosen)?
    \answerYes{See Section \ref{sec:experiments}.}
	\item Did you report error bars (e.g., with respect to the random seed after running experiments multiple times)?
    \answerYes{For Section \ref{sec:simulation} `set.seed(123)' has been used and supplementary figures for Section \ref{sec:graph_example} in the Appendix report error bars.}
	\item Did you include the total amount of compute and the type of resources used (e.g., type of GPUs, internal cluster, or cloud provider)?
    \answerYes{See Section \ref{sec:graph_example}.}
\end{enumerate}

\item If you are using existing assets (e.g., code, data, models) or curating/releasing new assets...
\begin{enumerate}
  \item If your work uses existing assets, did you cite the creators?
    \answerYes{See Section \ref{sec:experiments} for data and we list all R packages used in Appendix \ref{appendix:experiments}.}
  \item Did you mention the license of the assets?
    \answerYes{For details on licenses of data see Sections \ref{sec:graph_example}, \ref{sec:temp_example}, and for R packages see Appendix \ref{appendix:experiments}.}
  \item Did you include any new assets either in the supplemental material or as a URL? \answerYes{\url{https://github.com/anniegray52/graphs}}
  \item Did you discuss whether and how consent was obtained from people whose data you're using/curating?
    \answerYes{All datasets used are public datasets.}
  \item Did you discuss whether the data you are using/curating contains personally identifiable information or offensive content?
    \answerYes{}
\end{enumerate}

\item If you used crowdsourcing or conducted research with human subjects...
\begin{enumerate}
  \item Did you include the full text of instructions given to participants and screenshots, if applicable?
    \answerNA{}{}
  \item Did you describe any potential participant risks, with links to Institutional Review Board (IRB) approvals, if applicable?
    \answerNA{}
  \item Did you include the estimated hourly wage paid to participants and the total amount spent on participant compensation?
    \answerNA{}
\end{enumerate}

\end{enumerate}
\newpage
\appendix

\section{Theory} \label{appendix:theory}

\subsection{Supporting results and proof of Theorem \ref{thm:generic}} 
\begin{lem}
\label{lem:injective}If Assumption~\ref{assu:injective} holds, $\phi$
is injective.
\end{lem}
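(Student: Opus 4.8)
The plan is to argue by contraposition, exploiting the fact that $\phi$ reproduces $f$ as an inner product. Suppose $\phi$ were not injective, so that there exist $x,y\in\mathcal{Z}$ with $x\neq y$ but $\phi(x)=\phi(y)$. The only ingredient needed is the identity $f(u,v)=\left\langle \phi(u),\phi(v)\right\rangle_2$ for all $u,v\in\mathcal{Z}$, which was recorded immediately after the Mercer expansion \eqref{eq:mercer}. Applying this with $v=a$ for an arbitrary $a\in\mathcal{Z}$ gives
\[
f(x,a)=\left\langle \phi(x),\phi(a)\right\rangle_2=\left\langle \phi(y),\phi(a)\right\rangle_2=f(y,a),
\]
so $f(x,\cdot)$ and $f(y,\cdot)$ agree on all of $\mathcal{Z}$. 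This directly contradicts Assumption~\ref{assu:injective}, which asserts the existence of some $a\in\mathcal{Z}$ with $f(x,a)\neq f(y,a)$. Hence no such pair $x\neq y$ can exist, and $\phi$ is injective.

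There is essentially no obstacle here: the lemma is an immediate consequence of the reproducing property of the Mercer feature map together with the way Assumption~\ref{assu:injective} was phrased, and the only thing to be careful about is that the inner product $\left\langle \phi(x),\phi(a)\right\rangle_2$ is well-defined, which follows since $\phi(\mathcal{Z})\subset\ell_2$ (each $\|\phi(x)\|_2^2=f(x,x)<\infty$ by continuity of $f$ and compactness of $\mathcal{Z}$), as already established in the setup.
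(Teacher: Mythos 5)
Your proof is correct and matches the paper's argument essentially verbatim: both argue by contraposition, using the reproducing identity $f(u,v)=\langle\phi(u),\phi(v)\rangle_2$ to show that $\phi(x)=\phi(y)$ forces $f(x,a)=f(y,a)$ for all $a$, contradicting Assumption~\ref{assu:injective}.
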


\begin{proof}
We prove the contrapositive. So suppose $\phi$ is not injective.
Then there must exist $x,y$, with $x\neq y$, such that $\phi(x)=\phi(y)$,
and hence for any $a$, $f(x,a)=\left\langle \phi(x),\phi(a)\right\rangle _{2}=\left\langle \phi(y),\phi(a)\right\rangle _{2}=f(y,a)$.
\end{proof}
The inverse of $\phi$ on $\mathcal{M}$ is denoted $\phi^{-1}$.
Let $\widetilde{\+H}_{z,z^{\prime}}\in\mathbb{R}^{d\times d}$ be the matrix with elements
\[
\widetilde{\+H}_{z,z^{\prime}}^{ij}\coloneqq\left.\dfrac{\partial^{2}f}{\partial x^{(i)}\partial y^{(i)}}\right|_{(z,z^{\prime})}.
\]

\begin{lem}
\label{lem:MVT}If Assumption~\ref{assu:c2_etc} holds, then for any
$x,y\in\mathcal{Z}$, there exists $z$ on the line segment with end-points
$x,y$ such that 
\[
\|\phi(x)-\phi(y)\|_{2}^{2}=\left\langle x-y,\left[\int_{0}^{1}\widetilde{\+H}_{z,y+s(x-y)}\mathrm{d}s\right](x-y)\right\rangle ,
\]
where the integral is element-wise.
\end{lem}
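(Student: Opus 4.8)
The plan is to reduce the identity to two applications of the fundamental theorem of calculus, followed by the mean value theorem for integrals. First I would record the elementary fact that $\|\phi(x)-\phi(y)\|_2^2 = f(x,x) - 2f(x,y) + f(y,y)$, which is immediate from $f(u,v)=\langle\phi(u),\phi(v)\rangle_2$ and $\|\phi(u)\|_2^2=f(u,u)$, both noted just after \eqref{eq:mercer}. Fix $x,y\in\mathcal{Z}$. Since $\widetilde{\mathcal{Z}}$ is a closed ball it is convex, so the segment $\{y+r(x-y):r\in[0,1]\}$ lies in $\widetilde{\mathcal{Z}}$; hence $F(s,t)\coloneqq f(y+s(x-y),\,y+t(x-y))$ is well-defined and, by the $C^2$ part of Assumption~\ref{assu:c2_etc}, is $C^2$ on $[0,1]^2$.

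Next I would evaluate $F(1,1)-F(1,0)-F(0,1)+F(0,0)$ in two ways. On one hand, symmetry of $f$ gives $F(1,0)=f(x,y)=f(y,x)=F(0,1)$, so this expression equals $f(x,x)-2f(x,y)+f(y,y)=\|\phi(x)-\phi(y)\|_2^2$. On the other hand, setting $G(s)\coloneqq F(s,1)-F(s,0)$ and applying the fundamental theorem of calculus once in $s$ and then, inside, once in $t$,
\[
F(1,1)-F(1,0)-F(0,1)+F(0,0)=\int_0^1 G'(s)\,\mathrm{d}s=\int_0^1\!\!\int_0^1 \frac{\partial^2 F}{\partial t\,\partial s}(s,t)\,\mathrm{d}t\,\mathrm{d}s,
\]
which is legitimate because $\partial_s F$ and $\partial_t\partial_s F$ are continuous, being second-order partials of a $C^2$ function composed with smooth curves. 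The chain rule gives $\partial_s F(s,t)=\langle\nabla_1 f(y+s(x-y),y+t(x-y)),\,x-y\rangle$ and then $\partial_t\partial_s F(s,t)=\langle x-y,\ \widetilde{\+H}_{y+s(x-y),\,y+t(x-y)}(x-y)\rangle$ straight from the definition of $\widetilde{\+H}$. Equating the two evaluations yields
\[
\|\phi(x)-\phi(y)\|_2^2=\int_0^1\varphi(s)\,\mathrm{d}s,\qquad \varphi(s)\coloneqq\int_0^1\big\langle x-y,\ \widetilde{\+H}_{y+s(x-y),\,y+t(x-y)}(x-y)\big\rangle\,\mathrm{d}t.
\]

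To finish, observe that $(z,z')\mapsto\widetilde{\+H}_{z,z'}$ is continuous because $f$ is $C^2$, so the integrand above is jointly continuous on $[0,1]^2$ and hence $\varphi$ is continuous on $[0,1]$. The mean value theorem for integrals then supplies $s^{\ast}\in[0,1]$ with $\int_0^1\varphi(s)\,\mathrm{d}s=\varphi(s^{\ast})$. Put $z\coloneqq y+s^{\ast}(x-y)$, a point on the segment with endpoints $x,y$; relabelling the dummy variable $t$ as $s$ and pulling the matrix out of the inner integral by linearity gives precisely
\[
\|\phi(x)-\phi(y)\|_2^2=\Big\langle x-y,\ \Big[\int_0^1\widetilde{\+H}_{z,\,y+s(x-y)}\,\mathrm{d}s\Big](x-y)\Big\rangle,
\]
as claimed.

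The argument is essentially routine; what needs care is the bookkeeping rather than any single hard step. Specifically, one must check that the whole segment $[x,y]$ lies in the set $\widetilde{\mathcal{Z}}$ on which $f$ is twice continuously differentiable — this is the only place convexity of the ball $\widetilde{\mathcal{Z}}$ is used — and one must keep the iterated integration honest: because the fundamental theorem of calculus is applied twice rather than swapping the order of a double integral, continuity of the second partials of $f$ is all that is required, with no integrability-type side conditions, and the order in which the mixed partial is taken is immaterial by Clairaut's theorem (the order used above already produces the quadratic form $\langle x-y,\widetilde{\+H}_{\cdot,\cdot}(x-y)\rangle$ without invoking any symmetry of $\widetilde{\+H}$).
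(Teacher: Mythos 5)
Your proof is correct, and it takes a genuinely different route from the paper's. The paper introduces the auxiliary scalar function $g(u)\coloneqq f(u,x)-f(u,y)$, notes $g(x)-g(y)=\|\phi(x)-\phi(y)\|_2^2$, applies the mean value theorem to $g$ along the segment to obtain the intermediate point $z$ at the first-derivative level, and only then invokes the fundamental theorem of calculus on the vector-valued map $u\mapsto\nabla_x f(z,u)$ to produce the integral of $\widetilde{\+H}$. You instead form the second-order divided difference $F(1,1)-F(1,0)-F(0,1)+F(0,0)$ of the bivariate restriction $F(s,t)=f(y+s(x-y),y+t(x-y))$, write it as a double integral of $\partial_t\partial_s F$ by two applications of the fundamental theorem of calculus, identify the integrand with the quadratic form $\langle x-y,\widetilde{\+H}_{\cdot,\cdot}(x-y)\rangle$, and then collapse the outer integral by the mean value theorem for integrals to extract $z$. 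Both arguments rest only on $f\in C^2$ and convexity of $\widetilde{\mathcal{Z}}$; the paper's is marginally shorter, while yours is more symmetric and makes explicit where each hypothesis is used. One small point of phrasing: the identification $\partial_t\partial_s F=\langle x-y,\widetilde{\+H}(x-y)\rangle$ does quietly use Clairaut's theorem to swap $\partial_{y^{(j)}}\partial_{x^{(i)}}$ with $\partial_{x^{(i)}}\partial_{y^{(j)}}$ (or, equivalently, the invariance of a quadratic form under transposition of its matrix), so it is not quite that no symmetry is invoked — but $f\in C^2$ supplies this, and it does not affect the argument.
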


\begin{proof}
Fix any $x,y$ in $\mathcal{Z}$. Observe from (\ref{eq:mercer})
and the definition of $\phi$ that for any $x,y\in\mathcal{Z}$,
\[
\|\phi(x)-\phi(y)\|_{2}^{2}=f(x,x)+f(y,y)-2f(x,y).
\]
Now define

\[
g(u)\coloneqq f(u,x)-f(u,y),
\]
and so since $f$ is symmetric,
\[
g(x)-g(y)=\|\phi(x)-\phi(y)\|_{2}^{2}.
\]
By the mean value theorem, there exists $z$ on the line segment with
end-points $x,y$ (i.e. $z\in\widetilde{\mathcal{Z}}$) such that
\begin{align*}
g(x)-g(y) & =\left\langle \nabla g(z),x-y\right\rangle \\
 & =\left\langle \nabla_{x}f(z,y)-\nabla_{x}f(z,x),x-y\right\rangle 
\end{align*}
where $\nabla g$ is the gradient of $u\mapsto g(u)$ (with $x,y$
still considered fixed) and $\nabla_{x}f(z,u)$ is the gradient of
$x\mapsto f(x,u)$ evaluated at $z$ (with $u$ considered fixed).
Now considering the vector-valued mapping $u\mapsto\nabla_{x}f(z,u)$
with $z$ fixed, we have 
\[
\nabla_{x}f(z,x)-\nabla_{x}f(z,y)=\left[\int_{0}^{1}\widetilde{\+H}_{z,y+s(x-y)}\mathrm{d}s\right](x-y).
\]
Combining the above equalities gives:
\begin{align*}
\|\phi(x)-\phi(y)\|_{2}^{2} & =\left\langle \nabla_{x}f(z,y)-\nabla_{x}f(z,x),x-y\right\rangle \\
 & =\left\langle x-y,\left[\int_{0}^{1}\widetilde{\+H}_{z,y+s(x-y)}\mathrm{d}s\right](x-y)\right\rangle .
\end{align*}
\end{proof}
\begin{lem}
\label{lem:quad_form_bound}For any matrix $\+B \in\mathbb{R}^{d\times d}$ and $z\in\mathbb{R}^{d}$
, $\left|\left\langle z,\+Bz\right\rangle \right|^{1/2}\leq\|\+B\|_{\mathrm{F}}^{1/2}\|z\|$,
where $\|\cdot\|_{\mathrm{F}}$ is the Frobenius norm.
\end{lem}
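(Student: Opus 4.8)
The plan is to reduce the statement to a single application of the Cauchy--Schwarz inequality. First I would expand the quadratic form in coordinates, writing $\left\langle z,\+B z\right\rangle = \sum_{i,j=1}^{d} \+B^{ij} z^{(i)} z^{(j)}$, and regard the right-hand side as the Euclidean inner product in $\mathbb{R}^{d^{2}}$ between the array $(\+B^{ij})_{i,j}$ and the array $(z^{(i)} z^{(j)})_{i,j}$. Applying Cauchy--Schwarz to that inner product gives
\[
\left|\left\langle z,\+B z\right\rangle\right| \;\leq\; \Bigl(\sum_{i,j=1}^{d} (\+B^{ij})^{2}\Bigr)^{1/2}\Bigl(\sum_{i,j=1}^{d} (z^{(i)})^{2} (z^{(j)})^{2}\Bigr)^{1/2}.
\]
The first factor equals $\|\+B\|_{\mathrm{F}}$ by definition of the Frobenius norm, and the second factor factorises as $\bigl(\sum_{i} (z^{(i)})^{2}\bigr)^{1/2}\bigl(\sum_{j} (z^{(j)})^{2}\bigr)^{1/2} = \|z\|^{2}$. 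Hence $\left|\left\langle z,\+B z\right\rangle\right| \leq \|\+B\|_{\mathrm{F}}\,\|z\|^{2}$, and taking square roots of both sides yields the claim.

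A coordinate-free alternative, if preferred, is to chain the Cauchy--Schwarz bound $\left|\left\langle z,\+B z\right\rangle\right| \leq \|z\|\,\|\+B z\|$ with the operator-norm inequality $\|\+B z\| \leq \|\+B\|_{\mathrm{op}}\|z\|$ (where $\|\+B\|_{\mathrm{op}}$ denotes the largest singular value of $\+B$) and the elementary fact $\|\+B\|_{\mathrm{op}} \leq \|\+B\|_{\mathrm{F}}$; the same conclusion drops out. There is no substantive obstacle in either route --- the only point needing a line of bookkeeping is the identity $\sum_{i,j=1}^{d} (z^{(i)})^{2}(z^{(j)})^{2} = \|z\|^{4}$, which is immediate upon separating the double sum. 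I would present the first (coordinate) version, since it makes the appearance of $\|\+B\|_{\mathrm{F}}$ transparent and needs nothing beyond Cauchy--Schwarz.
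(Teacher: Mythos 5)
Your proof is correct, and it takes a genuinely different route from the paper. The paper first symmetrises, writing $\left\langle z,\+B z\right\rangle =\tfrac{1}{2}\left\langle z,(\+B+\+B^{\top})z\right\rangle$, bounds this by $\|z\|^{2}\lambda_{\mathrm{max}}$ where $\lambda_{\mathrm{max}}$ is the top eigenvalue of $(\+B+\+B^{\top})/2$, then passes $\lambda_{\mathrm{max}}\leq\tfrac{1}{2}\|\+B+\+B^{\top}\|_{\mathrm{F}}\leq\|\+B\|_{\mathrm{F}}$, and finally obtains the two-sided bound by repeating the argument with $-\+B$ in place of $\+B$. Your first route short-circuits all of this: treating the quadratic form as an inner product in $\R^{d^{2}}$ between the arrays $(\+B^{ij})$ and $(z^{(i)}z^{(j)})$ and applying Cauchy--Schwarz gives the absolute-value bound in one line, with the Frobenius norm and $\|z\|^{2}$ appearing directly as the two factor norms, no symmetrisation or sign case needed. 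Your second (coordinate-free) route, $|\left\langle z,\+B z\right\rangle|\leq\|z\|\,\|\+B z\|\leq\|\+B\|_{\mathrm{op}}\|z\|^{2}\leq\|\+B\|_{\mathrm{F}}\|z\|^{2}$, is closer in spirit to the paper's since both pass through a spectral quantity dominated by $\|\cdot\|_{\mathrm{F}}$, but it avoids the symmetrisation and the separate handling of the negative side. The paper's version does have the minor virtue of exhibiting the sharper intermediate constant $\lambda_{\mathrm{max}}$ of the symmetrised matrix, but since the lemma is only invoked with the Frobenius bound, your cleaner argument loses nothing here.
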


\begin{proof}
\[
\left\langle z,\+B z\right\rangle =\frac{1}{2}\left\langle z,(\+B+\+B^{\top})z\right\rangle \leq\|z\|^{2}\lambda_{\mathrm{max}}\leq\|z\|^{2}\frac{1}{2}\|\+B+\+B^{\top}\|_{\mathrm{F}}\leq\|z\|^{2}\|\+B \|_{\mathrm{F}},
\]
where $\lambda_{\mathrm{max}}$ is the maximum eigenvalue of the symmetric
matrix $(\+B+\+B^{\top})/2$. Replacing $\+B$ by $-\+B$ and using $\|\+B\|_{\mathrm{F}}=\|-\+B\|_{\mathrm{F}}$
yields the lower bound $\left\langle z,\+B z\right\rangle \geq-\|z\|^{2}\|\+B \|_{\mathrm{F}}$.
\end{proof}
\begin{lem}
\label{lem:diff_h_bound}If Assumption~\ref{assu:c2_etc} holds, then
for any $\epsilon>0$ there exists $\delta>0$ such that for any $x,y\in\mathcal{Z}$
such that $\|x-y\|\leq\delta$ and any $\xi,z,z^{\prime}$ on the
line segment with endpoints $x,y$,
\[
\left|\left\langle x-y,(\+H_{\xi}-\widetilde{\+H}_{z,z^{\prime}})(x-y)\right\rangle \right|\leq\epsilon\|x-y\|^{2}.
\]
\end{lem}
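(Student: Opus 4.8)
The plan is to exploit uniform continuity of the second partial derivatives of $f$ on a compact set, together with the elementary quadratic-form bound of Lemma~\ref{lem:quad_form_bound}. The starting observation is that, by the definitions of $\+H_{z}$ and $\widetilde{\+H}_{z,z^{\prime}}$, we have $\+H_{z}=\widetilde{\+H}_{z,z}$ for every $z$, so the quantity to be controlled is the quadratic form in $x-y$ whose matrix is $\widetilde{\+H}_{\xi,\xi}-\widetilde{\+H}_{z,z^{\prime}}$, a difference of values of the single matrix-valued map $(u,v)\mapsto\widetilde{\+H}_{u,v}$.

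First I would record that, since $f$ is $C^{2}$ on $\widetilde{\mathcal{Z}}$ and $\widetilde{\mathcal{Z}}$ is a closed ball (hence compact and convex), each second partial derivative of $f$ is continuous on the compact set $\widetilde{\mathcal{Z}}\times\widetilde{\mathcal{Z}}$, and therefore uniformly continuous there; consequently $(u,v)\mapsto\widetilde{\+H}_{u,v}$ is uniformly continuous when $\mathbb{R}^{d\times d}$ is equipped with the Frobenius norm. Fix $\epsilon>0$. Uniform continuity then yields a $\delta>0$ such that $\|\widetilde{\+H}_{u_{1},v_{1}}-\widetilde{\+H}_{u_{2},v_{2}}\|_{\mathrm{F}}\leq\epsilon$ whenever $\|u_{1}-u_{2}\|\leq\delta$ and $\|v_{1}-v_{2}\|\leq\delta$.

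Next, take any $x,y\in\mathcal{Z}$ with $\|x-y\|\leq\delta$ and any $\xi,z,z^{\prime}$ on the segment with endpoints $x,y$ (which lies in $\widetilde{\mathcal{Z}}$ by convexity of the ball, so the second partials are defined along it). Then $\|\xi-z\|\leq\|x-y\|\leq\delta$ and $\|\xi-z^{\prime}\|\leq\|x-y\|\leq\delta$, so the modulus-of-continuity estimate applied to the pairs $(\xi,\xi)$ and $(z,z^{\prime})$ gives $\|\+H_{\xi}-\widetilde{\+H}_{z,z^{\prime}}\|_{\mathrm{F}}=\|\widetilde{\+H}_{\xi,\xi}-\widetilde{\+H}_{z,z^{\prime}}\|_{\mathrm{F}}\leq\epsilon$. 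Applying Lemma~\ref{lem:quad_form_bound} with $\+B=\+H_{\xi}-\widetilde{\+H}_{z,z^{\prime}}$ and the vector $x-y$ then yields
\[
\left|\left\langle x-y,(\+H_{\xi}-\widetilde{\+H}_{z,z^{\prime}})(x-y)\right\rangle\right|\leq\|\+H_{\xi}-\widetilde{\+H}_{z,z^{\prime}}\|_{\mathrm{F}}\,\|x-y\|^{2}\leq\epsilon\|x-y\|^{2},
\]
which is the claim.

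I do not anticipate a genuine obstacle here: the argument is just uniform continuity on a compact product space followed by a norm inequality, and the positive-definiteness half of Assumption~\ref{assu:c2_etc} is not even needed. The only points requiring a little care are ensuring the segment endpoints stay inside the domain $\widetilde{\mathcal{Z}}$ on which $f$ is $C^{2}$ — this is where convexity of $\widetilde{\mathcal{Z}}$ is used — and keeping the book-keeping between $\|x-y\|$ and the chosen modulus of continuity consistent.
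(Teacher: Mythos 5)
Your proof is correct and follows essentially the same route as the paper's: Heine--Cantor gives uniform continuity of $(u,v)\mapsto\widetilde{\+H}_{u,v}$ on the compact set $\widetilde{\mathcal{Z}}\times\widetilde{\mathcal{Z}}$, this yields a Frobenius-norm bound $\|\widetilde{\+H}_{\xi,\xi}-\widetilde{\+H}_{z,z^{\prime}}\|_{\mathrm{F}}\leq\epsilon$ when the points are close, and Lemma~\ref{lem:quad_form_bound} then converts this into the desired quadratic-form estimate. The only cosmetic difference is that the paper phrases uniform continuity entrywise (bounding each $|\widetilde{\+H}^{ij}_{\xi,\xi}-\widetilde{\+H}^{ij}_{z,z'}|$ by $\epsilon d^{-1}$) before passing to the Frobenius norm, while you work directly with the matrix-valued map.
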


\begin{proof}
For each $i,j$, since $(z,z^{\prime})\mapsto\widetilde{\+H}_{z,z^{\prime}}^{ij}$
is assumed continuous on $\widetilde{\mathcal{Z}}\times\widetilde{\mathcal{Z}}$,
and $\widetilde{\mathcal{Z}}$ is compact, by the Heine-Cantor theorem
$(z,z^{\prime})\mapsto\widetilde{\+H}_{z,z^{\prime}}^{ij}$ is in fact
uniformly continuous on $\widetilde{\mathcal{Z}}\times\widetilde{\mathcal{Z}}$.
Fix any $\epsilon>0$. Using this uniform continuity, there exists
$\delta>0$ such that for any $x,y\in\mathcal{Z}$, if $\|x-y\|\leq\delta$,
then for any $\xi,z,z^{\prime}$ on the line-segment with end-points
$x,y$, 
\[
\max_{i,j=1,\dots,d}\left|\widetilde{\+H}_{\xi,\xi}^{i,j}-\widetilde{\+H}_{z,z^{\prime}}^{i,j}\right|\leq\epsilon d^{-1},
\]
and so in turn 
\[
\left\Vert \widetilde{\+H}_{\xi,\xi}-\widetilde{\+H}_{z,z^{\prime}}\right\Vert _{\text{F}}\leq\epsilon,
\]
where $\left\Vert \cdot\right\Vert _{\text{F}}$ is the Frobenius
norm. Observing that $\widetilde{\+H}_{\xi,\xi}=\+H_{z}$, the result
then follows from Lemma \ref{lem:quad_form_bound}.
\end{proof}
\begin{prop}
\label{prop:lipschitz}If Assumptions~\ref{assu:injective} and \ref{assu:c2_etc}
hold, $\phi$ and $\phi^{-1}$ are each Lipschitz continuous with
respect to the norms $\|\cdot\|$ on $\mathcal{Z}$ and $\|\cdot\|_{2}$
on $\mathcal{M}$.
\end{prop}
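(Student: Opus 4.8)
The plan is to establish the two one-sided bounds separately: the estimate $\|\phi(x)-\phi(y)\|_{2}\le C\|x-y\|$ is routine, whereas the reverse estimate $\|x-y\|\le C^{\prime}\|\phi(x)-\phi(y)\|_{2}$, which is exactly Lipschitz continuity of $\phi^{-1}$, requires a local-to-global argument.

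For the first bound, apply Lemma~\ref{lem:MVT} to write $\|\phi(x)-\phi(y)\|_{2}^{2}=\langle x-y,\+B(x-y)\rangle$ with $\+B\coloneqq\int_{0}^{1}\widetilde{\+H}_{z,y+s(x-y)}\,\mathrm{d}s$, and bound this quadratic form using Lemma~\ref{lem:quad_form_bound} by $\|\+B\|_{\mathrm{F}}\|x-y\|^{2}$. Since $f$ is $C^{2}$ on the compact set $\widetilde{\mathcal{Z}}$, the entries of $\widetilde{\+H}_{z,z^{\prime}}$ are continuous, hence bounded, on $\widetilde{\mathcal{Z}}\times\widetilde{\mathcal{Z}}$, so $\|\+B\|_{\mathrm{F}}\le C_{0}$ for a constant $C_{0}$ independent of $x,y$; taking square roots gives $\|\phi(x)-\phi(y)\|_{2}\le C_{0}^{1/2}\|x-y\|$.

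For the reverse bound, first record a uniform ellipticity constant: since $f$ is $C^{2}$, $z\mapsto\+H_{z}$ is continuous on $\mathcal{Z}$, hence so is $z\mapsto\lambda_{\min}(\+H_{z})$, which is strictly positive on the compact set $\mathcal{Z}$ by Assumption~\ref{assu:c2_etc}; thus $\lambda_{0}\coloneqq\inf_{z\in\mathcal{Z}}\lambda_{\min}(\+H_{z})>0$. Now fix $\epsilon\coloneqq\lambda_{0}/2$ and let $\delta>0$ be the radius produced by Lemma~\ref{lem:diff_h_bound}. For $x,y\in\mathcal{Z}$ with $\|x-y\|\le\delta$, combine Lemma~\ref{lem:MVT} with Lemma~\ref{lem:diff_h_bound} applied \emph{at the endpoint} $\xi=x$ (which lies in $\mathcal{Z}$) to get, for every $s\in[0,1]$, $\langle x-y,\widetilde{\+H}_{z,y+s(x-y)}(x-y)\rangle\ge\langle x-y,\+H_{x}(x-y)\rangle-\epsilon\|x-y\|^{2}\ge(\lambda_{0}-\epsilon)\|x-y\|^{2}$; integrating over $s$ yields $\|\phi(x)-\phi(y)\|_{2}^{2}\ge(\lambda_{0}/2)\|x-y\|^{2}$. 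For the remaining pairs, the set $K\coloneqq\{(x,y)\in\mathcal{Z}\times\mathcal{Z}:\|x-y\|\ge\delta\}$ is compact, the map $(x,y)\mapsto\|\phi(x)-\phi(y)\|_{2}$ is continuous (by the first bound) and, by Lemma~\ref{lem:injective}, strictly positive on $K$, so it attains a minimum $m>0$ there; since $\|x-y\|\le\operatorname{diam}(\mathcal{Z})=:R$ on $K$, we get $\|\phi(x)-\phi(y)\|_{2}\ge m\ge(m/R)\|x-y\|$. Combining the two regimes gives $\|\phi(x)-\phi(y)\|_{2}\ge\min\{(\lambda_{0}/2)^{1/2},\,m/R\}\,\|x-y\|$ for all $x,y\in\mathcal{Z}$, i.e. $\phi^{-1}$ is Lipschitz.

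The main obstacle is the lower bound, and within it the fact that the mean-value point $z$ delivered by Lemma~\ref{lem:MVT} is only guaranteed to lie in $\widetilde{\mathcal{Z}}$, where positive definiteness of $\+H_{z}$ is not assumed (as $\mathcal{Z}$ need not be convex). The key move is to use Lemma~\ref{lem:diff_h_bound} to compare $\widetilde{\+H}_{z,\cdot}$ against $\+H_{x}$ at the endpoint $x\in\mathcal{Z}$ rather than against $\+H_{z}$, after which uniform ellipticity applies; the passage from the local estimate (valid only for $\|x-y\|\le\delta$) to a global one is then a standard compactness argument resting on injectivity.
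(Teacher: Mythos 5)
Your proof is correct and takes a genuinely different, and in places tighter, route than the paper's. For the forward direction you obtain a single global Lipschitz constant directly from Lemma~\ref{lem:quad_form_bound} and boundedness of the entries of $\widetilde{\+H}$ on the compact set $\widetilde{\mathcal{Z}}\times\widetilde{\mathcal{Z}}$; the paper instead runs the same two-regime (local estimate plus far-field compactness) argument it later uses for $\phi^{-1}$. For the inverse direction, you split on $\|x-y\|$ and derive the bi-Lipschitz inequality entirely on $\mathcal{Z}\times\mathcal{Z}$, using Lemma~\ref{lem:injective} plus compactness to handle the far-field pairs; the paper splits on $\|a-b\|_{2}$ and must first establish continuity, hence uniform continuity, of $\phi^{-1}$ (via the standard ``continuous bijection from compact to Hausdorff is a homeomorphism'' result) before it can invoke the local estimate. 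Your route is more elementary in that it never needs this preliminary homeomorphism fact. Most importantly, you correctly flag and handle a subtlety the paper glosses over: the mean-value point $z$ from Lemma~\ref{lem:MVT} lies in $\widetilde{\mathcal{Z}}$ but not necessarily in $\mathcal{Z}$, so $\+H_{z}$ need not be positive definite and $\lambda_{z}^{\max},\lambda_{z}^{\min}$ are not a priori controlled by $\lambda^{+},\lambda^{-}$ (defined as sup/inf over $\mathcal{Z}$ only). By taking $\xi=x$ in Lemma~\ref{lem:diff_h_bound} you compare against $\+H_{x}$ with $x\in\mathcal{Z}$, where uniform ellipticity is guaranteed, sidestepping the issue cleanly; the paper's version, which centres on $\+H_{z}$, implicitly relies on an additional uniform-continuity step (shrinking $\delta_{1}$ further so that $\lambda_{z}^{\max}$ is within $\epsilon_{1}$ of $\lambda_{x}^{\max}$) that it does not spell out.
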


\begin{proof}
As a preliminary note that for any $z\in\mathcal{Z}$, $\+H_{z}$ is
symmetric and positive-definite under Assumption~\ref{assu:c2_etc},
and let $\lambda_{z}^{\text{min}},\lambda_{z}^{\text{max}}$ be the
minimum and maximum eigenvalues of the matrix $\+H_{z}$. Since $\lambda_{z}^{\text{max}}=\|\+H_{z}\|_{\text{sp}}$,
the spectral norm of $\+H_{z}$, and the reverse triangle inequality
for this norm states $|\|\+H_{z}\|_{\text{sp}}-\|\+H_{z^{\prime}}\|_{\text{sp}}|\leq\|\+H_{z}-\+H_{z^{\prime}}\|_{\text{sp}}$,
the continuity in $z$ of the elements of $\+H_{z}$ under Assumption~\ref{assu:c2_etc} implies continuity of $z\mapsto\lambda_{z}^{\text{max}}$.
Similar consideration of $\lambda_{z}^{\text{min}}=\|\+H_{z}^{-1}\|_{\text{sp}}^{-1}$
together with 
\[
\|\+H_{z}^{-1}-\+H_{z^{\prime}}^{-1}\|_{\text{sp}}\leq\|\+H_{z}^{-1}\+H_{z^{\prime}}-\+I \|_{\text{sp}}\|\+H_{z^{\prime}}^{-1}\|_{\text{sp}}\leq\|\+H_{z}^{-1}\|_{\text{sp}}\|\+H_{z^{\prime}}-\+H_{z}\|_{\text{sp}}\|\+H_{z^{\prime}}^{-1}\|_{\text{sp}}
\]
shows that $z\mapsto\lambda_{z}^{\text{min}}$ is continuous. Due
to the compactness of $\mathcal{Z}$, we therefore find that $\lambda^{+}\coloneqq\sup_{z\in\mathcal{Z}}\lambda_{z}^{\text{max}}<\infty$,
and $\lambda^{-}\coloneqq\inf_{z\in\mathcal{Z}}\lambda_{z}^{\text{min}}>0$.

Our next objective in the proof of the proposition is to establish the Lipschitz continuity of $\phi$.
As a first step towards this, note that it follows from the identity
\[
\|\phi(x)-\phi(y)\|_{2}^{2}=f(x,x)+f(y,y)-2f(x,y),
\]
that the continuity of $(x,y)\mapsto f(x,y)$ implies continuity in
$\ell_{2}$ of $x\mapsto\phi(x)$. Now fix any $\epsilon_{1}>0$ and
consider any $x,y\in\mathcal{Z}$. By combining Lemmas \ref{lem:MVT}
and \ref{lem:diff_h_bound} there exists $\delta_{1}>0$ such that
if $\|x-y\|\leq\delta_{1}$, there exists $z$ on the line segment
with end-points $x,y$ such that:
\begin{align}
\|\phi(x)-\phi(y)\|_{2}^{2} & =\left\langle x-y,\left[\int_{0}^{1}\widetilde{\+H}_{z,y+s(x-y)}\mathrm{d}t\right](x-y)\right\rangle \nonumber \\
 & =\left\langle x-y,\+H_{z}(x-y)\right\rangle +\int_{0}^{1}\left\langle x-y,\left[\widetilde{\+H}_{z,y+s(x-y)}-\+H_{z}\right](x-y)\right\rangle \mathrm{d}s\label{eq:sum_of_quad_forms}\\
 & \leq(\lambda^{+}+\epsilon_{1})\|x-y\|^{2}.\label{eq:phi_lip_1}
\end{align}
On the other hand if $\|x-y\|>\delta_{1}$, 
\begin{equation}
\frac{\|\phi(x)-\phi(y)\|_{2}}{\|x-y\|}\leq c_{1}\delta_{1}^{-1},\label{eq:phi_lip_2}
\end{equation}
where $c_{1}\coloneqq\sup_{x,y\in\mathcal{Z}}\|\phi(x)-\phi(y)\|_{2}$
is finite since $\mathcal{Z}$ is compact and $\phi$ has already
been proved to be continuous in $\ell_{2}$. Combining (\ref{eq:phi_lip_1})
and (\ref{eq:phi_lip_2}) we obtain
\[
\|\phi(x)-\phi(y)\|_{2}\leq\left[c_{1}\delta_{1}^{-1}\vee(\lambda^{+}+\epsilon_{1})^{1/2}\right]\|x-y\|,\qquad\forall x,y\in\mathcal{Z}.
\]

It remains to prove Lipschitz continuity of $\phi^{-1}$. Fix $\epsilon_{2}\in(0,\lambda^{-})$.
Since $\mathcal{Z}$ is compact and $\phi$ is continuous, $\phi^{-1}$
is continuous on $\mathcal{M}$ \cite[Prop. 13.26]{sutherland2009introduction},
and then also uniformly continuous by the Heine-Cantor Theorem since $\mathcal{M}$ is compact. Putting this uniform continuity of $\phi^{-1}$
together with Lemmas \ref{lem:MVT} and \ref{lem:diff_h_bound}, via
the identity (\ref{eq:sum_of_quad_forms}), there exists $\delta_{2}>0$
such that for any $a,b\in\mathcal{M}$, if $\|a-b\|_{2}\leq\delta_{2}$
then
\[
\|a-b\|_{2}^{2}\geq(\lambda^{-}-\epsilon_{2})\|\phi^{-1}(a)-\phi^{-1}(b)\|^{2}.
\]
 On the other hand, if $\|a-b\|_{2}>\delta_{2}$, 
\[
\frac{\|\phi^{-1}(a)-\phi^{-1}(b)\|}{\|a-b\|_{2}}\leq c_{2}\delta_{2}^{-1},
\]
where $c_{2}\coloneqq\sup_{x,y\in\mathcal{Z}}\|x-y\|$ is finite since
$\mathcal{Z}$ is compact. Therefore
\[
\|\phi^{-1}(a)-\phi^{-1}(b)\|\leq\left[c_{2}\delta_{2}^{-1}\vee(\lambda^{-}-\epsilon_{2})^{-1/2}\right]\|a-b\|_{2},\qquad\forall a,b\in\mathcal{M}.
\]
\end{proof}
\begin{lem}
\label{lem:finite length}If Assumptions \ref{assu:injective} and
\ref{assu:c2_etc} hold, then for any $a,b\in\mathcal{M}$ and a path
$\gamma$ in $\mathcal{M}$ with end-points $a,b$ such that $\ell(\gamma)<\infty$,
the mapping $\eta:[0,1]\to\mathcal{Z}$ defined by $\eta_{t}\coloneqq\phi^{-1}(\gamma_{t})$
is a path in $\mathcal{Z}$ with end-points $\phi^{-1}(a),\phi^{-1}(b)$,
and $l(\eta)<\infty$.
\end{lem}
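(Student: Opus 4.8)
The plan is to deduce the whole statement directly from the bi-Lipschitz property of $\phi$ already established in Proposition~\ref{prop:lipschitz}. First I would observe that $\eta=\phi^{-1}\circ\gamma$ is a composition of the continuous map $\gamma:[0,1]\to\mathcal{M}$ with the map $\phi^{-1}:\mathcal{M}\to\mathcal{Z}$, which is continuous (indeed Lipschitz) by Proposition~\ref{prop:lipschitz}; hence $\eta$ is continuous as a map $[0,1]\to\mathcal{Z}$. Moreover $\eta_0=\phi^{-1}(\gamma_0)=\phi^{-1}(a)$ and $\eta_1=\phi^{-1}(\gamma_1)=\phi^{-1}(b)$, so $\eta$ is a path in $\mathcal{Z}$ with the stated end-points, and since $\phi:\mathcal{Z}\to\mathcal{M}$ is a bijection the values $\eta_t$ genuinely lie in $\mathcal{Z}$, so that $\chi(\eta,\mathcal{P})$ is well defined with respect to $\|\cdot\|$.

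For finiteness of $l(\eta)$, let $L$ be a Lipschitz constant for $\phi^{-1}$, so that $\|\phi^{-1}(u)-\phi^{-1}(v)\|\le L\|u-v\|_2$ for all $u,v\in\mathcal{M}$. Then for any partition $\mathcal{P}=(t_0,\ldots,t_n)$,
\[
\chi(\eta,\mathcal{P})=\sum_{k=1}^{n}\|\eta_{t_k}-\eta_{t_{k-1}}\|=\sum_{k=1}^{n}\|\phi^{-1}(\gamma_{t_k})-\phi^{-1}(\gamma_{t_{k-1}})\|\le L\sum_{k=1}^{n}\|\gamma_{t_k}-\gamma_{t_{k-1}}\|_2=L\,\chi(\gamma,\mathcal{P})\le L\,l(\gamma).
\]
Taking the supremum over all partitions $\mathcal{P}$ gives $l(\eta)\le L\,l(\gamma)<\infty$, which completes the argument.

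I do not expect a genuine obstacle here: the lemma is a routine corollary of Proposition~\ref{prop:lipschitz}. The only points requiring minor care are (i) confirming that $\eta$ takes values in $\mathcal{Z}$ so that it is a path \emph{in} $\mathcal{Z}$ in the sense of the definitions in Section~\ref{sec:setup}, which follows from $\phi$ being a bijection onto $\mathcal{M}$, and (ii) passing from the per-partition bound to the supremum, which is immediate since $L\,l(\gamma)$ bounds $\chi(\eta,\mathcal{P})$ uniformly in $\mathcal{P}$.
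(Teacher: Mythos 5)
Your argument is correct and matches the paper's proof: both deduce continuity of $\eta=\phi^{-1}\circ\gamma$ and its end-points from Proposition~\ref{prop:lipschitz}, and then use the Lipschitz constant of $\phi^{-1}$ to bound $\chi(\eta,\mathcal{P})\le L\,\chi(\gamma,\mathcal{P})\le L\,l(\gamma)$ uniformly in $\mathcal{P}$. You have merely written out the per-partition estimate that the paper leaves implicit in the phrase ``due to the definition of path-length.''
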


\begin{proof}
By Proposition \ref{prop:lipschitz}, $\phi^{-1}$ is continuous,
which combined with the continuity of $t\mapsto\gamma_{t}$ implies
continuity of $t\mapsto\phi^{-1}(\gamma_{t})$, so $\eta$ is indeed a path
in $\mathcal{Z}$, and the end points of $\eta$ are clearly $\phi^{-1}(a),\phi^{-1}(b)$.
Proposition \ref{prop:lipschitz} establishes that moreover $\phi^{-1}$
is Lipschitz continuous, and then $l(\gamma)<\infty$ implies $l(\eta)<\infty$
due to the definition of path-length.
\end{proof}
\begin{lem}
\label{lem:ab_lower_bound}For any $a\geq0$ and $b$ such that $|b|\leq a$,
\[
|a|^{1/2}-|b|^{1/2}\leq(a+b)^{1/2}\leq|a|^{1/2}+|b|^{1/2}.
\]
\end{lem}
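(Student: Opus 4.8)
First I would note the two elementary facts on which everything rests: (i) since $a\ge 0$ we have $|a|=a$, and since $|b|\le a$ we have $a+b\ge 0$, so all the square roots appearing are of nonnegative quantities and are well defined; (ii) the square-root function is nondecreasing on $[0,\infty)$ and is subadditive there, i.e. $(u+v)^{1/2}\le u^{1/2}+v^{1/2}$ for all $u,v\ge 0$. The subadditivity in (ii) is itself obtained in one line by squaring: $\big(u^{1/2}+v^{1/2}\big)^2=u+v+2u^{1/2}v^{1/2}\ge u+v$, and then taking square roots using monotonicity.

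For the right-hand inequality, the plan is simply to write $a+b\le a+|b|$, apply monotonicity of the square root to get $(a+b)^{1/2}\le(a+|b|)^{1/2}$, and then apply subadditivity to conclude $(a+|b|)^{1/2}\le a^{1/2}+|b|^{1/2}=|a|^{1/2}+|b|^{1/2}$.

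For the left-hand inequality, the plan is to observe $a=(a+b)+(-b)\le(a+b)+|b|$, where both summands $a+b$ and $|b|$ are nonnegative; subadditivity of the square root then gives $a^{1/2}\le(a+b)^{1/2}+|b|^{1/2}$, and rearranging yields $|a|^{1/2}-|b|^{1/2}=a^{1/2}-|b|^{1/2}\le(a+b)^{1/2}$.

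There is no real obstacle here: the only things to be careful about are checking nonnegativity of $a+b$ before writing $(a+b)^{1/2}$, and making sure the decomposition $a=(a+b)+(-b)$ is split into genuinely nonnegative pieces so that subadditivity applies. An alternative, equivalent route would be to square both candidate inequalities directly and reduce each to $b-|b|\le 2a^{1/2}|b|^{1/2}$ (trivially true, as $b-|b|\le 0$) and $|b|-b\le 2a^{1/2}|b|^{1/2}$ (true because $|b|-b\le 2|b|$ and $|b|\le a$); I would present whichever is shorter, but the subadditivity argument above is the cleanest.
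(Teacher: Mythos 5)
Your proof is correct and rests on the same core fact as the paper's, namely the subadditivity $(u+v)^{1/2}\le u^{1/2}+v^{1/2}$ for $u,v\ge0$ (which the paper derives in-line via the triangle inequality in $\mathbb{R}^2$ for the lower bound, and via $(c+d)^2-2cd\le(c+d)^2$ for the upper bound). The only difference is a mild streamlining in the lower bound: you decompose $a\le(a+b)+|b|$ and apply subadditivity once, whereas the paper first proves $(a-|b|)^{1/2}\ge a^{1/2}-|b|^{1/2}$ and then separately invokes monotonicity to pass from $a-|b|$ to $a+b$; your route reaches $(a+b)^{1/2}$ directly and saves a step.
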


\begin{proof}
First prove the lower bound. For any $a,b$ as in the statement, let
$c=a-|b|$, so that $c\geq0$, and set $x=|b|^{1/2}$ and $y=c^{1/2}$.
Since $x,y\geq0$, application of the Euclidean triangle inequality
in $\mathbb{R}^{2}$ to the pair of vectors $[x\;0]^{\top}$, $[0\;y]^{\top}$
gives the fact: $(x^{2}+y^{2})^{1/2}\leq x+y$, hence $a^{1/2}=(|b|+c)^{1/2}\leq|b|^{1/2}+c^{1/2}=|b|^{1/2}+(a-|b|)^{1/2},$
or equivalently:
\begin{equation}
(a-|b|)^{1/2}\geq a^{1/2}-|b|^{1/2}.\label{eq:ab_lower_bound_1}
\end{equation}
By the reverse triangle inequality and the assumptions on $a$ and
$b$, 
\begin{equation}
(a+b)^{1/2}=|a+b|^{1/2}\geq(a-|b|)^{1/2}.\label{eq:ab_lower_bound_2}
\end{equation}
Combining (\ref{eq:ab_lower_bound_1}) and (\ref{eq:ab_lower_bound_2})
completes the proof of the lower bound in the statement.

For the upper-bound in the statement, let $c=a^{1/2}$ and $d=|b|^{1/2}$.
Then 
\[
a+b\leq c^{2}+d^{2}=(c+d)^{2}-2cd\leq(c+d)^{2},
\]
which implies
\[
(a+b)^{1/2}\leq a^{1/2}+|b|^{1/2}
\]
as required.
\end{proof}
\begin{proof}
[Proof of Theorem \ref{thm:generic}]By Lemma \ref{lem:injective},
$\phi$ is injective; by Proposition \ref{prop:lipschitz}, $\text{\ensuremath{\phi}}$
and its inverse on $\mathcal{M}$, namely $\phi^{-1}$, are Lipschitz;
and by Lemma \ref{lem:finite length}, for $\gamma$ and $\eta$ as
in the statement, $\eta$ is a path as claimed with $l(\eta)<\infty$.

For the remainder of the proof, fix any $\epsilon>0$. By the definition
of the path-length $l(\gamma)$, there exists a partition $\widetilde{\mathcal{P}}_{\epsilon}$
such that 
\begin{equation}
l(\gamma)-\epsilon/2\leq\chi(\gamma,\widetilde{\mathcal{P}}_{\epsilon})\leq l(\gamma).\label{eq:P_eps}
\end{equation}

Let $\mathcal{P}=(t_{0},\ldots,t_{n})$ be any partition, and fix
any $k$ such that $1\leq k\leq n$. By Lemma \ref{lem:MVT} there
exists $z$ on the line segment with end-points $\eta_{t_{k-1}},\eta_{t_{k}}$
such that 
\begin{align}
\|\gamma_{t_{k}}-\gamma_{t_{k-1}}\|_{2}^{2} & =\|\phi(\eta_{t_{k}})-\phi(\eta_{t_{k-1}})\|_{2}^{2}\nonumber\\
 & =\left\langle \eta_{t_{k}}-\eta_{t_{k-1}},\left[\int_{0}^{1}\widetilde{\+H}_{z,\eta_{t_{k-1}}+s(\eta_{t_{k}}-\eta_{t_{k-1}})}\mathrm{d}s\right](\eta_{t_{k}}-\eta_{t_{k-1}})\right\rangle\label{eq:a_k_plus_b_k} \\
 & =a_{k}+b_{k}
\end{align}
where
\begin{align*}
 & a_{k}\coloneqq\left\langle \eta_{t_{k}}-\eta_{t_{k-1}},\+H_{\eta_{t_{k-1}}}(\eta_{t_{k}}-\eta_{t_{k-1}})\right\rangle \\
 & b_{k}\coloneqq\int_{0}^{1}\left\langle \eta_{t_{k}}-\eta_{t_{k-1}},\left[\widetilde{\+H}_{z,\eta_{t_{k-1}}+s(\eta_{t_{k}}-\eta_{t_{k-1}})}-\+H_{\eta_{t_{k-1}}}\right](\eta_{t_{k}}-\eta_{t_{k-1}})\right\rangle \mathrm{d}s.
\end{align*}
Under Assumption \ref{assu:c2_etc}, $\+H_{\eta_{t_{k-1}}}$ is positive-definite,
so $a_{k}\geq0$, and by \eqref{eq:a_k_plus_b_k},
$a_k + b_k \geq 0$,  so we must have $|b_{k}|\leq a_{k}$. Lemma \ref{lem:ab_lower_bound} then
gives 
\[
a_{k}^{1/2}-|b_{k}|^{1/2}\leq(a_{k}+b_{k})^{1/2}\leq a_{k}^{1/2}+|b_{k}|^{1/2},
\]
 hence: 
\begin{equation}
\left|\|\gamma_{t_{k}}-\gamma_{t_{k-1}}\|_{2}-a_{k}^{1/2}\right|\leq|b_{k}|^{1/2}.\label{eq:gamma_minus_a_bound}
\end{equation}

By Lemma \ref{lem:diff_h_bound}, there exists $\delta>0$ such that
\begin{equation}
\max_{k=1,\ldots,n}\|\eta_{t_{k}}-\eta_{t_{k-1}}\|\leq\delta\quad\Rightarrow\quad|b_{k}|^{1/2}\leq\frac{\epsilon}{2}\frac{1}{l(\eta)}\|\eta_{t_{k}}-\eta_{t_{k-1}}\|,\quad\forall1\leq k\leq n.\label{eq:mesh_implication}
\end{equation}
Since $\eta$ is a path, it is continuous on the compact set $[0,1]$,
and then in fact uniformly continuous by the Heine-Cantor Theorem.
Hence there exists a suitably fine partition $\mathcal{P}_{\epsilon,\delta}\supseteq\widetilde{\mathcal{P}}_{\epsilon}$
such that if $\mathcal{P}=(t_{0},\ldots,t_{n})\supseteq\mathcal{P}_{\epsilon,\delta}$,
$\max_{k=1,\ldots,n}\|\eta_{t_{k}}-\eta_{t_{k-1}}\|\leq\delta$ and
in turn from (\ref{eq:mesh_implication}),
\begin{equation}
\sum_{k=1}^{n}|b_{k}|^{1/2}\leq\frac{\epsilon}{2}\frac{1}{l(\eta)}\sum_{k=1}^{n}\|\eta_{t_{k}}-\eta_{t_{k-1}}\|\leq\frac{\epsilon}{2},\label{eq:beta_bound}
\end{equation}
where the final inequality holds due to the definition of $l(\eta)$
as the length of $\eta$.

Combining (\ref{eq:gamma_minus_a_bound}) and (\ref{eq:beta_bound}),
if again $\mathcal{P}\supseteq\mathcal{P}_{\epsilon,\delta}$,
\begin{align}
\left|\chi(\gamma,\mathcal{P})-\sum_{k=1}^{n}a_{k}^{1/2}\right| & =\left|\sum_{k=1}^{n}\|\gamma_{t_{k}}-\gamma_{t_{k-1}}\|_{2}-a_{k}^{1/2}\right|\nonumber \\
 & \leq\sum_{k=1}^{n}|b_{k}|^{1/2}\nonumber \\
 & \leq\frac{\epsilon}{2}.\label{eq:S_minus_sum_a}
\end{align}
Recalling from (\ref{eq:P_eps}) the defining property of $\widetilde{\mathcal{P}}_{\epsilon}$
and using $\mathcal{P}\supseteq\mathcal{P}_{\epsilon,\delta}\supseteq\widetilde{\mathcal{P}}_{\epsilon}$,
the triangle inequality for the $\|\cdot\|_{2}$ norm gives
\[
l(\gamma)-\frac{\epsilon}{2}\leq\chi(\gamma,\widetilde{\mathcal{P}}_{\epsilon})\leq\chi(\gamma,\mathcal{P})\leq l(\gamma).
\]
Combined with (\ref{eq:S_minus_sum_a}), we finally obtain that if
$\mathcal{P}\supseteq\mathcal{P}_{\epsilon,\delta}$,
\[
l(\gamma)-\epsilon\leq\sum_{k=1}^{n}a_{k}^{1/2}\leq l(\gamma)+\frac{\epsilon}{2}
\]
and the proof of (\ref{eq:path_length}) is completed by taking $\mathcal{P}_{\epsilon}$
as appears in the statement to be $\mathcal{P}_{\epsilon,\delta}$.

The first equality in (\ref{eq:riemannian_path_length}) can be proved
by a standard argument - e.g., \cite[p.137]{rudin1976principles}.
The second inequality in (\ref{eq:riemannian_path_length}) is proved
by passing to the limit of the summation in (\ref{eq:path_length})
along any sequence of partitions $\mathcal{P}^{(m)}=(t_{0}^{(m)},\ldots,t_{n(m)}^{(m)})$,
$m\geq1$, with $\mathcal{P}^{(m)}\supseteq\mathcal{P}_{\epsilon,\delta}$
such that $\lim_{m\to\infty}\max_{k=1,\ldots,n(m)}|t_{k}^{(m)}-t_{k-1}^{(m)}|=0$.
\end{proof}

\subsection{Deriving geodesic distances from \eqref{eq:path_length} rather than from \eqref{eq:riemannian_path_length}}
Recall from Section~\ref{sec:shortest_paths} that the general strategy to derive the geodesic distance associated with each family of kernels (translation invariant, inner-product, additive)  is: 
\begin{enumerate}[(i)]
    \item identify a lower bound on $l(\gamma)$ which holds over all paths $\gamma$ in $\mathcal{M}$ which have generic end-points $a,b\in\mathcal{M}$ in common, then
    \item show there exists a path whose length is equal to this lower bound.
\end{enumerate}
In Section~\ref{sec:shortest_paths} this strategy was executed for each family of kernels starting from the expression for $l(\gamma)$ given in \eqref{eq:riemannian_path_length}. In the proofs of Lemmas \ref{lem:trans_inv}-\ref{lem:additive} below we show how step (i) is performed if we start not from \eqref{eq:riemannian_path_length} but rather from \eqref{eq:path_length}, the latter being more general because continuous differentiability of the paths is relaxed to continuity.  The key message of these three lemmas regarding step (i) is that we obtain exactly the same lower bounds on $l(\gamma)$ as are derived from \eqref{eq:riemannian_path_length} in Section~\ref{sec:shortest_paths}. The reader is directed to Section~\ref{sec:shortest_paths} for the details of how Assumption \ref{assu:c2_etc} is verified for each family of kernels; to avoid repetition we don't re-state all those details here.

\begin{lem}\label{lem:trans_inv}
Consider the family of translation invariant kernels described in Section~\ref{sec:shortest_paths} and let $\+G$ be as defined there. For any $a,b\in\mathcal{M}$ and any path $\gamma\in\mathcal{M}$ with end-points $a,b$,
\[
l(\gamma)\geq \|\+G^{\top}[\phi^{-1}(b)-\phi^{-1}(a)]\|.
\]
If we define $\tilde{\eta}$ to be the path in  $\mathcal{Z}$ given by
\[
\tilde{\eta}_{t}\coloneqq\phi^{-1}(a)+t[\phi^{-1}(b)-\phi^{-1}(a)],\quad t\in[0,1],
\]
then $\tilde{\gamma}$ defined by $\tilde{\gamma}_t \coloneqq \phi(\tilde{\eta}_t)$ is a path in $\mathcal{M}$ with end-points $a,b$ and $l(\tilde{\gamma})=\|\+G^{\top}[\phi^{-1}(b)-\phi^{-1}(a)]\|.$
\end{lem}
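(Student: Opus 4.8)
The plan is to derive both assertions directly from \eqref{eq:path_length} in Theorem~\ref{thm:generic}, rather than from \eqref{eq:riemannian_path_length}, so that no differentiability of $\gamma$ or $\tilde\gamma$ is needed. The single fact I will exploit throughout is the one recorded in Section~\ref{sec:shortest_paths}: for this family of kernels $\+H_z=\+G\+G^{\top}$ is constant in $z$, so that $\langle w,\+H_zw\rangle^{1/2}=\|\+G^{\top}w\|$ for every $w\in\R^d$.

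For the lower bound, if $l(\gamma)=\infty$ the inequality is trivial, so I may assume $l(\gamma)<\infty$ and set $\eta_t\coloneqq\phi^{-1}(\gamma_t)$, which by Theorem~\ref{thm:generic} is a path in $\mathcal{Z}$ with end-points $\phi^{-1}(a),\phi^{-1}(b)$. Fix $\epsilon>0$ and take the partition $\mathcal{P}_\epsilon$ supplied by the theorem; for any refinement $\mathcal{P}=(t_0,\ldots,t_n)$, \eqref{eq:path_length} together with $\+H_z=\+G\+G^{\top}$ gives $l(\gamma)\geq\sum_{k=1}^{n}\|\+G^{\top}(\eta_{t_k}-\eta_{t_{k-1}})\|-\epsilon$. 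Applying the triangle inequality to the telescoping sum yields $\sum_{k=1}^{n}\|\+G^{\top}(\eta_{t_k}-\eta_{t_{k-1}})\|\geq\|\+G^{\top}(\eta_1-\eta_0)\|=\|\+G^{\top}[\phi^{-1}(b)-\phi^{-1}(a)]\|$, hence $l(\gamma)\geq\|\+G^{\top}[\phi^{-1}(b)-\phi^{-1}(a)]\|-\epsilon$, and letting $\epsilon\downarrow0$ finishes this part.

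For the achieving path, convexity of $\mathcal{Z}$ shows $\tilde\eta_t\in\mathcal{Z}$ for all $t$, and being affine it is continuous with end-points $\phi^{-1}(a),\phi^{-1}(b)$; by Proposition~\ref{prop:lipschitz} (continuity of $\phi$) the map $\tilde\gamma_t\coloneqq\phi(\tilde\eta_t)$ is then a path in $\mathcal{M}$ with $\tilde\gamma_0=a$, $\tilde\gamma_1=b$, and $\phi^{-1}(\tilde\gamma_t)=\tilde\eta_t$ by injectivity of $\phi$ (Lemma~\ref{lem:injective}). Before invoking Theorem~\ref{thm:generic} on $\tilde\gamma$ I check $l(\tilde\gamma)<\infty$: if $L$ is a Lipschitz constant for $\phi$, then for every partition $\mathcal{P}$, $\chi(\tilde\gamma,\mathcal{P})\leq L\sum_k\|\tilde\eta_{t_k}-\tilde\eta_{t_{k-1}}\|=L\|\phi^{-1}(b)-\phi^{-1}(a)\|$, since each displacement $\tilde\eta_{t_k}-\tilde\eta_{t_{k-1}}$ equals $(t_k-t_{k-1})[\phi^{-1}(b)-\phi^{-1}(a)]$. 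Then for any $\epsilon>0$, \eqref{eq:path_length} applied to $\tilde\gamma,\tilde\eta$ with $\+H_z=\+G\+G^{\top}$ gives a partition beyond which $\bigl|l(\tilde\gamma)-\sum_k\|\+G^{\top}(\tilde\eta_{t_k}-\tilde\eta_{t_{k-1}})\|\bigr|\leq\epsilon$; because the displacements are all nonnegative scalar multiples of the fixed vector $\phi^{-1}(b)-\phi^{-1}(a)$, the triangle inequality is an equality and the sum equals $\|\+G^{\top}[\phi^{-1}(b)-\phi^{-1}(a)]\|$ exactly. Letting $\epsilon\downarrow0$ gives $l(\tilde\gamma)=\|\+G^{\top}[\phi^{-1}(b)-\phi^{-1}(a)]\|$.

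The only point requiring care is the finiteness bookkeeping: Theorem~\ref{thm:generic} presupposes a path of finite length, so the lower bound must dispatch the case $l(\gamma)=\infty$ trivially, and for $\tilde\gamma$ one must first establish $l(\tilde\gamma)<\infty$ from the Lipschitz property of $\phi$ before applying \eqref{eq:path_length}. Beyond that, the argument is purely the constancy of $\+H_z$ plus a telescoping sum, and in particular uses no continuous differentiability of the paths, which is exactly the reason for working from \eqref{eq:path_length} rather than \eqref{eq:riemannian_path_length}.
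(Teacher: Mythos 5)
Your proof is correct and follows essentially the same route as the paper: constancy of $\+H_z=\+G\+G^\top$ plus Theorem~\ref{thm:generic}'s \eqref{eq:path_length}, a telescoping triangle inequality for the lower bound, and exact equality along the affine path $\tilde\eta$. If anything you are slightly more careful than the published proof on the finiteness bookkeeping, dispatching the case $l(\gamma)=\infty$ explicitly and verifying $l(\tilde\gamma)<\infty$ via the Lipschitz bound on $\phi$ before invoking \eqref{eq:path_length} on $\tilde\gamma$ — a hypothesis of Theorem~\ref{thm:generic} that the paper uses without comment.
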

\begin{proof}
Applying Theorem~\ref{thm:generic}, fix any $\epsilon>0$ and let $\mathcal{P}_{\epsilon}$ be a partition such that for any partition $\mathcal{P}=(t_0,\ldots,t_n)$ satisfying $\mathcal{P}_{\epsilon}\subseteq \mathcal{P}$,
\begin{equation}
\left|l(\gamma)-\sum_{k=1}^{n}\left\langle \eta_{t_{k}}-\eta_{t_{k-1}},\+H_{\eta_{t_{k-1}}}(\eta_{t_{k}}-\eta_{t_{k-1}})\right\rangle ^{1/2}\right|\leq\epsilon.\label{eq:l_gam_appendix1}
\end{equation}
Recalling from Section~\ref{sec:shortest_paths} that for this family of translation invariant kernels $\+H_z = \+G \+G^{\top}$ for all $z\in\mathcal{Z}$,  the triangle inequality for the $\|\cdot\|$ norm combined with \eqref{eq:l_gam_appendix1} gives
\begin{align*}
l(\gamma) &\geq -\epsilon +\sum_{k=1}^{n}\left\langle \eta_{t_{k}}-\eta_{t_{k-1}},\+H_{\eta_{t_{k-1}}}(\eta_{t_{k}}-\eta_{t_{k-1}})\right\rangle ^{1/2} .\\
& = -\epsilon + \sum_{k=1}^{n}\|\+G^{\top}(\eta_{t_{k}}-\eta_{t_{k-1}})\|  \\
& \geq  -\epsilon + \left\lVert\sum_{k=1}^{n} \+G^{\top}(\eta_{t_{k}}-\eta_{t_{k-1}}) \right\rVert  \\
& =  -\epsilon + \left\lVert  \+G^{\top}(\eta_{1}-\eta_{0}) \right\rVert  \\
& = -\epsilon + \left\lVert  \+G^{\top}[\phi^{-1}(b)-\phi^{-1}(a)] \right\rVert . \\
\end{align*}
The proof of the lower bound in the statement is then complete since $\epsilon$ was arbitrary. To complete the proof of the lemma, observe that from the definition of $\tilde{\eta}$ in the statement,
\begin{align*}
&\sum_{k=1}^{n}\left\langle \tilde{\eta}_{t_{k}}-\tilde{\eta}_{t_{k-1}},\+H_{\tilde{\eta}_{t_{k-1}}}(\tilde{\eta}_{t_{k}}-\tilde{\eta}_{t_{k-1}})\right\rangle ^{1/2}\\
&=\sum_{k=1}^{n} \|(t_k - t_{k-1})\+G^{\top}[\phi^{-1}(b)-\phi^{-1}(a)]\|\\
&=\|\+G^{\top}[\phi^{-1}(b)-\phi^{-1}(a)]\|\sum_{k=1}^{n} (t_k - t_{k-1}) \\
&= \|\+G^{\top}[\phi^{-1}(b)-\phi^{-1}(a)]\|,
\end{align*}
and the proof of the lemma is then complete, because  $\epsilon$ in \eqref{eq:l_gam_appendix1} being arbitrary implies $l(\tilde{\gamma})=\|\+G^{\top}[\phi^{-1}(b)-\phi^{-1}(a)]\|$.

\end{proof}

\begin{lem}\label{lem:inner_prod}
Consider the family of inner-product kernels of the form $f(x,y)=g(\langle x,y \rangle )$ as described in Section~\ref{sec:shortest_paths} where  $g^\prime(1)>0$ . For any $a,b\in\mathcal{M}$ and any path $\gamma\in\mathcal{M}$ with end-points $a,b$,
\begin{equation*}
l(\gamma) \geq g^{\prime}(1)^{1/2} \arccos\left\langle \phi^{-1}(a),\phi^{-1}(b)\right\rangle.     
\end{equation*}
If $\tilde{\eta}$ is a shortest circular arc in $\mathcal{Z}$ with end-points $\phi^{-1}(a),\phi^{-1}(b)$, then $\tilde{\gamma}$ defined by $\tilde{\gamma}_t\coloneqq \phi(\tilde{\eta}_t)$ satisfies $l(\tilde{\gamma})= g^{\prime}(1)^{1/2} \arccos\left\langle \phi^{-1}(a),\phi^{-1}(b)\right\rangle$.
\end{lem}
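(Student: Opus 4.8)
The plan is to follow the template of the proof of Lemma~\ref{lem:trans_inv}, working from the partition bound \eqref{eq:path_length} of Theorem~\ref{thm:generic} throughout, so that continuous differentiability of the paths is never needed. The one algebraic fact to record at the outset is that for this family $\+H_z = g'(1)\+I + g''(1)zz^\top$ with $g''(1)\ge 0$, whence $\langle w,\+H_z w\rangle = g'(1)\|w\|^2 + g''(1)\langle z,w\rangle^2$ and, in particular, $\langle w,\+H_z w\rangle^{1/2}\ge g'(1)^{1/2}\|w\|$ for all $w,z\in\mathbb{R}^d$.

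\textbf{Lower bound.} If $l(\gamma)=\infty$ there is nothing to show, so assume $l(\gamma)<\infty$; then by Lemma~\ref{lem:finite length} the map $\eta\coloneqq\phi^{-1}\circ\gamma$ is a path in $\mathcal{Z}$ with $l(\eta)<\infty$. Fix $\epsilon>0$, let $\mathcal{P}_\epsilon$ be the partition supplied by Theorem~\ref{thm:generic}, and (using $l(\eta)=\sup_{\mathcal P}\chi(\eta,\mathcal P)$ together with the fact that refining a partition can only increase $\chi(\eta,\cdot)$, by the triangle inequality) enlarge it to a partition $\mathcal{P}=(t_0,\ldots,t_n)\supseteq\mathcal{P}_\epsilon$ with $\chi(\eta,\mathcal{P})\ge l(\eta)-\epsilon$. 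Applying $\langle w,\+H_z w\rangle^{1/2}\ge g'(1)^{1/2}\|w\|$ term-by-term inside \eqref{eq:path_length} gives $l(\gamma)\ge -\epsilon + g'(1)^{1/2}\chi(\eta,\mathcal{P})\ge -\epsilon + g'(1)^{1/2}\bigl(l(\eta)-\epsilon\bigr)$. Since $\eta$ is a path in the unit sphere $\mathcal{Z}$ joining $\phi^{-1}(a)$ and $\phi^{-1}(b)$, its length is at least the great-circle distance, $l(\eta)\ge\arccos\langle\phi^{-1}(a),\phi^{-1}(b)\rangle$ (the elementary fact recalled in Section~\ref{sec:shortest_paths}; see also \cite{burago2001course}). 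Letting $\epsilon\downarrow0$ yields the stated lower bound.

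\textbf{Attainment.} The map $\tilde\gamma_t\coloneqq\phi(\tilde\eta_t)$ is continuous (composition of continuous maps) with end-points $\phi(\phi^{-1}(a))=a$ and $\phi(\phi^{-1}(b))=b$, hence a path in $\mathcal{M}$, and $\phi^{-1}(\tilde\gamma_t)=\tilde\eta_t$. Parametrise the shortest arc at constant speed, so that $\langle\tilde\eta_s,\tilde\eta_t\rangle=\cos(\theta|t-s|)$, where $\theta\coloneqq\arccos\langle\phi^{-1}(a),\phi^{-1}(b)\rangle$; then, writing $\Delta_k=t_k-t_{k-1}$, one has $\|\tilde\eta_{t_k}-\tilde\eta_{t_{k-1}}\|^2=2-2\cos(\theta\Delta_k)$ and $\langle\tilde\eta_{t_{k-1}},\tilde\eta_{t_k}-\tilde\eta_{t_{k-1}}\rangle=\cos(\theta\Delta_k)-1$. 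Substituting into the summand of \eqref{eq:path_length} and expanding in the mesh (the $g''(1)$-term contributes $O(\Delta_k^4)$ and the $g'(1)$-term contributes $g'(1)\theta^2\Delta_k^2(1+O(\Delta_k^2))$, with constants uniform in $k$ since $\theta$ is fixed), the $k$-th term equals $g'(1)^{1/2}\theta\Delta_k\bigl(1+O(\max_j\Delta_j^2)\bigr)$, so the whole sum equals $g'(1)^{1/2}\theta\bigl(1+O(\max_j\Delta_j^2)\bigr)$. Evaluating \eqref{eq:path_length} along a sequence of refinements of $\mathcal{P}_\epsilon$ whose mesh tends to $0$, then letting $\epsilon\downarrow0$, gives $l(\tilde\gamma)=g'(1)^{1/2}\theta=g'(1)^{1/2}\arccos\langle\phi^{-1}(a),\phi^{-1}(b)\rangle$. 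The degenerate case $\phi^{-1}(a)=\phi^{-1}(b)$ is trivial, both sides being $0$.

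The step I expect to be the main obstacle is the attainment part: in contrast with Lemma~\ref{lem:trans_inv}, where the minimising curve in $\mathcal{Z}$ is a straight segment and the Riemannian summand is exactly linear in the partition, here the optimal curve is a spherical arc, so the summand is only \emph{asymptotically} linear in the mesh, and one must verify that the quadratic remainders are uniform across the partition before passing to the limit. Everything else is a routine adaptation of Lemma~\ref{lem:trans_inv} together with the classical description of geodesics on the round sphere.
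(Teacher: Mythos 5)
Your lower-bound argument is essentially the paper's: apply the term-by-term inequality $\langle w,\+H_z w\rangle^{1/2}\geq g'(1)^{1/2}\|w\|$ inside \eqref{eq:path_length}, then invoke the great-circle minimality of $l(\eta)$. (The paper splits your single $\epsilon$ into a pair $\epsilon,\delta$, but that is cosmetic.) For the attainment part your route is a little different and, if anything, more self-contained: the paper observes that the optimal arc $\tilde\eta$ is smooth, invokes $\langle\tilde\eta_t,\dot{\tilde\eta}_t\rangle=0$ and essentially refers back to the differentiable computation of Section~\ref{sec:shortest_paths}, whereas you stay inside the partition framework \eqref{eq:path_length} and verify directly, via a Taylor expansion with mesh-uniform remainders, that the $g''(1)$-contribution is $O(\Delta_k^4)$ and the $g'(1)$-contribution is $g'(1)\theta^2\Delta_k^2(1+O(\Delta_k^2))$, so the partition sum converges to $g'(1)^{1/2}\theta$. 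That expansion is correct ($\|\tilde\eta_{t_k}-\tilde\eta_{t_{k-1}}\|^2=2-2\cos(\theta\Delta_k)$ and $\langle\tilde\eta_{t_{k-1}},\tilde\eta_{t_k}-\tilde\eta_{t_{k-1}}\rangle=\cos(\theta\Delta_k)-1$), and the uniformity you flag as the main danger point is indeed the thing to check; you check it correctly. One tiny omission: before invoking Theorem~\ref{thm:generic} for $\tilde\gamma$ you should note $l(\tilde\gamma)<\infty$, which follows because $\phi$ is Lipschitz (Proposition~\ref{prop:lipschitz}) and $\tilde\eta$ is a finite-length arc; this is implicit but worth a sentence.
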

\begin{proof}
As usual, let $\eta$ be the path in $\mathcal{Z}$ defined by $\eta_t \coloneqq \phi^{-1}(\gamma_t)$. Then from the definition of path-length and the triangle inequality for the $\|\cdot\|$ norm, for any $\delta >0$, there exists a partition $\mathcal{P}^{\star}_\delta$ such that for any $\mathcal{P}^{\star}=(t_0^{\star},\ldots,t_n^{\star})$ satisfying $\mathcal{P}^{\star}_\delta\subseteq \mathcal{P}^{\star}$,
\begin{equation}
\sum_{k=1}^{n}
\|\eta_{t_{k}^{\star}}-\eta_{t_{k-1}^{\star}}\|  \geq l(\eta) -  \frac{\delta}{g^\prime(1)^{1/2}}. \label{eq:inne_prod_proof}   
\end{equation}

Fix any $\epsilon>0$ and let $\mathcal{P}_{\epsilon}$ be as in Theorem~\ref{thm:generic} and then take  $\mathcal{P}=(t_0,\ldots,t_n)$ to be defined by $\mathcal{P}=\mathcal{P}_{\epsilon} \cup \mathcal{P}^{\star}_\delta$, so by construction we have simultaneously  $\mathcal{P}_{\epsilon}\subseteq \mathcal{P}$ and $\mathcal{P}^{\star}_\delta\subseteq \mathcal{P}$. Then from Theorem~\ref{thm:generic},
\begin{equation}
\left|l(\gamma)-\sum_{k=1}^{n}\left\langle \eta_{t_{k}}-\eta_{t_{k-1}},\+H_{\eta_{t_{k-1}}}(\eta_{t_{k}}-\eta_{t_{k-1}})\right\rangle ^{1/2}\right|\leq\epsilon.\label{eq:inner_prod_thm1}
\end{equation}
Combined with the fact that for this family of kernels $\+H_z = g^\prime (1) \+I + g^{\prime\prime}(1) z z^{\top}$ where $g^\prime(1)>0$ and $g^{\prime\prime}(1) \geq 0$, we obtain
\begin{align*}
l(\gamma) &\geq -\epsilon+ \sum_{k=1}^{n}\left\langle \eta_{t_{k}}-\eta_{t_{k-1}},\+H_{\eta_{t_{k-1}}}(\eta_{t_{k}}-\eta_{t_{k-1}})\right\rangle ^{1/2}\\
& =  -\epsilon + \sum_{k=1}^{n}
\left(g^{\prime}(1)\|\eta_{t_{k}}-\eta_{t_{k-1}}\|^2 + g^{\prime\prime}(1) |\langle\eta_{t_{k}}-\eta_{t_{k-1}},z\rangle|^2\right)^{1/2} \\
& \geq -\epsilon + g^{\prime}(1)^{1/2} \sum_{k=1}^{n}
\|\eta_{t_{k}}-\eta_{t_{k-1}}\|\\
&\geq -\epsilon + g^{\prime}(1)^{1/2}l(\eta) - \delta,
\end{align*}
where the penultimate inequality uses $g^{\prime\prime}(1)\geq 0$ and the final inequality holds by taking $\mathcal{P}^{\star}$ in \eqref{eq:inne_prod_proof} to be $\mathcal{P}$. Since $\epsilon$ and $\delta$ were arbitrary, we have shown $l(\gamma)\geq g^{\prime}(1)^{1/2} l(\eta)$. Recall that here $\eta$ is a path in $\mathcal{Z}=\{\|x\|\in\mathbb{R}^d:\|x\|=1\}$ with end-points $\phi^{-1}(a),\phi^{-1}(b)$. Hence $l(\eta)$ is lower-bounded by the Euclidean geodesic distance in $\mathcal{Z}$ between $\phi^{-1}(a)$ and $\phi^{-1}(b)$, which is   $\arccos\left\langle \phi^{-1}(a),\phi^{-1}(b)\right\rangle$ because $\mathcal{Z}$ is a radius-$1$ sphere centered at the origin.

With $\tilde{\eta}$ and $\tilde{\gamma}$ as defined in the statement, taking $\eta$ in \eqref{eq:inner_prod_thm1} to be $\tilde{\eta}$, refining $\mathcal{P}$ and using $\langle\tilde{\eta}_t,\dot{\tilde{\eta}}_t\rangle=0$ (see discussion in Section \ref{sec:shortest_paths}) we find $l(\tilde{\gamma})=g^{\prime}(1)^{1/2}l(\tilde{\eta})$, where by definition of $\tilde{\eta}$, $l(\tilde{\eta})=\arccos\left\langle \phi^{-1}(a),\phi^{-1}(b)\right\rangle$.

\end{proof}

\begin{lem}\label{lem:additive}
Consider the family of additive kernels described in Section~\ref{sec:shortest_paths}. For any $a,b\in\mathcal{M}$ and any path $\gamma\in\mathcal{M}$ with end-points $a,b$,
$$
l(\gamma) \geq \|\psi\circ \phi^{-1}(b) - \psi \circ \phi^{-1}(a) \|.
$$
If we define  $\tilde{\zeta}_{t}\coloneqq\psi\circ \phi^{-1}(a) + t[\psi\circ\phi^{-1}(b)-\psi\circ\phi^{-1}(a)]$ and let $\tilde{\gamma}$ be defined by $\tilde{\gamma}_t\coloneqq \phi\circ \psi^{-1}(\zeta_t)$, then $l(\tilde{\gamma})=\|\psi\circ \phi^{-1}(b) - \psi \circ \phi^{-1}(a) \|$.
\end{lem}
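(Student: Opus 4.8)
The plan is to follow the same two-step template used for Lemmas~\ref{lem:trans_inv} and~\ref{lem:inner_prod}: first establish the lower bound $l(\gamma)\ge\|\psi\circ\phi^{-1}(b)-\psi\circ\phi^{-1}(a)\|$ starting from the discretised identity~\eqref{eq:path_length} of Theorem~\ref{thm:generic} (rather than from the differentiable version~\eqref{eq:riemannian_path_length}), and then verify that $\tilde\gamma$ attains it. Throughout I would write $\eta_t\coloneqq\phi^{-1}(\gamma_t)$ and $\zeta_t\coloneqq\psi(\eta_t)$, and assume $l(\gamma)<\infty$ (otherwise the bound is vacuous), so that by Lemma~\ref{lem:finite length} $\eta$ is a path in $\mathcal{Z}$ with $l(\eta)<\infty$. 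The one structural fact to record at the outset is that for the additive family $\+H_z$ is diagonal with $\+H_z^{ii}=\alpha_i(\partial^2 f_i/\partial x^{(i)}\partial y^{(i)})|_{(z^{(i)},z^{(i)})}=\psi_i'(z^{(i)})^2$, where each $\psi_i'$ is continuous and strictly positive on the compact interval $\mathcal{Z}_i$ (being the square root of a continuous, strictly positive function, $f_i$ being $C^2$); hence each $\psi_i$ is a $C^1$ strictly increasing homeomorphism of $\mathcal{Z}_i$ onto $[0,\psi_i(c_i^+)]$.

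For the lower bound I would fix $\epsilon>0$, take $\mathcal{P}_\epsilon$ as in Theorem~\ref{thm:generic}, and for a partition $\mathcal{P}=(t_0,\dots,t_n)\supseteq\mathcal{P}_\epsilon$, writing $\Delta_k\coloneqq\eta_{t_k}-\eta_{t_{k-1}}$ and $a_k\coloneqq\langle\Delta_k,\+H_{\eta_{t_{k-1}}}\Delta_k\rangle=\sum_i\psi_i'(\eta_{t_{k-1}}^{(i)})^2(\Delta_k^{(i)})^2$, compare $a_k$ with $\|\zeta_{t_k}-\zeta_{t_{k-1}}\|^2=\sum_i(\psi_i(\eta_{t_k}^{(i)})-\psi_i(\eta_{t_{k-1}}^{(i)}))^2$. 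Applying the mean value theorem to each $\psi_i$ separately rewrites the latter as $\sum_i\psi_i'(\xi_{k,i})^2(\Delta_k^{(i)})^2$ for some $\xi_{k,i}$ between $\eta_{t_{k-1}}^{(i)}$ and $\eta_{t_k}^{(i)}$; then the elementary bound $|\sqrt u-\sqrt v|\le\sqrt{|u-v|}$, together with $|u-v|\le\max_i|\psi_i'(\eta_{t_{k-1}}^{(i)})^2-\psi_i'(\xi_{k,i})^2|\cdot\|\Delta_k\|^2$ and uniform continuity of each $z\mapsto\psi_i'(z)^2$ on $\mathcal{Z}_i$, gives — once the partition is refined enough that $\max_k\|\Delta_k\|$ is small, which is possible by uniform continuity of $\eta$ exactly as in the proof of Theorem~\ref{thm:generic} — the estimate $\sum_k|a_k^{1/2}-\|\zeta_{t_k}-\zeta_{t_{k-1}}\||\le\epsilon' l(\eta)$ for any prescribed $\epsilon'>0$. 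Feeding this into~\eqref{eq:path_length}, telescoping the triangle inequality for $\|\cdot\|$ to get $\sum_k\|\zeta_{t_k}-\zeta_{t_{k-1}}\|\ge\|\zeta_1-\zeta_0\|$, and letting $\epsilon,\epsilon'\downarrow0$ yields $l(\gamma)\ge\|\zeta_1-\zeta_0\|=\|\psi\circ\phi^{-1}(b)-\psi\circ\phi^{-1}(a)\|$.

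For the attaining path I would first note that $\psi(\mathcal{Z})=\prod_i[0,\psi_i(c_i^+)]$ is a box, hence convex, so $\tilde\zeta_t$ stays in $\psi(\mathcal{Z})$ for all $t$, $\tilde\eta_t\coloneqq\psi^{-1}(\tilde\zeta_t)$ is a well-defined Lipschitz path in $\mathcal{Z}$ (since $\psi^{-1}$ is $C^1$ on a compact box, hence Lipschitz, composed with an affine map), and $\tilde\gamma_t\coloneqq\phi\circ\psi^{-1}(\tilde\zeta_t)$ is a path in $\mathcal{M}$ with end-points $\phi\circ\psi^{-1}(\psi\circ\phi^{-1}(a))=a$ and $b$. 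The lower bound just proved, applied with $\gamma=\tilde\gamma$, gives $l(\tilde\gamma)\ge\|\zeta_1-\zeta_0\|$. For the reverse inequality I would run the same coordinatewise comparison on $\tilde\eta$ but keep its upper side, $\sum_k a_k^{1/2}\le\sum_k\|\tilde\zeta_{t_k}-\tilde\zeta_{t_{k-1}}\|+\epsilon' l(\tilde\eta)$; since $\tilde\zeta$ is affine in $t$, $\tilde\zeta_{t_k}-\tilde\zeta_{t_{k-1}}=(t_k-t_{k-1})(\zeta_1-\zeta_0)$, so $\sum_k\|\tilde\zeta_{t_k}-\tilde\zeta_{t_{k-1}}\|=\|\zeta_1-\zeta_0\|$; combining with~\eqref{eq:path_length} and sending $\epsilon,\epsilon'\downarrow0$ gives $l(\tilde\gamma)\le\|\zeta_1-\zeta_0\|$, hence equality.

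The only step requiring genuine care is the comparison of $\langle\Delta_k,\+H_{\eta_{t_{k-1}}}\Delta_k\rangle^{1/2}$ with $\|\psi(\eta_{t_k})-\psi(\eta_{t_{k-1}})\|$ in the second paragraph: the matrix $\+H_{\eta_{t_{k-1}}}$ is anchored at the left end-point of each subinterval whereas the increments of $\psi$ involve a different intermediate point in each coordinate, so one must use the mean value theorem coordinatewise and absorb the discrepancy via uniform continuity of the $\psi_i'^2$, which forces a partition-refinement argument of the type already used inside the proof of Theorem~\ref{thm:generic}. Everything else is bookkeeping identical in spirit to the proofs of Lemmas~\ref{lem:trans_inv}--\ref{lem:inner_prod}.
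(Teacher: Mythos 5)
Your argument is correct and follows the same template as the paper's proof of Lemma~\ref{lem:additive}: fix $\epsilon>0$, take $\mathcal{P}_\epsilon$ from Theorem~\ref{thm:generic}, refine it, compare the terms $\langle\Delta_k,\+H_{\eta_{t_{k-1}}}\Delta_k\rangle^{1/2}$ coordinatewise with $\|\zeta_{t_k}-\zeta_{t_{k-1}}\|$ using uniform continuity (Heine--Cantor on the compact $\mathcal{Z}_i$), telescope the triangle inequality, and then verify $\tilde\gamma$ attains the bound by exploiting the affine form of $\tilde\zeta$. The only deviation is cosmetic: you use the one-dimensional mean value theorem on each $\psi_i$ plus the inequality $|\sqrt{u}-\sqrt{v}|\le\sqrt{|u-v|}$, where the paper instead decomposes $\psi_i(\eta_{t_k}^{(i)})-\psi_i(\eta_{t_{k-1}}^{(i)})$ via its defining integral and then applies the $\ell_2$ triangle inequality to the error vector — both routes are controlled by the same uniform continuity, so the argument is essentially the same.
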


\begin{proof}
 The compactness of $\mathcal{Z}$ and the continuity of $z\mapsto \left.\dfrac{\partial^2 f_i}{\partial x^{(i)} y^{(i)}}\right\rvert^{1/2}_{(z,z)}$ for each $i=1,\ldots,d$ implies the uniform-continuity of the latter by the Heine-Cantor theorem. Hence for any $\delta_1>0$, there exists $\delta_2>0$ such that for all  $i=1,\ldots,d$ and $z_1^{(i)},z_2 ^{(i)}\in\mathcal{Z}_i$,
\begin{equation}
|z_1^{(i)}-z_2^{(i)}|\leq \delta_2 \quad \Rightarrow \quad \alpha_i ^{1/2} \left\lvert \left.\dfrac{\partial^2 f_i}{\partial x^{(i)} y^{(i)}}\right\rvert^{1/2}_{(z_1^{(i)},z_1^{(i)})} - \left.\dfrac{\partial^2 f_i}{\partial x^{(i)} y^{(i)}}\right\rvert^{1/2}_{(z_2^{(i)},z_2^{(i)})}\right\rvert\leq \frac{\delta_1}{l(\eta)}.\label{eq:additive_cont}
\end{equation}
Fix any $\epsilon>0$ and let $\mathcal{P}_{\epsilon}$ be as in Theorem~\ref{thm:generic}. We now claim there exists a partition $\mathcal{P}=(t_0,\ldots, t_n)$ satisfying simultaneously $\mathcal{P}_{\epsilon}\subseteq \mathcal{P}$ and 
\begin{equation}
\max_{i=1,\ldots,d}\,\max_{k=1,\ldots,n}|\eta_{t_k}^{(i)} - \eta_{t_{k-1}}^{(i)}|\leq \delta_2.\label{eq:additive_P}
\end{equation}
To see that such a partition exists, note that with $\eta_t\coloneqq \phi^{-1}(\gamma_t)$, $t\mapsto \eta_t$ is continuous on the compact set $[0,1]$ hence uniformly continuous by the Heine-Cantor theorem. Thus for any  $s,t\in[0,1]$ sufficiently close to each other, $\|\eta_s - \eta_t\|$ can be made less than or equal to $\delta_2$, which implies $\max_{i=1,\ldots,d}|\eta_s^{(i)}-\eta_t^{(i)}|\leq \delta_2$. Thus starting from $\mathcal{P}_{\epsilon}$, if we subsequently add points to this partition until $\max_{k=1,\ldots,n}|t_{k}-t_{k-1}|$ is sufficiently small then we will arrive at a partition $\mathcal{P}$ with the required properties, as claimed.

Now with this partition $\mathcal{P}$ in hand, fix any $i=1,\ldots,d$ and $k=1,\ldots, n$. We then have
\begin{align}
\zeta_{t_k}^{(i)} - \zeta_{t_{k-1}}^{(i)} &=\psi_i(\eta_{t_k}^{(i)}) - \psi_i(\eta_{t_{k-1}}^{(i)})\label{eq:zeta_decomp}\\
&=\alpha_i^{1/2}\int_{\eta_{t_{k-1}}^{(i)}}^{\eta_{t_k}^{(i)}}  \left.\dfrac{\partial^2 f_i}{\partial x^{(i)} y^{(i)}}\right\rvert^{1/2}_{(\xi,\xi)} \mathrm{d}\xi\nonumber \\
&= \alpha_i^{1/2} \int_{\eta_{t_{k-1}}^{(i)}}^{\eta_{t_k}^{(i)}}\left[ \left.\dfrac{\partial^2 f_i}{\partial x^{(i)} y^{(i)}}\right\rvert^{1/2}_{(\xi,\xi)} - \left.\dfrac{\partial^2 f_i}{\partial x^{(i)} y^{(i)}}\right\rvert^{1/2}_{(\eta_{t_{k-1}}^{(i)},\eta_{t_{k-1}}^{(i)})}\right] \mathrm{d}\xi \nonumber\\
&\quad + \alpha_i^{1/2} \left.\dfrac{\partial^2 f_i}{\partial x^{(i)} y^{(i)}}\right\rvert^{1/2}_{(\eta_{t_{k-1}}^{(i)},\eta_{t_{k-1}}^{(i)})} (\eta_{t_k}^{(i)} - \eta_{t_{k-1}}^{(i)})\nonumber\\
&\leq  \left[\frac{\delta_1}{l(\eta)}+ \alpha_i^{1/2} \left.\dfrac{\partial^2 f_i}{\partial x^{(i)} y^{(i)}}\right\rvert^{1/2}_{(\eta_{t_{k-1}}^{(i)},\eta_{t_{k-1}}^{(i)})} \right] (\eta_{t_k}^{(i)} - \eta_{t_{k-1}}^{(i)}),\label{eq:zeta_bound}
\end{align}
where the upper bound is due to \eqref{eq:additive_cont}-\eqref{eq:additive_P}. Squaring both sides, summing over $i$ and then applying the triangle inequality gives
$$
\|\+H^{(1/2)}_{\eta_{t_{k-1}}}(\eta_{t_k}-\eta_{t_{k-1}})\|\geq \|\zeta_{t_k} - \zeta_{t_{k-1}}\|- \frac{\delta_1}{l(\eta)}\|\eta_{t_k}-\eta_{t_{k-1}}\|,
$$
where $\+H^{(1/2)}_{\eta_{t_{k-1}}}$ is the diagonal matrix with $i$th diagonal element equal to $\alpha_i^{1/2} \left.\dfrac{\partial^2 f_i}{\partial x^{(i)} y^{(i)}}\right\rvert^{1/2}_{(\eta_{t_{k-1}}^{(i)},\eta_{t_{k-1}}^{(i)})}$.

Summing over $k=1,\ldots,n$ and using the definition of $l(\eta)$ gives
$$
\sum_{k=1}^n  \|\+H^{(1/2)}_{\eta_{t_{k-1}}}(\eta_{t_k}-\eta_{t_{k-1}})\| \geq -\delta_1 +\sum_{k=1}^n  \|\zeta_{t_k}-\zeta_{t_{k-1}}\|.
$$
Combined with the relationship \eqref{eq:path_length} from Theorem~\ref{thm:generic} and yet another application of the triangle inequality we thus find
\begin{align*}
l(\gamma) & \geq - \epsilon -\delta_1 + \sum_{k=1}^n  \|\zeta_{t_k}-\zeta_{t_{k-1}}\|  \\
& \geq - \epsilon -\delta_1 + \| \zeta_1 - \zeta_0 \| \\
& = - \epsilon -\delta_1 + \| \psi\circ \phi^{-1}(b) - \psi \circ \phi^{-1}(a) \|. 
\end{align*}
The proof of the lower bound in the statement is complete since $\epsilon$ and $\delta_1$ are arbitrary. In order to complete the proof of the lemma, observe that \eqref{eq:additive_cont} combined with the same decomposition in \eqref{eq:zeta_decomp}-\eqref{eq:zeta_bound} yields an accompanying lower bound on $\zeta_{t_k}^{(i)} - \zeta_{t_{k-1}}^{(i)}$, from which it follows that 
$$
\left\lvert \sum_{k=1}^n  \|\+H^{(1/2)}_{\eta_{t_{k-1}}}(\eta_{t_k}-\eta_{t_{k-1}})\|  -  \sum_{k=1}^n  \|\zeta_{t_k}-\zeta_{t_{k-1}}\| \right\rvert \leq \delta_1.
$$
Substituting $\tilde{\zeta}$ as defined in the statement of the lemma in place of $\zeta$, and replacing $\eta$ by $\tilde{\eta}$ defined by $\tilde{\eta}_t\coloneqq \psi^{-1}(\tilde{\zeta}_t)$, then using the fact that $\delta_1$ is arbitrary we find via \eqref{eq:path_length} in  Theorem~\ref{thm:generic} that  $l(\tilde{\gamma})=\| \psi\circ \phi^{-1}(b) - \psi \circ \phi^{-1}(a) \|$.

\end{proof}

\section{Supplementary experiments} \label{appendix:experiments}
Codes for the experiments reported in the main part of the paper and those in this appendix are available at \url{https://github.com/anniegray52/graphs}. 

The R packages used in this paper are (with license details therein, see github repository for code):
data.table, RSpectra, igraph, plotly, Matrix, MASS, irlba, ggplot2, ggrepel, umap, Rtsne, lpSolve, spatstat, ggsci, cccd, R.utils, tidyverse, gridExtra, rgl, plot3D.
The Python modules used in this paper are (with license details therein, see github repository for code): networkx, pandas, nodevectors, random.

Data on the airports (e.g. their continent) was downloaded from \url{https://ourairports.com/data/}.

\subsection{Supplementary figures for simulated data example} \label{appendix:simulated}

\begin{figure}[h]
\centering
  \centering
  \includegraphics[width=\linewidth]{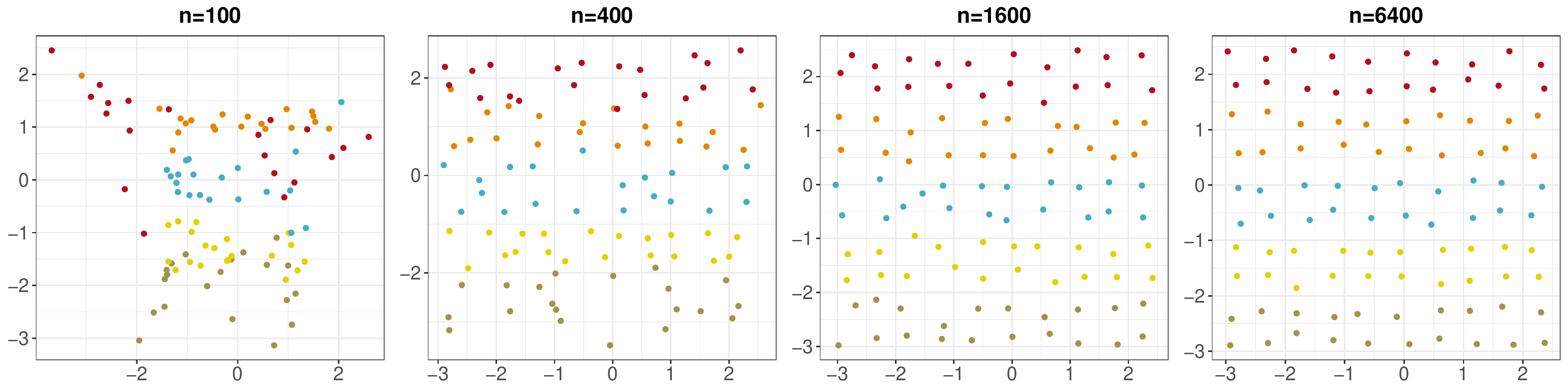}
  \caption{Latent position recovery in the sparse regime by spectral embedding followed by Isomap for increasing $n$ and increasing sparsity. To aid visualisation, all plots display a subset of $100$ estimated positions corresponding to true positions on a sub-grid which is common across $n$. Estimated positions are coloured according to their true $y$-coordinate.}
\label{fig:sparse_subset}
\end{figure}

\begin{figure}[h]
  \centering
  \includegraphics[width=\linewidth]{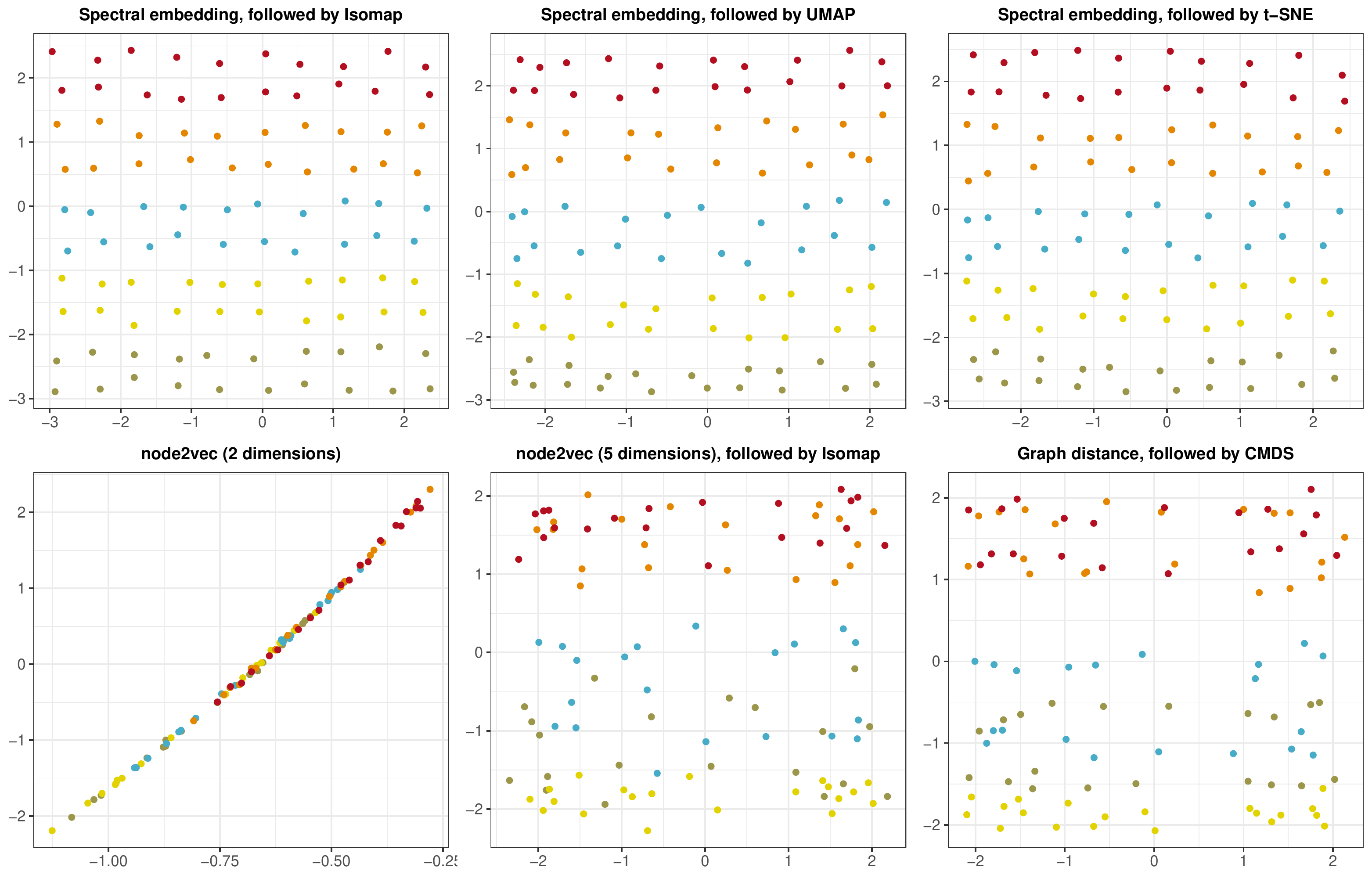}
  \caption{Latent position recovery in a sparse regime using different combinations of techniques with $n=6400$. The first row contains spectral embedding followed by different nonlinear dimension reduction techniques and the second row contains node2vec, node2vec followed by Isomap, and we have attempted recovery using graph distances \citep{10.1214/21-EJS1801}. To aid visualisation, all plots will display a subset of $100$  on a fixed sub-grid, coloured according to their true location.}
  \label{fig:method_comparison}
\end{figure}

\begin{figure}[h]
  \centering
  \includegraphics[width=\linewidth]{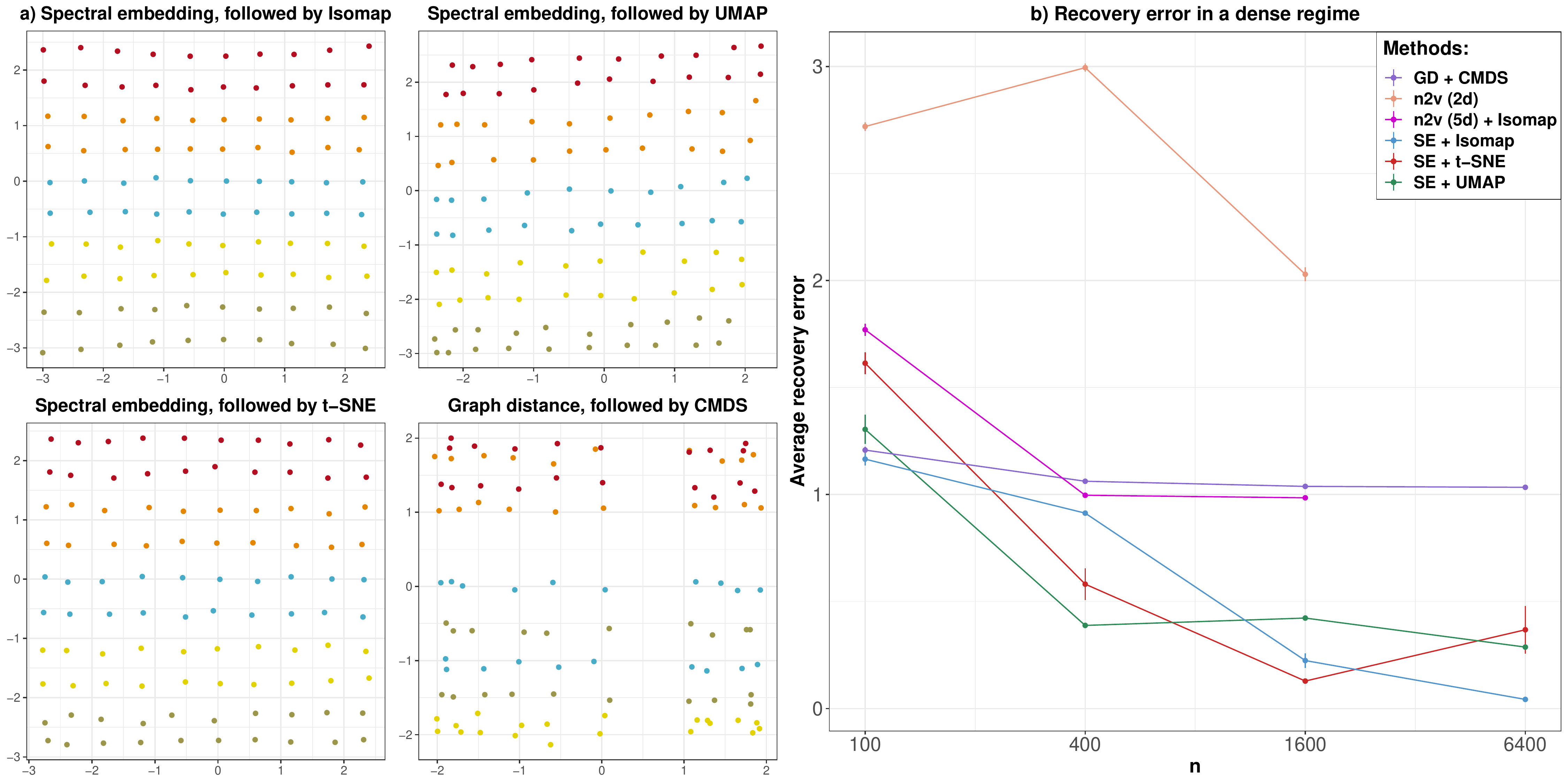}
  \caption{Latent position recovery in a dense regime. a) Recovery using different combinations of technique with $n=6400$. b) Average recovery error.  The recovery error is an average over nodes and over 100 simulations, with two standard errors shown as vertical bars. Computational issues precluded showing node2vec for $n=6400$. (SE = spectral embedding, GD = graph distance and n2v = node2vec.)}
  \label{fig:method_comparison_dense}
\end{figure}

\subsection{Supplementary figures and discussion for flight network example} \label{appendix:flight}

A gap appears to form between North and South America which, as a first hypothesis, we put down to the well-publicised suspension of all immigration into the US in April 2020. To measure this gap we use the Earth Mover's distance between the point clouds belonging to the two continents (with thanks to Dr. Louis Gammelgaard Jensen for the code), \emph{using the approximate geodesic distances} of the $\epsilon$-neighbourhood graph (i.e., before dimension reduction). While this distance does explode in April 2020, as shown in Figure~\ref{fig:Wassertein_NA-SA} (Appendix), we found the proposed explanation to be incomplete, because Australia and New Zealand imposed similar measures at the time, whereas the distance between Oceania and the rest of the world, computed in the same way, does not explode. Revisiting the facts \cite{borders}, while the countries mentioned above closed their borders to nonresidents, the continents of South America and Africa arguably imposed more severe measures, with large numbers of countries fully suspending flights. This explanation seems more likely, as on re-inspection we find a large jump in Earth mover's distance, over April, between \emph{both} of those continents and the rest of the world, as shown in Figure~\ref{fig:Wasserstein_all}. 


\begin{figure}
  \centering
  \includegraphics[width=\textwidth]{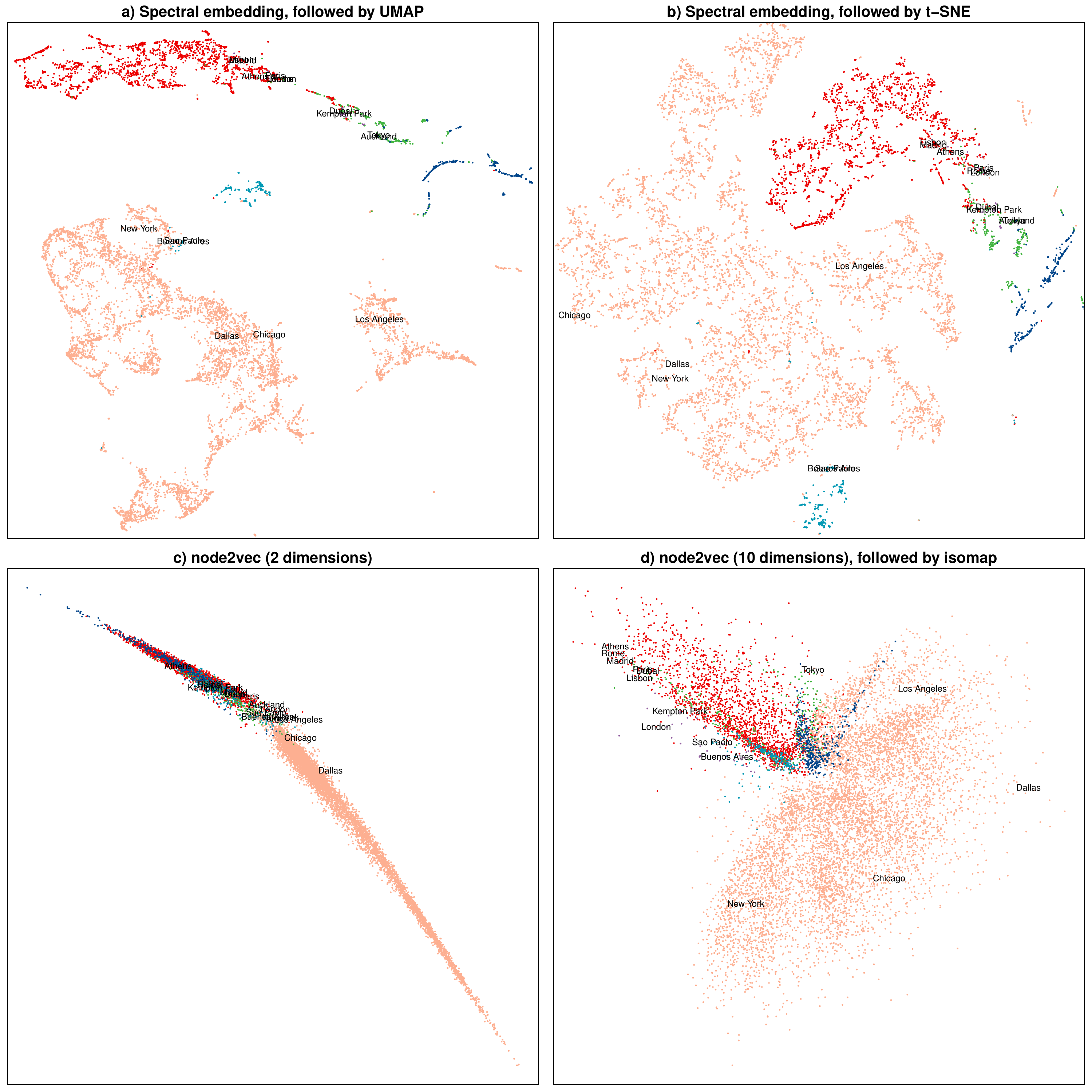}
  \caption{Visualisation of the global flight network over January 2020 by alternative combinations of techniques. The colours indicate continents (NA = North America, EU = Europe, AS = Asia, AF = Africa, OC = Oceania) and a spread of cities with high-traffic airports are labelled (to reduce clutter, only a selection of the cities in Figure~\ref{fig:Jan_isomap} are shown).}\label{fig:Jan_comparison_etc}
\end{figure}

\begin{figure}
    \centering
    \includegraphics[width=.5\textwidth]{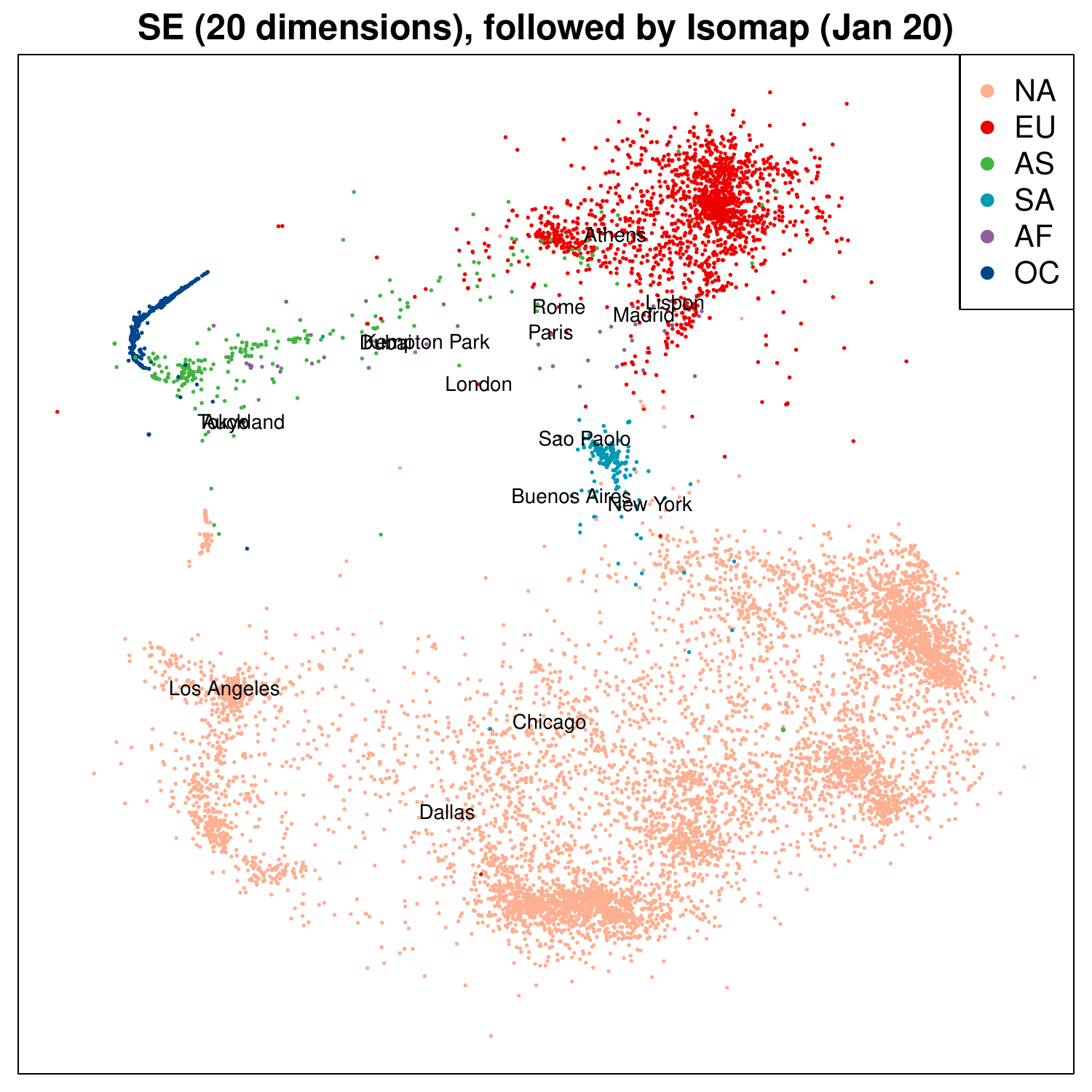}
    \caption{Visualisation of the global flight network. Spectral embedding into 20 dimensions followed by Isomap. The colours indicate continents (NA = North America, EU = Europe, AS = Asia, AF = Africa, OC = Oceania) and a spread of cities with high-traffic airports are labelled.}
    \label{fig:SE20}
\end{figure}

\begin{figure}
    \centering
    \includegraphics[width=.5\textwidth]{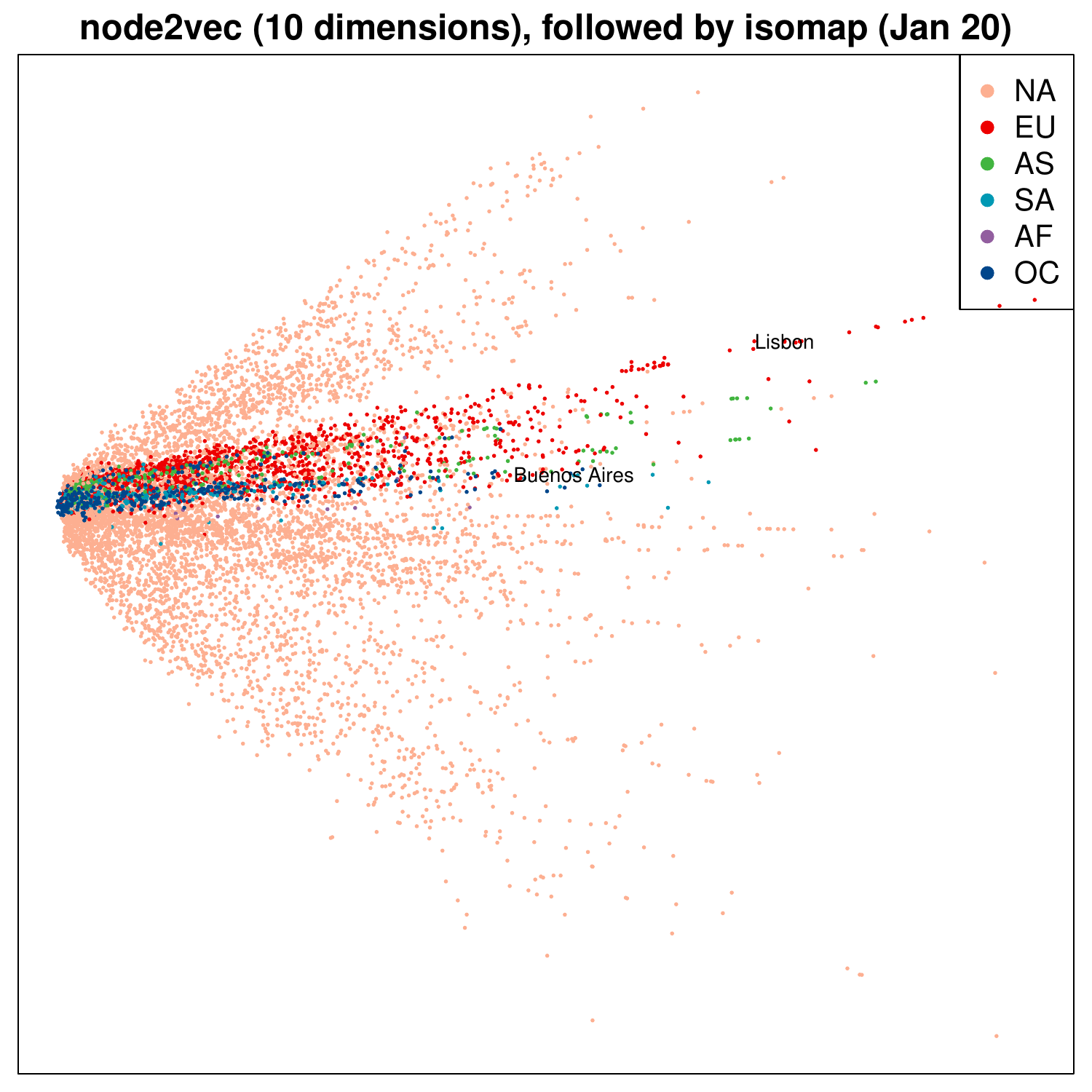}
    \caption{Visualisation of the global flight network. Node2vec into 20 dimensions followed by Isomap. The colours indicate continents (NA = North America, EU = Europe, AS = Asia, AF = Africa, OC = Oceania) and a spread of cities with high-traffic airports are labelled.}
    \label{fig:n2v20}
\end{figure}

\begin{figure}
  \centering
  \includegraphics[width=\textwidth]{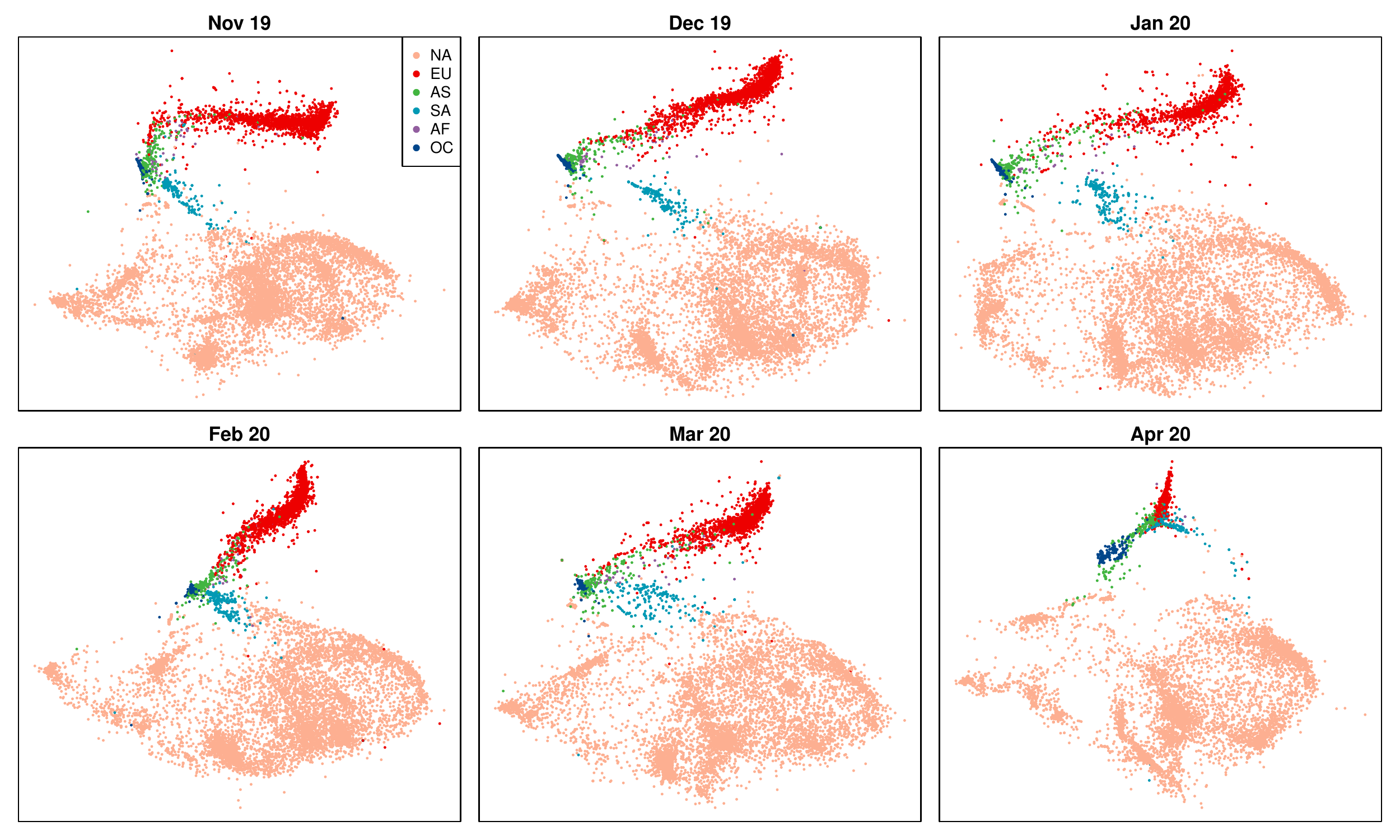}
  \caption{Visualisation of the global flight network over time: nonlinear dimension reduction of each spectral embedding using Isomap. The colours indicate continents (NA = North America, EU = Europe, AS = Asia, AF = Africa, OC = Oceania) and a spread of cities with high-traffic airports are labelled. An important structural change is observed in April 2020.}\label{fig:sixmonth}
\end{figure}

\begin{figure}
  \centering
  \includegraphics[width=.5\textwidth]{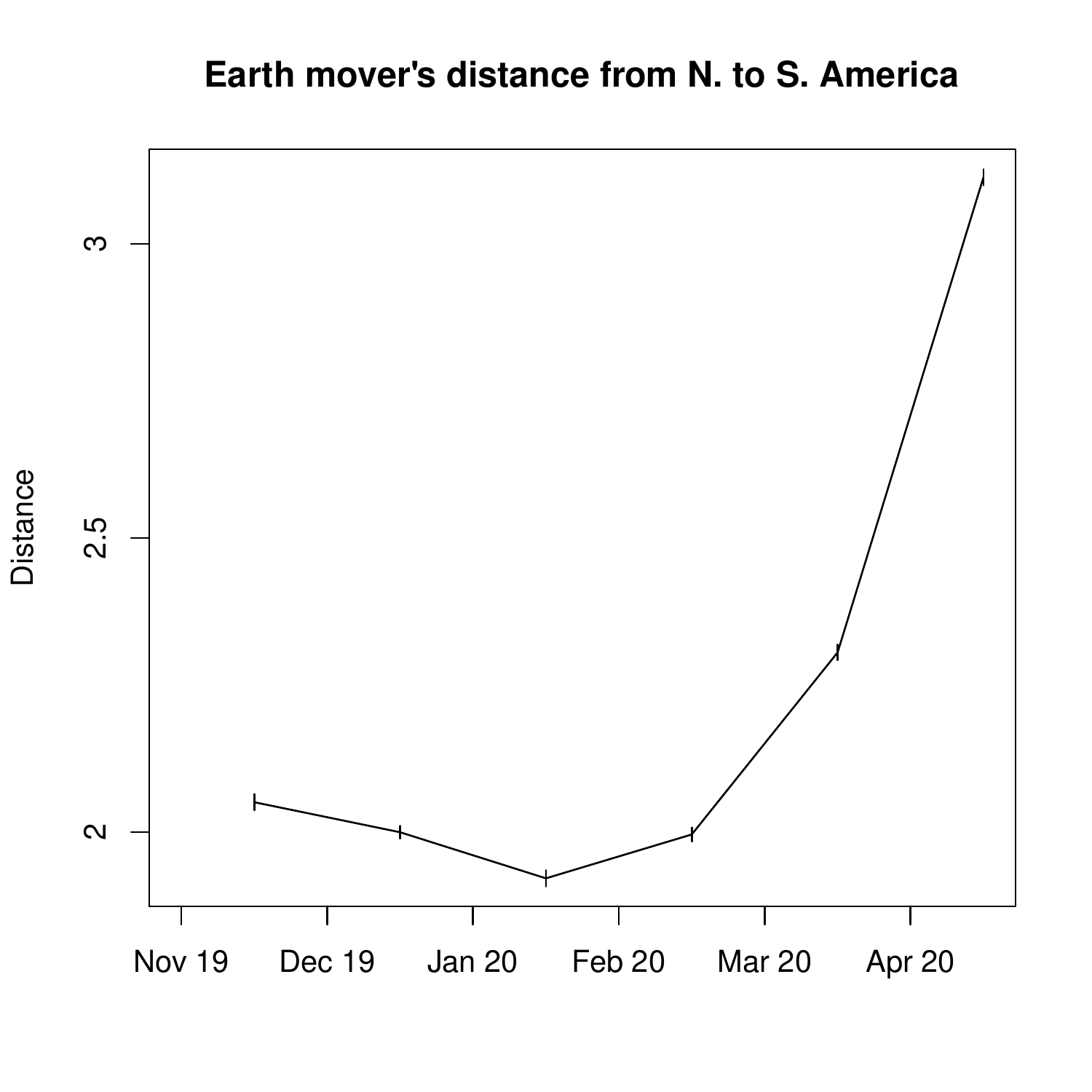}
  \caption{Earth mover's distance between North and South America, as inferred from the approximate geodesic distances given by the $\epsilon$-neighbourhood graph. In each of 100 Monte Carlo iterations, the Earth Mover's distance is computed based on 100 points randomly selected from each continent. We plot the average, with 2 standard errors in either direction indicated by the vertical bars.}\label{fig:Wassertein_NA-SA}
\end{figure}

\begin{figure}
  \centering
  \includegraphics[width=.5\textwidth]{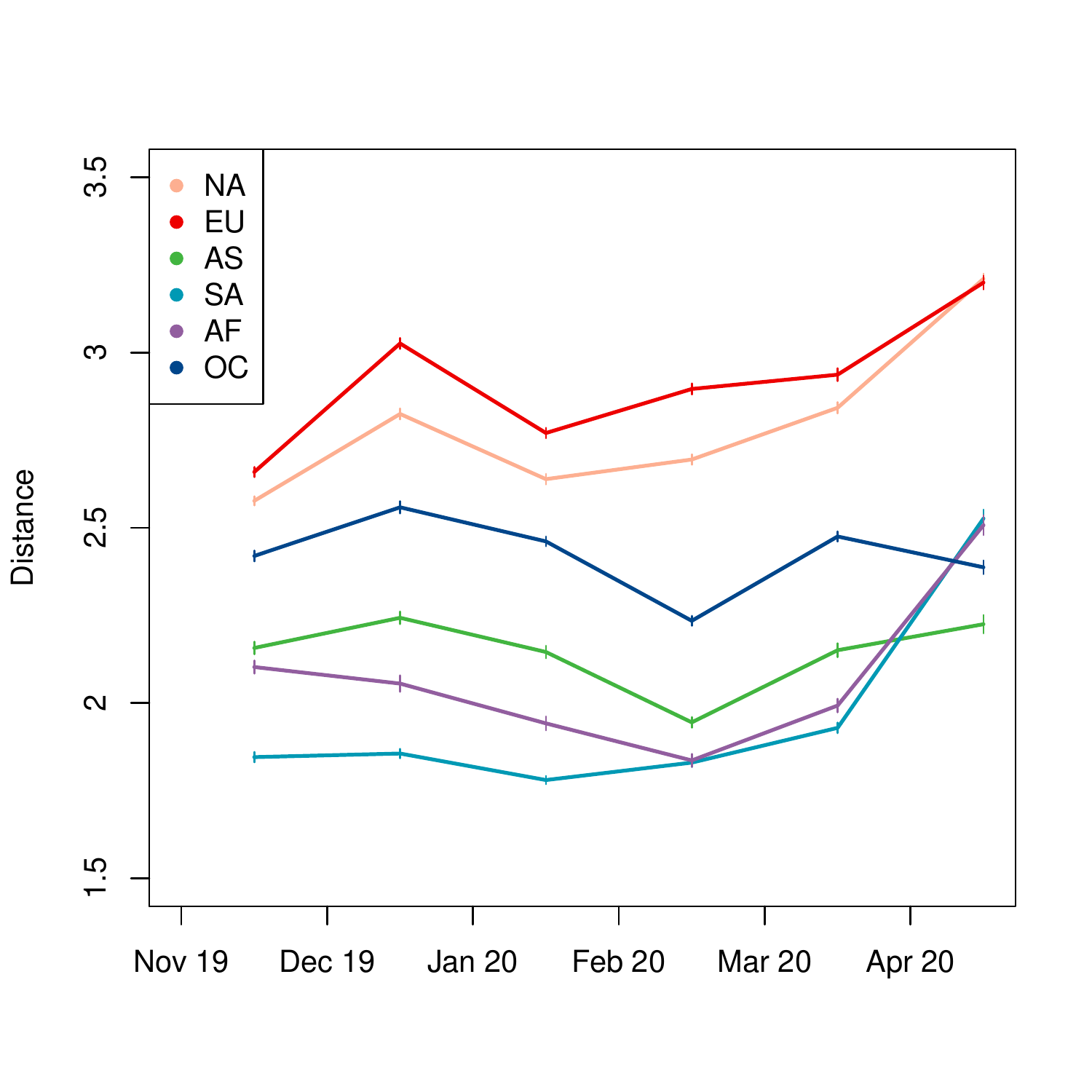}
  \caption{Earth mover's distance from each continent to the rest of the world, as inferred from the approximate geodesic distances given by the $\epsilon$-neighbourhood graph. In each of 100 Monte Carlo iterations, the Earth mover's distance is computed based on 100 points (airports) randomly selected from the continent of interest, and 100 points (airports) from the rest of the world. We plot the average, for each continent, with 2 standard errors in either direction indicated by the vertical bars.}\label{fig:Wasserstein_all}
\end{figure}

\subsection{Supplementary figures and discussion for temperature correlation example} \label{appendix:temperature}
Further to the discussion in Section~\ref{sec:experiments}, Figure \ref{fig:temp_d1_long} illustrates two important findings in the case $p=2$ and $d=1$, under the setup for this temperature correlation example described in section \ref{sec:experiments}: firstly that there is no clear relationship between longitude and position along the manifold, and secondly a non-monotone relationship between longitude and estimated latent position. Both these observations are in marked contrast with the results in Figure \ref{fig:temperature} for latitude.

We then consider the case $p=3$ and $d=2$. Figure \ref{fig:temp_p3_3d} shows the spectral embedding. Again the point-cloud is concentrated around a curved manifold. In Figure \ref{fig:temp_p3_zhat} we plot latitude and longitude against each of the $d=2$ coordinates of the estimated latent positions. As in the $p=2,d=1$ case, we find a clear monotone relationship between latitude and the first component of estimated position but no such relationship between longitude and the second component.
\begin{figure}
  \centering
  \includegraphics[width=\textwidth]{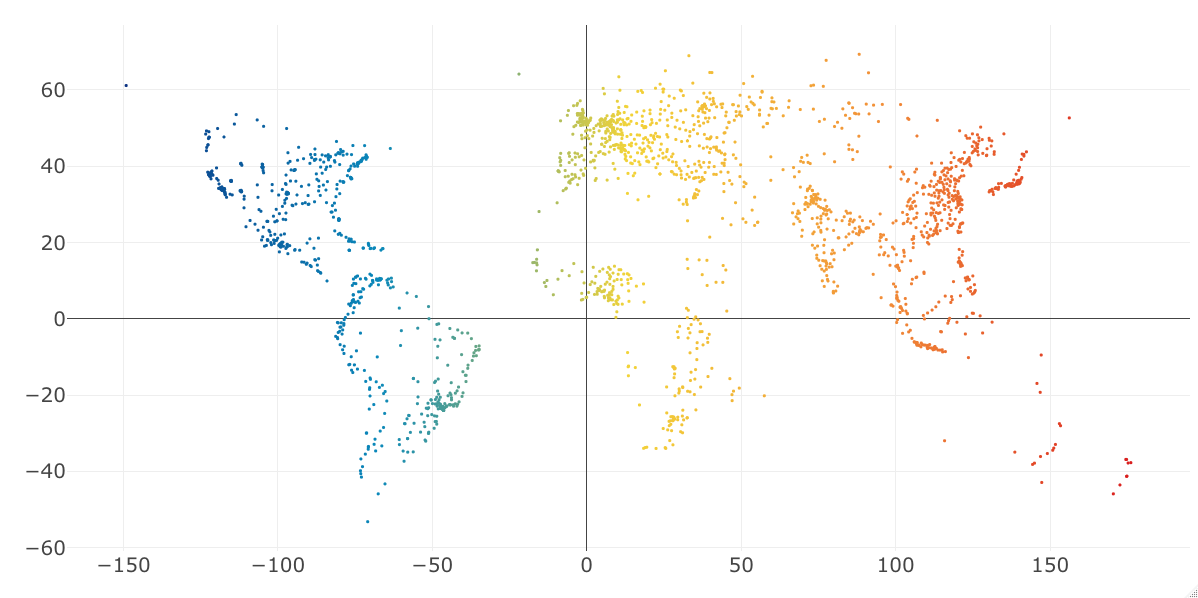}
    \includegraphics[width=\textwidth]{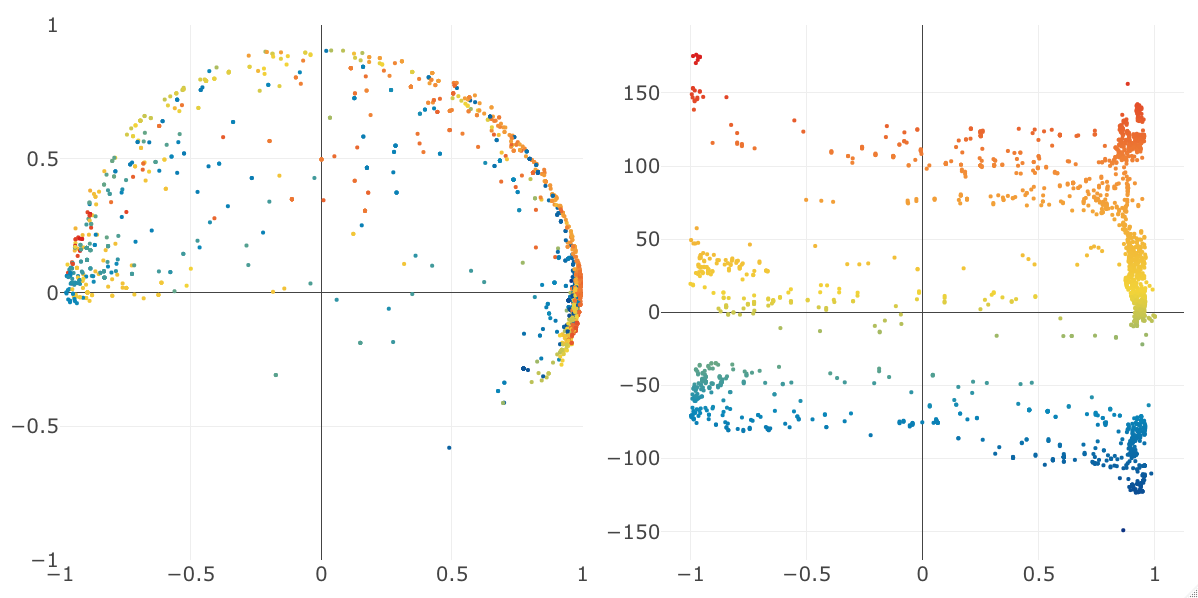}
    \caption{Temperature correlation example. Top: city locations. Bottom left: spectral embedding with $p=2$. Bottom right: true longitude (vertical axis) vs. estimated latent position (horizontal axis) with $d=1$. In all plots, points are coloured by longitudes of the corresponding cities.}\label{fig:temp_d1_long}
\end{figure}

\begin{figure}
  \centering
  \includegraphics[width=0.49\textwidth]{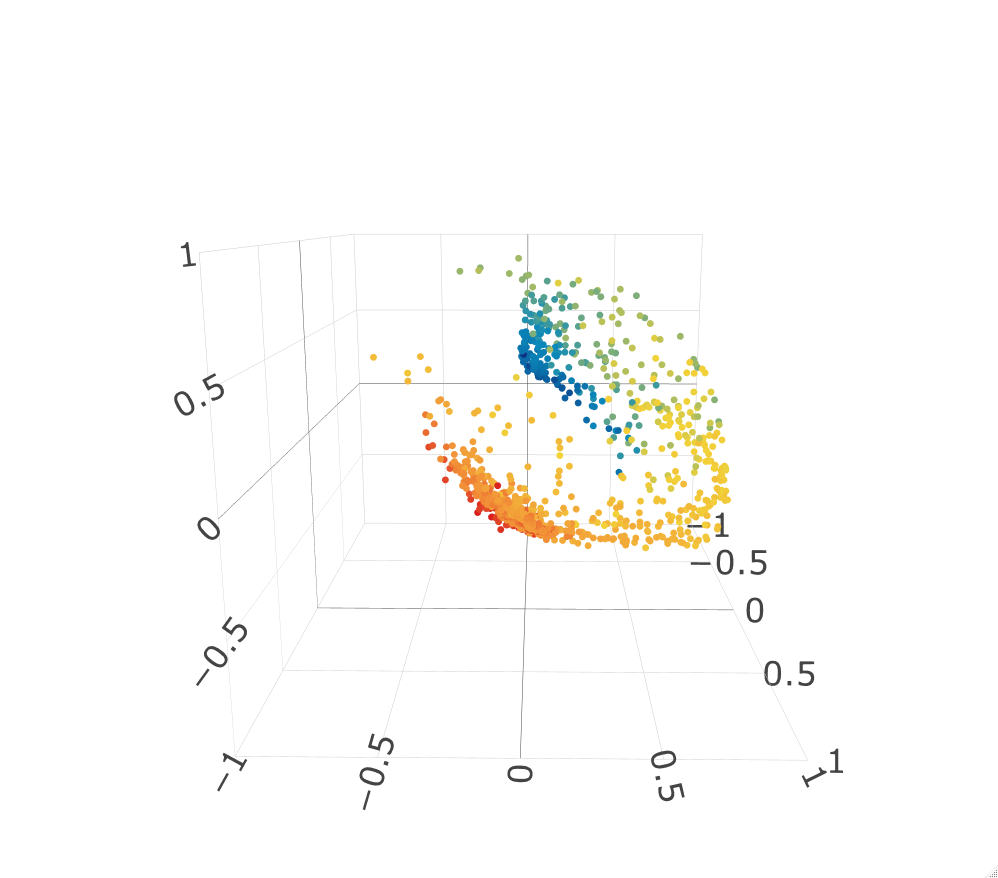}
  \includegraphics[width=0.49\textwidth]{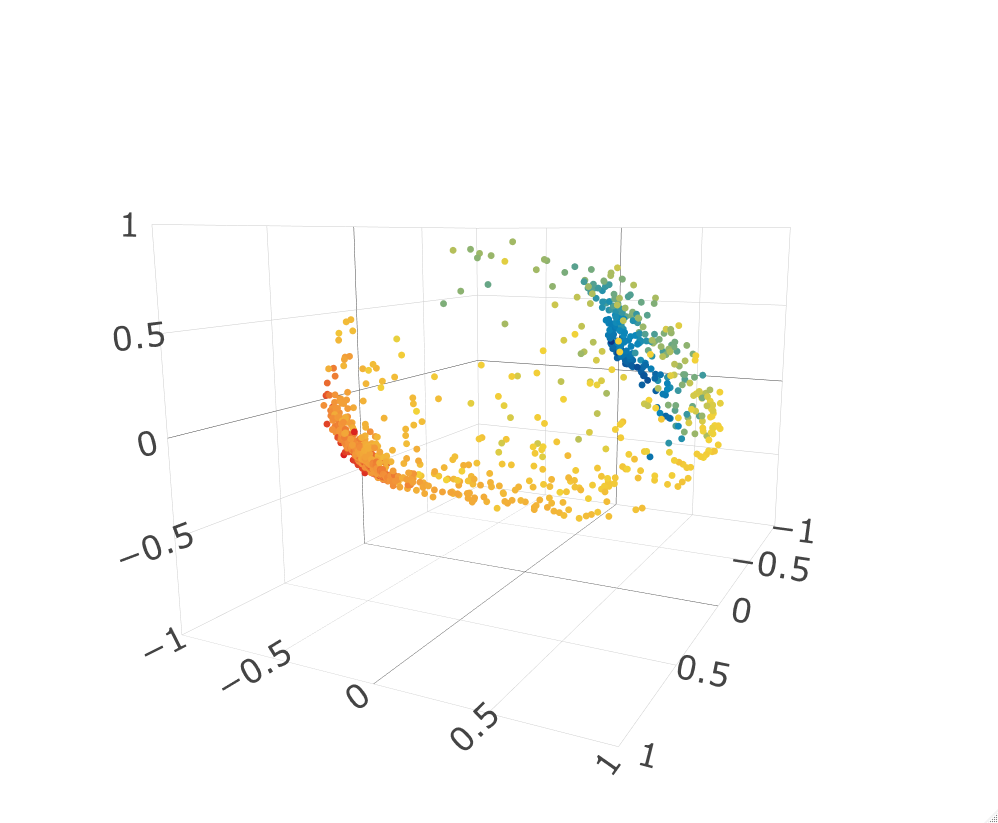}
  \includegraphics[width=0.49\textwidth]{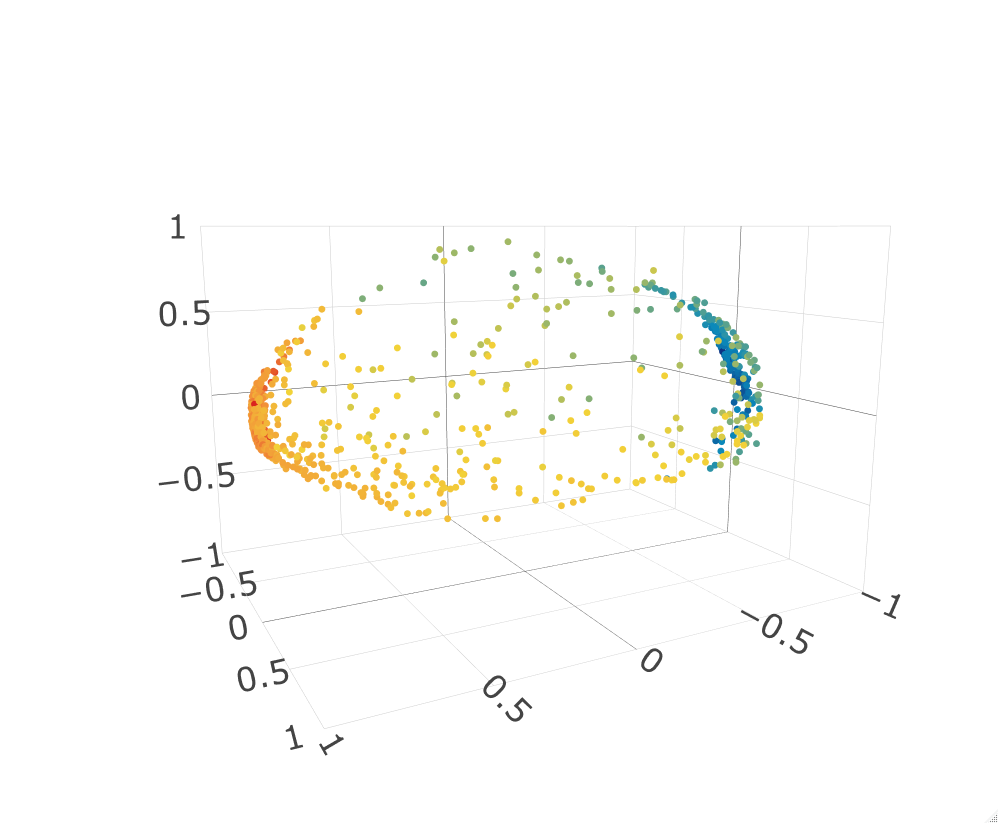}
  \includegraphics[width=0.49\textwidth]{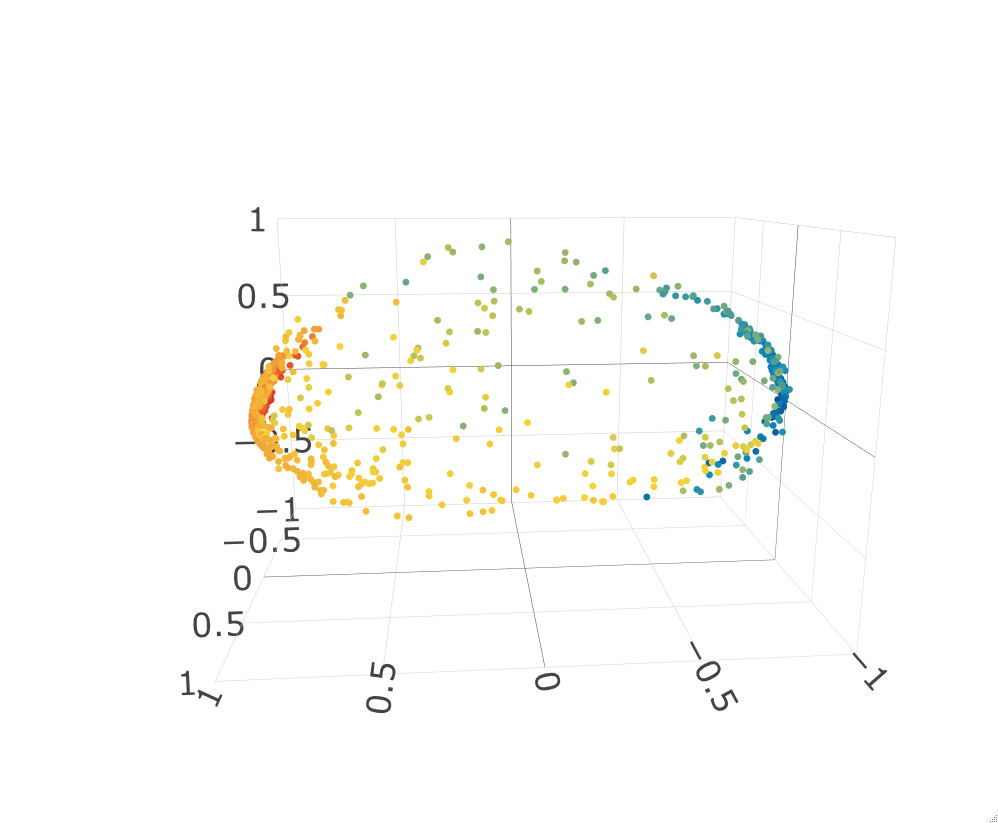}
  \caption{Temperature correlation example. Four views of the spectral embedding with $p=3$. In all plots, points are coloured by latitudes of the corresponding cities.}\label{fig:temp_p3_3d}
\end{figure}

\begin{figure}
  \centering
  \includegraphics[width=\textwidth]{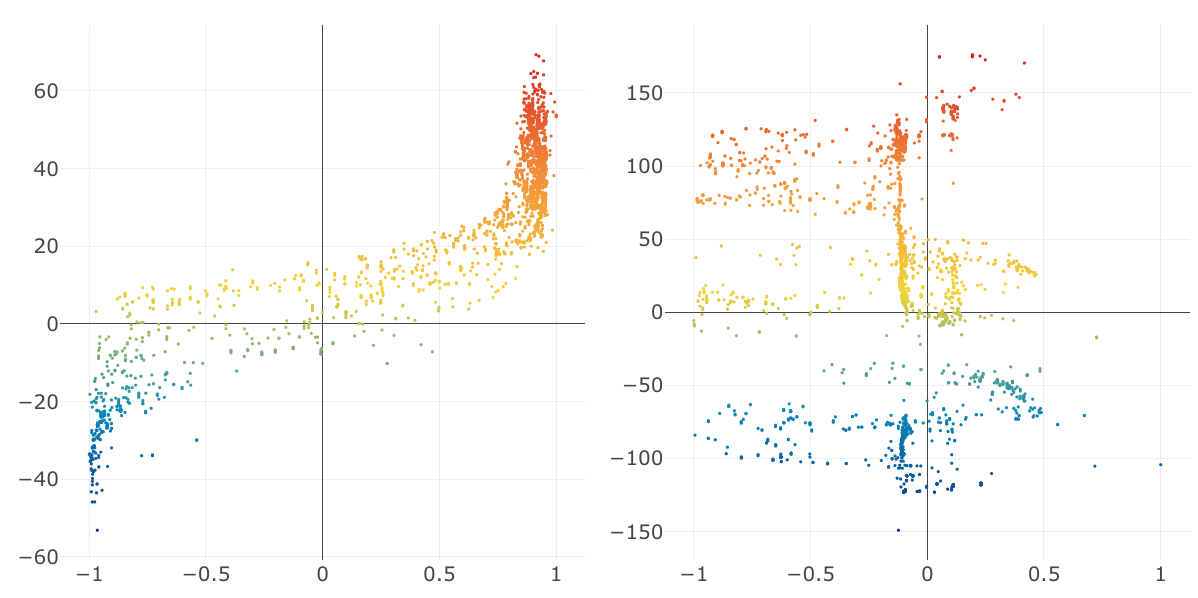}
  \caption{Temperature correlation example. Results for spectral embedding with $p=3$ followed by Isomap with $d=2$. Left: latitude (vertical axis) vs. estimated latent coordinate $\hat{Z}_i^{(1)}$ (horizontal axis) coloured by latitude. Right: longitude (vertical axis) vs. estimated latent coordinate $\hat{Z}_i^{(2)}$ (horizontal axis) coloured by longitude. }  \label{fig:temp_p3_zhat}
\end{figure}

\end{document}